\newcommand{\revised}[1]{\textcolor{black}{#1}}
\newcommand{\cmark}{\ding{51}}%
\newcommand{\xmark}{\ding{55}}%
\newcommand{\spaceY}[0]{\mathcal{Y}}
\newcommand{\vocab}[0]{\mathcal{V}}
\newcommand{\seqlen}[1]{{\lvert #1 \rvert}}
\def\va{{\bm{a}}}
\def\vb{{\bm{b}}}
\def\ve{{\bm{e}}}
\def\vf{{\bm{f}}}
\def\vh{{\bm{h}}}
\def\vk{{\bm{k}}}
\def\vl{{\bm{l}}}
\def\vq{{\bm{q}}}
\def\vs{{\bm{s}}}
\def\vt{{\bm{t}}}
\def\vu{{\bm{u}}}
\def\vvv{{\bm{v}}}
\def\vw{{\bm{w}}}
\def\vx{{\bm{x}}}
\def\vlambda{{\bm{\lambda}}}
\def\mA{{\bm{A}}}
\def\mE{{\bm{E}}}
\def\mH{{\bm{H}}}
\def\mI{{\bm{I}}}
\def\mK{{\bm{K}}}
\def\mP{{\bm{P}}}
\def\mQ{{\bm{Q}}}
\def\mS{{\bm{S}}}
\def\mV{{\bm{V}}}
\def\mW{{\bm{W}}}
\newenvironment{hproof}{%
\proof}{\endproof}
\theoremstyle{plain}
\newtheorem{theorem}{Theorem}[section]
\newtheorem{proposition}[theorem]{Proposition}
\newtheorem{corollary}[theorem]{Corollary}
\theoremstyle{definition}
\theoremstyle{remark}
\crefname{condition}{condition}{conditions}
\Crefname{condition}{Condition}{Conditions}
\crefname{example}{example}{example}
\Crefname{example}{Example}{Example}
\Crefname{section}{Section}{Section} 
\crefname{section}{Sec.}{Sec.} 
\crefname{figure}{Fig.}{Figs.} 
\Crefname{figure}{Figure}{Figures} 
\Crefname{table}{Table}{Tables} 
\crefname{table}{Tab.}{Tab.} 
\Crefname{equation}{Equation}{Equations} 
\crefname{equation}{Eqn.}{Eqns.} 
\DeclareMathOperator*{\argmin}{arg\,min}
\title{Attention Mechanisms Don’t Learn Additive Models:
Rethinking Feature Importance for Transformers}
\author{\name Tobias Leemann
  \addr University of Tübingen, Technical University of Munich
  \email tobias.leemann@uni-tuebingen.de 
  \AND
  \name Alina Fastowski 
  \addr Technical University of Munich
  \email alina.fastowski@tum.de 
  \AND
  \name Felix Pfeiffer
  \addr University of Tübingen
  \email  felix.pfeiffer@protonmail.com 
  \AND
  \name Gjergji Kasneci 
  \addr Technical University of Munich
  \email gjergji.kasneci@tum.de~~}
\newif\ifarxiv
\begin{document}

\maketitle

\newcommand{\SALO}[0]{SLALOM\xspace}
\begin{abstract}
We address the critical challenge of applying feature attribution methods to the transformer architecture, which dominates current applications in natural language processing and beyond. Traditional attribution methods to explainable AI (XAI) explicitly or implicitly rely on linear or additive surrogate models to quantify the impact of input features on a model's output. In this work, we formally prove an alarming incompatibility: transformers are structurally incapable of representing linear or additive surrogate models used for feature attribution, undermining the grounding of these conventional explanation methodologies. To address this discrepancy, we introduce the Softmax-Linked Additive Log Odds Model (\SALO), a novel surrogate model specifically designed to align with the transformer framework. \SALO demonstrates the capacity to deliver a range of insightful explanations with both synthetic and real-world datasets. We highlight \SALO's unique efficiency-quality curve by showing that \SALO can produce explanations with substantially higher fidelity than competing surrogate models or provide explanations of comparable quality at a fraction of their computational costs. \revised{We release code for SLALOM as an open-source project online at \url{https://github.com/tleemann/slalom_explanations}.}

\end{abstract}

\section{Introduction}

The transformer architecture \citep{vaswani2017attention} has been established as the status quo in modern natural language processing \citep{devlin2018bert, radford2018improving, radford2019language, touvron2023llama}. 
However, the current and foreseeable adoption of large language models (LLMs) in critical domains such as the judicial system \citep{chalkidis2019neural} and the medical domain \citep{jeblick2023chatgpt} comes with an increased need for transparency and interpretability.
Methods to enhance the interpretability of an artificial intelligence (AI) system are developed in the research area of Explainable AI (XAI, \citealp{adadi2018peeking, gilpin2018explaining, molnar2019, burkart2021survey}). A recent meta-study \citep{rong2023towards} shows that XAI has the potential to increase users' understanding of AI systems and their trust therein. 
Local feature attribution methods that quantify the contribution of each input to a decision outcome are among the most popular explanation methods, and a variety of approaches have been suggested for the task of computing such attributions \citep{kasneci2016licon, ribeiro2016should, sundararajan2017axiomatic, lundberg2017unified, covert2021explaining, modarressi2022globenc}.
\begin{figure}[t]
\begin{subfigure}[h]{0.5\linewidth}
\input{figures/teaser_figure}
\end{subfigure}
\begin{subtable}[h]{0.48\linewidth}
\centering
\setlength{\tabcolsep}{1pt}
\newcommand{\tonefcw}{3cm}
\parbox{6cm}{\centering Common Surrogate Models in XAI\vspace{0.1cm}}
\scalebox{0.7}{\begin{tabular}{ccc}
\toprule
\parbox{2cm}{\centering \textbf{Predictive Models}} & \parbox{2cm}{\centering \textbf{Surrogate Models}} & \textbf{\parbox{2cm}{\centering Explanation Techniques}} \\
 \midrule
\parbox{\tonefcw}{\centering Logistic / Linear Regression,\\ ReLU networks} & \parbox{3cm}{\centering Linear Model:\\\xmark no non-linearities\\ \xmark no interactions} & \parbox{4cm}{\centering C-LIME, Gradients, IG} \\    
\midrule
\parbox{\tonefcw}{\centering GAM, ensembles, boosting} &  \parbox{3cm}{\centering GAM: \\ \cmark non-linearities\\ \xmark no interactions} & \parbox{4cm}{\centering Removal-based,\\ e.g. Shapley \\ Values, Local GAM approximation} \\
\midrule
Transformers & \parbox{4cm}{\centering \textbf{ \SALO (ours): \\ \cmark non-linearities\\ \cmark interactions}} & \parbox{4cm}{\centering \textbf{Local \SALO \\ approximation}} \\
\bottomrule
\end{tabular}}
\end{subtable}
\caption{\textbf{Transformers cannot be well explained through additive models.} Left: We exemplarily show the log odds for the outputs of a BERT model and a \revised{linear Na\"ive-Bayes model (``linear'') assigning each word a weight} trained on the IMDB movie review dataset. \revised{The token colors indicate the weights assigned by the linear model.} We pass two sequences to the models independently and in concatenation. For the linear model, the output of the concatenated sequence can be described by the sum, but this is not the case for BERT. We show that this phenomenon is not due to a non-linearity in this particular model but stems from a general incapacity of transformers to represent additive functions. Right: To overcome this difficulty, we propose \SALO, a novel surrogate model specifically designed to better approximate transformer models.\label{fig:motivation}}\vspace{-0.5cm}
\end{figure}

It remains hard to formally define the contribution of an input feature for non-linear functions.
Recent work \citep{han2022explanation} has shown that common explanation methods do so by implicitly or explicitly performing a local approximation of the complex black-box function, denoted as $f$, using a simpler surrogate function $g$ from a predefined class $\mathcal{G}$. For instance, Local Interpretable Model-agnostic Explanations (LIME, \citealp{ribeiro2016should}) or input gradient explanations \citep{baehrens2010explain} use a linear surrogate model to approximate the black-box $f$; the model's coefficients can be used as the feature contributions.

Surrogate model explanations have the advantage that they directly describe the behavior of the model in the proximity of a specific input, i.e., under small perturbations, a property known as \emph{fidelity} \citep{guidotti2018survey, nauta2023anecdotal}. Fidelity can be quantified through the difference between the prediction model's outputs and the surrogate model's outputs \citep{yeh2019fidelity, zhou2019model}. Explanations that quantitatively describe the prediction model's output under perturbations with low error have \emph{high fidelity}.

An implication of models with high representative capacity and high-fidelity explanations is the \emph{recovery property}: If the true relation $f$ between features and labels in the data is already within the function class $\mathcal{G}$, the model will learn this function and we can effectively reconstruct the original $f$ from the explanations. For example, suppose that the black-box function we consider is of linear form, i.e., $f(\vx) = \vw^\top\vx$, and has been correctly learned. In this case, a gradient explanation as well as continuous LIME (C-LIME, \citealp{agarwal2021towards}) will recover the original model's parameters up to an offset \citep[Theorem 1]{han2022explanation}.
Shapley value explanations \citep{lundberg2017unified} possess a comparable relationship: It is known that they correspond to the \revised{pointwise} feature contributions of Generalized Additive Models (GAM, \citealp{bordt2023shapley}). 
The significance of recovery properties lies in their role when explanations are leveraged to gain insights into the underlying data. Particularly when XAI is used for scientific applications such as drug discovery \citep{mak2023artificial}, preserving the path from the input data to the explanation through a learned model is crucial. However, such guarantees can only be provided when surrogate function class $\mathcal{G}$ can effectively mimic the model's learned relation, at least within some local region.


In this study, we demonstrate that the transformer architecture, the main building block of LLMs such as the GPT models \citep{radford2019language}, is inherently incapable of learning additive models on the input tokens, both theoretically and empirically. \revised{By additive models, we refer to models that assign each token a weight. The sum of the individual token weights then gives the output of the additive model. Linear models are a subset of this class.} We formally prove that simple encoder-only and decoder-only transformers structurally cannot represent \revised{such} additive models due to the attention mechanism's softmax normalization, which necessarily introduces token dependencies over the entire sequence length. An example is illustrated in \Cref{fig:motivation} (left). 
Our finding that the function spaces represented by additive models and transformers are disjoint when dismissing trivial cases implies that prevalent \revised{feature attribution} explanations \emph{are insufficient} to model transformers. They cannot possess high fidelity, i.e., they cannot quantitatively describe these models' behavior well. This also 
undermines the recovery property, highlighting a significant oversight in current XAI practices.
As our results suggest that the role of tokens cannot be described through a single score, we introduce the Softmax-Linked Additive Log Odds Model (\SALO, cf. \Cref{fig:motivation}, right), which represents the role of each input token in two dimensions: The \emph{token value} describes the independent effect of a token, whereas the \emph{token importance} provides a token's interaction weight when combined with other tokens in a sequence. 
In summary, our work offers the following contributions beyond the related literature:\vspace{-0.3em}
\begin{enumerate}[label={(\arabic*)}]
    \setlength{\itemsep}{-1pt}
    \item We theoretically and empirically demonstrate that common transformer architectures fail to represent additive and linear models on the input tokens,  jeopardizing current attribution methods' fidelity.
    \item To mitigate these issues, we propose the Softmax-Linked Additive Log Odds Model (\SALO), which uses a combination of two scores to quantify the role of input tokens.
    \item We theoretically analyze \SALO and show that (i) it can be represented by transformers (i.e., the fidelity property), (ii) it can be uniquely identified from data (i.e., the recovery property), and (iii) it is highly efficient to estimate.
    \item Experiments on synthetic and real-world datasets with common language models (LMs) confirm the mismatch between surrogate models and predictive models, underline that two scores cover different angles of interpretability, and that SLALOM explanations can be computed that have substantially higher fidelity or efficiency than competing techniques. 
\end{enumerate}

\section{Related Work}

\textbf{Explainability for transformers.}
Various methods exist to tackle model explainability \citep{molnar2019, burkart2021survey}. Furthermore, specific approaches have been devised for the transformer architecture \citep{vaswani2017attention}: As the attention mechanism at the heart of transformer models is supposed to focus on relevant tokens, it seems a good target for explainability methods. Several works turn to attention patterns as model explanation techniques. A central attention-based method is put forward by \citet{abnar2020quantifying}, who propose two methods of aggregating raw attentions across layers, \textit{flow} and \textit{rollout}. \citet{brunner2020identifiability} focus on effective attentions, which aim to identify the portion of attention weights actually influencing the model's decision.
While these approaches follow a scalar approach considering only attention weights,  \citet{kobayashi2020attention, kobayashi2023feed} propose a norm-based vector-valued analysis, arguing that relying solely on attention weights is insufficient and the other components of the model need to be considered. Building on the norm-based approach, \citet{modarressi2022globenc, modarressi2023decompx} further follow down the path of decomposing the transformer architecture, presenting global-level explanations with the help of rollout. Beyond that, many more attention-based explanation approaches have been put forward \citep{chen2020generating, hao2021self, ferrando2021attention, qiang2022attcat, sun2023efficient, yang2023local} and relevance-propagation methods such as LRP have been adapted to the transformer architecture \citep{achtibat2024attnlrp}. A drawback with these model-specific explanations remains the implementational overhead that is required to adapt these methods for each architecture. On the formal side, there is no explicit method to quantitatively predict the transformer model’s behavior under perturbations leaving the fidelity of these explanations unclear. 

\textbf{Model-agnostic XAI.} In contrast to transformer-specific methods, 
researchers have devised model-agnostic explanations that can be applied without precise knowledge of a model's architecture. 
Model-agnostic local explanations like LIME \citep{ribeiro2016should}, SHAP \citep{lundberg2017unified} and others \citep[etc.]{ shrikumar2017learning, sundararajan2017axiomatic, smilkov2017smoothgrad, xu2020attribution, covert2021explaining} are a particularly popular class of explanations that are applied to LMs as well \citep{szczepanski2021new, schirmer2023uncovering, dolk2022evaluation}. Surrogate models are a common subform \citep{han2022explanation}, which locally approximate a black-box model through a simple, interpretable function. 

\textbf{Linking models and explanations.} Prior work has distilled the link between classes of surrogate models that can be recovered by explanations \citep[Theorem 3]{agarwal2021towards, han2022explanation}. Notable works include \citet{garreau2020explaining}, which provides analytical results on different parametrizations of LIME, and \citet{bordt2023shapley}, which formalizes the connection between Shapley values and GAMs for classical Shapley values as well as $n$-Shapley values that can also model higher-order interactions. 

\textbf{Interpreting feature attributions as linear models.} Many feature attribution methods do not use explicit surrogate models, e.g., SHAP or LRP. To predict model behavior under perturbations, they can be intuitively interpreted as a linear surrogate model (cf. \citealp{yeh2019fidelity}): If feature $i$ has a contribution of $\phi_i$, removing $i$ should reduce the model output by $\phi_i$ \citep{achtibat2024attnlrp}, giving rise to an implicit linear model. 

We contribute to the literature by theoretically showing that transformers are inherently incapable to represent additive models, casting doubts on the \revised{applicability of local LIME, SHAP, attention-based, and other attribution methods to transformers. These methods fit an explicit additive surrogate model or can be interpreted as one. Our finding that additive scores are insufficient to predict the behavior of transformers leaves us with no method that enables strict fidelity. To bridge this gap, we provide \SALO, a novel surrogate model with substantially increased fidelity and a recovery property for transformers.}

\section{Preliminaries}
\begin{wrapfigure}[18]{r}{7cm}
\vspace {-2cm}
\input{figures/transformer}
\centering
    \caption{\textbf{Transformer architecture.} In each layer $l{=}1,\ldots,L$, input embeddings $\vh_i^{(l-1)}$ for each token $i$ are transformed into output embeddings $\vh_i^{(l)}$. When detaching the part prior to the classification head (``cls''), we see that the output only depends on the last embedding $\vh_1^{(L-1)}$ and attention output $\vs_1$.}
    \label{fig:transformerarch}
\end{wrapfigure}

\subsection{Input and output representations}
In this work, we focus on classification problems of token sequences. For the sake of simplicity, we initially consider a 2-class classification problem with labels $ y \in \spaceY = \lbrace 0, 1 \rbrace$. We will outline how to generalize our approach to multi-class problems in \Cref{sec:generalization}. 

The input consists of a sequence of tokens $\vt{=}\left[t_1, \ldots, t_{\lvert \vt \rvert} \right]$ where $\seqlen{\vt} \in 1, \ldots, C$ is the sequence length that can span at most $C$ tokens (the context length). \revised{All tokens $t_i$ in the sequence $\vt$ stem from a finite size vocabulary $\vocab$, i.e., $t_i \in \vocab, \forall i= 1, \ldots, \lvert \vt \rvert$.}
To transform the tokens into a representation amenable to processing with computational methods, the tokens need to be encoded as numerical vectors. To this end, an embedding function $\ve: \vocab \rightarrow \mathbb{R}^d$ is used, where $d$ is the embedding dimension. \revised{Let $\ve_i = \ve(t_i)$ be the embedding of the $i$-th token.} 
The output is given by a logit vector $\vl \in \mathbb{R}^{\lvert \spaceY \rvert}$, such that $\text{softmax}(\vl)$ contains individual class probabilities.
\subsection{The common transformer architecture}
Many popular LMs follow the transformer architecture introduced by \citet{vaswani2017attention} with only minor modifications. We will introduce the most relevant building blocks of the architecture in this section. A complete formalization is given in \Cref{sec:formalization}. A schematic overview of the architecture is visualized in \Cref{fig:transformerarch}. 
Let us denote the input embedding of token $i=1,\ldots, \seqlen{\vt}$ in layer $l \in 1,\ldots, L$ by $\vh^{(l-1)}_i{\in}\mathbb{R}^d$, where $\vh^{(0)}_i{=}\ve_i$.
The core component of the attention architecture is the attention head.\footnote{Although we only formalize a single head here, our theoretical results cover multiple heads as well} For each token, a \emph{query}, \emph{key}, and a \emph{value} vector are computed by applying an affine-linear transform to the input embeddings. Keys and queries are projected onto each other and normalized by a row-wise softmax operation resulting in attention weights $\alpha_{ij} \in \lbrack0,1\rbrack$, denoting how much token $i$ is influenced by token $j$.  
The attention output for token $i$ can be computed as $\vs_i = \sum_{j=1}^{\seqlen{\vt}} a_{ij}\vvv_j$, where $\vvv_j \in \mathbb{R}^{d_h}$ denotes the value vector for token $j$. 
The final $\vs_i$ are projected back to dimension $d$ by a projection operator $P: \mathbb{R}^{d_h}\rightarrow \mathbb{R}^d$ before they are added to the corresponding input embedding $\vh_i^{(l-1)}$ as mandated by skip-connections. The sum is then transformed by a nonlinear function that we denote by $\text{ffn}: \mathbb{R}^d \rightarrow \mathbb{R}^d$, finally resulting in a transformed embedding $\vh^{(l)}_i$. 
This procedure is repeated iteratively for layers $1,\ldots, L$ such that we finally arrive at output embeddings $\vh^{(L)}_i$. To perform classification, a classification head $\text{cls}: \mathbb{R}^{d} \rightarrow \mathbb{R}^{\lvert \spaceY \rvert}$ is put on top of a token at some index $r$ (how this token is chosen depends on the architecture, common choices include $r\in \lbrace 1, \seqlen{\vt} \rbrace $), such that we get the final logit output $\vl = \text{cls}\left(\vh^{(L)}_r\right)$.
The logit output is transformed to a probability vector via another softmax operation. Note that in the two-class case, we obtain the log odds $F(\vt)$ by taking the difference ($\Delta$) between the two logits, i.e., $F(\vt) \coloneqq \log\frac{p(y=1|\vt)}{p(y=0|\vt)} = \Delta(\vl) = \vl_1 - \vl_0$.

\subsection{Encoder-only and decoder-only models}
\revised{Practical architectures can be seen as parametrizations of the process described previously. We introduce the ones relevant to this work in this section. }\revised{Commonly, a distinction} is made between \emph{encoder-only} models, that include BERT \citep{devlin2018bert} and its variants, and \emph{decoder-only} models such as the GPT models \citep{radford2018improving, radford2019language}.

\textbf{Encoder-only models.}
Considering BERT as an example of an encoder-only model, the first token is used for the classification head, i.e., $r=1$. Usually, a special token \texttt{[CLS]} is prepended to the text at position $1$. However this is not strictly necessary for the functioning of the model.

\textbf{Decoder-only models.} In contrast, decoder-only models like GPT-2 \citep{radford2019language} add the classification head on top of the last token for classification, i.e., $r=\seqlen{\vt}$. A key difference is that in GPT-2 and other decoder-only models, a causal mask is laid over the attention matrix, resulting in $\alpha_{i,j}= 0$ for $j > i$. This encodes the constraint that tokens can only attend to themselves or to previous ones.

\revised{To make our model amenable to theoretical analysis, the transformer model in our analysis contains one slight deviation from practical models. We do not consider positional embeddings, which are added to the embeddings based on their position in the sentence. We will empirically demonstrate that using positional embeddings does not affect the validity of our findings on practical models.}

\section{Analysis}\vspace{-0.2em}
\vspace{-0.2em}Let us initially consider a transformer with only a single layer and head. Our first insight is that the classification output can be determined only by two values: the input embedding at the classification token $r$, $\vh^{(0)}_r$, and the attention output $s_r$. This can be seen when plugging in the different steps:\vspace{-0.6em}
\begin{equation}
F(\vt) {=} \Delta\left(\text{cls}(\vh^{(1)}_r)\right)
=\Delta\left(\text{cls}\left(\text{ffn}(\vh^{(0)}_r + \mP(\vs_r))\right)\right)
 \coloneqq g(\vh^{(0)}_r, \vs_r) = g\Big(\vh^{(0)}_r, \sum_{j=1}^{\seqlen{\vt}} a_{rj}\vvv_j\Big).\label{eq:gtwoargs}
\end{equation}

The attention output is given by a sum of the token value vectors $\vvv_j$ weighted by the respective attention weights $\alpha_{rj}$.

\vspace{-0.5em}
\subsection{Transformers cannot represent additive models}
We now consider how this architecture would represent a linear model. In this model, each token is assigned a weight $w: \vocab \rightarrow \mathbb{R}$.  \revised{The output is obtained by adding weights and an offset $b\in \mathbb{R}$, 
 consequently requiring} \vspace{-0.7em}
\begin{align}
    F(\lbrack t_1, t_2, \ldots, t_\seqlen{\vt}  \rbrack) = b + \sum_{i=1}^{\seqlen{\vt}} w(t_i)\label{eqn:linearmodel}
\end{align}
\revised{for all possible input sequences.}

\revised {The transformer shows surprising behavior when considering sequences of identical tokens but of different lengths, i.e., $\lbrack \tau \rbrack, \lbrack \tau, \tau \rbrack,\ldots$} We first note that the sum of the attention scores is bound to be $\sum_{j=1}^{\seqlen{\vt}} a_{rj} = 1$. The output of the attention head will thus be a weighted average of the value vectors $\vvv_i$. \revised{We find that the first-layer value vectors $\vvv_j(t_j)$ are determined purely by the input tokens $t_j$ (cf. full formalization in \Cref{sec:formalization}). For a sequence of identical tokens, we will thus have the same value vectors, resulting in identical vectors being averaged. This makes the transformer produce the same output for each of these sequences.} \revised{This contradicts the form in \eqref{eqn:linearmodel}, where the output should successively increase by $w(t_i)$.} We are now ready to state our result, which formalizes this intuition for the more general class of additive models where the token weight may also depend on its position $i$ in the input (equivalent to a GAM).

\begin{proposition}[Single-layer transformers cannot represent additive models.]
\label{prop:nogams}
Let $\mathcal{V}$ be a vocabulary and $C \ge 2, C \in \mathbb{N}$ be a maximum sequence length (context length). Let $w_i: \mathcal{V} \rightarrow \mathbb{R}, \forall i  \in 1,...,C$ be any \revised{map} that assigns a token encountered at position $i$ a numerical score including at least one token $\tau \in \mathcal{V}$ with non-zero weight $w_i(\tau) \neq 0$ for some $i \in 2, \ldots, C$. Let $b \in \mathbb{R}$ be an arbitrary offset. 
Then, there exists no parametrization of the encoder or decoder single-layer transformer $F$ such that for every sequence $\vt = \lbrack t_1, t_2, \ldots, t_\seqlen{\vt} \rbrack$ with length $\seqlen{\vt} \le C$, the output of the transformer network is equivalent to $F(\lbrack t_1, t_2, \ldots, t_\seqlen{\vt}  \rbrack) = b + \sum_{i=1}^{\seqlen{\vt}} w_i(t_i)$.
\end{proposition}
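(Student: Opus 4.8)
The plan is to leverage the single structural identity already in hand, \eqref{eq:gtwoargs}: for a one-layer transformer the log-odds output factors as $F(\vt) = g(\vh^{(0)}_r, \vs_r)$ with $\vs_r = \sum_{j=1}^{\seqlen{\vt}} a_{rj}\vvv_j$, where $g$ is a fixed function of its two vector arguments (it depends on neither the sequence length nor the other tokens), and where the row-wise softmax forces $\sum_j a_{rj}=1$. The first step is to record that, since positional embeddings are dropped, $\vh^{(0)}_j=\ve(t_j)$ and hence the first-layer key, query, and value vectors of token $j$ are functions of the token identity $t_j$ alone; write $\vvv_j=\vvv(t_j)$. Note that the normalization $\sum_j a_{rj}=1$ holds both in the encoder case (the sum runs over all $j\le\seqlen{\vt}$) and in the decoder case (the causal mask restricts the sum to $j\le r$, but the masked row is still a probability distribution).

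The second step is to probe $F$ on the family of constant sequences $\vt^{(k)}=[\,\underbrace{\tau,\ldots,\tau}_{k}\,]$ for $k=1,\ldots,C$, where $\tau$ is the token furnished by the hypothesis. On such an input every first-layer value vector equals $\vvv(\tau)$, so $\vs_r=\big(\sum_j a_{rj}\big)\vvv(\tau)=\vvv(\tau)$ regardless of $k$ and regardless of whether the classification index $r$ is the first token (encoder) or the last token (decoder); and in either case $\vh^{(0)}_r=\ve(\tau)$. Consequently $F(\vt^{(k)})=g(\ve(\tau),\vvv(\tau))$ is one and the same constant for all $k\in\{1,\ldots,C\}$. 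The multi-head variant is handled the same way: each head $h$ contributes $\vs_r^{(h)}=\vvv^{(h)}(\tau)$, so the concatenated (and projected) attention output is again constant in $k$.

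The third step is to clash this constancy with the additive target. If $F$ were to agree with $b+\sum_{i=1}^{\seqlen{\vt}} w_i(t_i)$ on all sequences, then on the constant sequences $F(\vt^{(k)})=b+\sum_{i=1}^{k}w_i(\tau)$, whence $F(\vt^{(k)})-F(\vt^{(k-1)})=w_k(\tau)$ for every $k\ge2$. By hypothesis there is an index $i_0\in\{2,\ldots,C\}$ with $w_{i_0}(\tau)\ne0$; since $i_0-1\ge1$ and $i_0\le C$, both $\vt^{(i_0-1)}$ and $\vt^{(i_0)}$ are admissible inputs, so we would be forced to have $F(\vt^{(i_0)})-F(\vt^{(i_0-1)})=w_{i_0}(\tau)\ne0$, contradicting the constancy from the second step. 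This closes the argument for both the encoder and the decoder single-layer transformer.

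I expect the only delicate point to be the second step, specifically the claim that the attention output collapses to the shared value vector. This rests on two facts that must be stated with care: (i) in the \emph{first} layer the value vectors depend on nothing but the token sitting at their position — this is exactly where the ``no positional embeddings'' modeling choice is used, and it is also why the statement is confined to depth one, since layer-$l$ embeddings for $l\ge1$ already mingle information across positions; and (ii) the attention weights out of token $r$ form a probability distribution in both the full-attention and the causally masked regimes, so a convex combination of copies of a single vector returns that vector. The remaining ingredients — the fixedness of $g$, the two placements of the classification head, and the telescoping of the additive sum — are routine bookkeeping.
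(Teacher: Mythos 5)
Your proof is correct and follows essentially the same route as the paper's: probe the model on constant sequences $[\tau,\ldots,\tau]$ of varying length, observe that without positional embeddings the classification-token embedding and the attention output (a convex combination of identical value vectors) are both independent of the length, and derive a contradiction with the telescoped increment $w_{i_0}(\tau)\neq 0$. The only cosmetic difference is that the paper computes the softmax weights explicitly as $1/k$ whereas you invoke only the normalization $\sum_j a_{rj}=1$, which is a marginally cleaner way to cover the encoder and causally masked decoder cases at once.
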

\begin{hproof} We prove the statement by concatenating the token $\tau$ to sequences of different length. We then show that the inputs to the final part $g$ of the transformer will be independent of the sequence length. Due to $g$ being deterministic, the output will also be independent of the sequence length. This is contradictory to the additive model with a weight $w_j(\tau)\neq 0$ requiring different outputs for sequences of length $j{-}1$ and $j$. Formal proofs for all results can be found in \Cref{sec:proofs}.
\end{hproof}

In simple terms, the proposition states that the transformer cannot represent any additive models on sequences of more than one token besides constant functions or those fully determined by the first input token. Importantly, the class of functions stated in the above theorem includes the prominent case of linear models in \cref{eq:gtwoargs}, where each token has a certain weight $w$ independent of its position in the input vector (i.e., $w_i\equiv w,  \forall i$, see \Cref{corr:nolin}). We would like to emphasize that this statement includes the converse:

\begin{corollary}
Transformers whose outputs are not constant or fully determined by the first token of the input sequence cannot be functionally equivalent to an additive model.
\end{corollary}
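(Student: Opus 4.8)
The plan is to derive the result as a direct logical consequence of \Cref{prop:nogams}, essentially by contraposition, so no new heavy machinery is needed. The corollary asserts: if a (single-layer encoder or decoder) transformer $F$ is functionally equivalent to an additive model $F(\vt) = b + \sum_{i=1}^{\seqlen{\vt}} w_i(t_i)$, then $F$ must be either constant or fully determined by the first token $t_1$. So I would assume $F$ equals such an additive model for all sequences of length at most $C$, and argue that the weight maps $w_i$ must be \emph{trivial} in a precise sense, namely $w_i \equiv 0$ for all $i \in 2,\ldots,C$; together with the additive form this forces $F(\vt) = b + w_1(t_1)$, which depends only on the first token (and is constant when $w_1$ is also constant).

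The argument for the key step runs as follows. Suppose, for contradiction, that some $w_i$ with $i \in \{2,\ldots,C\}$ is not identically zero; then there is a token $\tau \in \vocab$ with $w_i(\tau) \neq 0$ for some such $i$. But this is exactly the hypothesis of \Cref{prop:nogams}, which then tells us that no parametrization of the encoder or decoder single-layer transformer can realize the map $\vt \mapsto b + \sum_{i=1}^{\seqlen{\vt}} w_i(t_i)$ — contradicting the assumption that our $F$ does exactly that. Hence $w_i \equiv 0$ for every $i \geq 2$, and the additive form collapses to $F(\vt) = b + w_1(t_1)$. This is "fully determined by the first token"; and in the sub-case $w_1$ constant (all tokens mapped to the same value), $F$ is a constant function. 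Taking the contrapositive of this implication gives precisely the corollary statement: a transformer not of either of these two forms cannot be functionally equivalent to any additive model.

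One subtlety I would make explicit is the matching of quantifiers and the definition of "additive model" used in the corollary versus in \Cref{prop:nogams}. The proposition is stated for position-dependent weight maps $w_i$ (a GAM), and "functionally equivalent to an additive model" in the corollary should be read in the same generality, so the reduction is immediate. I would also note the edge case $C = 1$: then the context allows only single-token sequences, every function is trivially "determined by the first token," and the statement holds vacuously — this matches the hypothesis $C \geq 2$ in \Cref{prop:nogams} being the only regime where there is anything to prove. Finally, I should remark that the corollary covers \emph{both} the encoder ($r = 1$) and decoder ($r = \seqlen{\vt}$) single-layer architectures, since \Cref{prop:nogams} does.

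The main obstacle is essentially bookkeeping rather than mathematical depth: one must be careful that "fully determined by the first token" is the exact logical negation (within the additive family) of "some $w_i$, $i\geq 2$, is nonzero," and that the offset $b$ and the freedom in $w_1$ are handled so that the constant case is correctly subsumed. There is no real analytic content beyond invoking \Cref{prop:nogams}; the proof is a short contrapositive wrapper, and the only risk is a mismatch in how the word "additive" is scoped between the two statements, which I would resolve by stating the intended reading up front.
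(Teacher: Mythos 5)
Your proposal is correct and matches the paper's intent exactly: the paper presents this corollary as the immediate contrapositive (``converse'') of \Cref{prop:nogams} without a separate proof, and your contrapositive wrapper --- showing that functional equivalence to an additive model forces $w_i \equiv 0$ for all $i \geq 2$, hence $F(\vt) = b + w_1(t_1)$ --- is precisely that argument, with the quantifier bookkeeping and edge cases handled carefully.
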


\subsection{Transformer networks with multiple layers cannot represent additive models}
In this section, we will show how the argument can be extended to multi-layer transformer networks.
Denote by $\vh_i^{(l-1)}$ the input embedding of the $i$th token at the $l$th layer. The output is governed by the recursive relation
\begin{align}
    \vh^{(l)}_i = \text{ffn}_l(\vh^{(l-1)}_i + \mP_l(\vs_i))
    =g_l(\vh^{(l-1)}_i, \vs_i).
\end{align}
Exploiting the similar form allows us to generalize the main results to more layers recursively.

\begin{corollary}[Multi-Layer transformers cannot learn additive models either] Under the same conditions as in \Cref{prop:nogams}, a stack of multiple transformer blocks as the model $F$, neither has a parametrization sufficient to represent the additive model.\label{corr:multilayer}
\end{corollary}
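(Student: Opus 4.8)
The plan is to reduce the multi-layer case to the single-layer argument of \Cref{prop:nogams} by induction on the layer depth $L$, exploiting the recursive form $\vh^{(l)}_i = g_l(\vh^{(l-1)}_i, \vs_i)$ where $\vs_i = \sum_j a_{ij}\vvv_j$ and the first-layer value vectors $\vvv_j$ depend only on the token $t_j$. Concretely, I would again take the distinguished token $\tau$ with $w_j(\tau)\neq 0$ for some $j \in \{2,\ldots,C\}$, and compare the constant sequences $[\underbrace{\tau,\ldots,\tau}_{j-1}]$ and $[\underbrace{\tau,\ldots,\tau}_{j}]$ (or, if one prefers to keep the argument uniform with the single-layer sketch, concatenate $\tau$ to an arbitrary shorter sequence). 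For such sequences of identical tokens, all first-layer value vectors coincide, so the convex combination $\vs_i$ is the same fixed vector $\vvv(\tau)$ regardless of sequence length; since each input embedding $\vh^{(0)}_i = \ve(\tau)$ is also position-independent (no positional embeddings), every $\vh^{(1)}_i = g_1(\ve(\tau), \vvv(\tau))$ is one and the same vector.

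The key step is then to observe that this ``all hidden states identical'' property propagates through the stack: if $\vh^{(l-1)}_i = \vh^{(l-1)}$ for all $i$ independent of the sequence length, then the layer-$l$ queries, keys, and values are all equal across positions, the attention weights again form a convex combination of identical value vectors, so $\vs^{(l)}_i$ is a fixed vector, and hence $\vh^{(l)}_i = g_l(\vh^{(l-1)}, \vs^{(l)})$ is again a single vector independent of sequence length. By induction this holds up to layer $L$, so in particular $\vh^{(L)}_r$ — whether $r=1$ (encoder) or $r=\seqlen{\vt}$ (decoder) — is the same vector for the length-$(j-1)$ and length-$j$ constant-$\tau$ sequences. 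Applying the classification head $\mathrm{cls}$ and the difference operator $\Delta$ then forces $F([\tau]^{j-1}) = F([\tau]^{j})$, whereas the additive model of \eqref{eqn:linearmodel} demands $F([\tau]^{j}) - F([\tau]^{j-1}) = w_j(\tau) \neq 0$, a contradiction. The causal-mask case is handled identically since for a sequence of identical tokens the masked softmax still averages identical value vectors.

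The main obstacle — and the place where care is needed — is making the induction hypothesis precise enough that it genuinely transfers across \emph{all} relevant sequence lengths simultaneously, i.e. the claim is not merely ``within one sequence all hidden states are equal'' but ``the common hidden state at layer $l$ is the \emph{same} across the two sequences being compared.'' This requires that nothing in layers $1,\ldots,l$ sees the sequence length except through the attention normalization, which collapses on constant inputs; one must check this for every sub-block (multi-head attention with the softmax normalization $\sum_j a_{ij}=1$, the skip connection, the projection $\mP_l$, and the position-wise $\mathrm{ffn}_l$), each of which acts either position-wise or through a length-invariant convex combination. Once this invariant is stated cleanly, the induction is routine and the contradiction with the nonzero weight $w_j(\tau)$ closes the argument exactly as in the single-layer proof; the full details are deferred to \Cref{sec:proofs}.
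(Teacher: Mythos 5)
Your proposal is correct and follows essentially the same route as the paper's proof: an induction over layers maintaining the invariant that all hidden states are identical across positions \emph{and} across the two constant-$\tau$ sequences being compared, with the causal-mask case handled by noting the masked softmax still averages identical value vectors, and the contradiction closed exactly as in \Cref{prop:nogams}. The paper packages the inductive step as a standalone lemma over a set $S$ of sequences, but the content — including the subtlety you flag about the invariant holding across sequences rather than merely within one — is the same.
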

\ifarxiv
\begin{hproof}
    To see this, we first prove that for identical input embeddings $\mH^{(l-1)}$, that do not change for varying sequence lengths (although their number changes), the outputs $\mH^{(l)}$ will be identical as well and independent of the sequence length. Subsequently, we show that the final embeddings will also be independent of the sequence length, resulting in a contradiction as before.
\end{hproof}
\fi
\textbf{Practical considerations.} \revised{As stated earlier,} the transformer model in our analysis does not consider positional embeddings that are added on the token embeddings. However, this does not have major ramifications in practice: While the transformer would be able to differentiate between sequences of different lengths with positional embeddings in theory, the softmax operation must be inverted for any input sequence by the linear feed-forward block that follows the attention mechanism. This is a highly non-linear operation and the number of possible sequences grows exponentially with the context length and vocabulary size. Learning-theoretic considerations suggest that this inversion is impossible for reasonably-sized networks as outlined in \Cref{sec:practicalconsideration}. We will confirm our results with empirical findings obtained exclusively on non-modified models with positional embeddings.

\section{A Surrogate Model for Transformers}
In the previous section, we theoretically established that transformer models struggle to represent additive functions. While this must not necessarily be considered a weakness, it certainly casts doubts on the suitability of additive models as surrogate models for explanations of transformers. For a principled approach, we consider the following four requirements to be of importance:\vspace{-0.5em}
\begin{enumerate}[label={(\arabic*)}]\setlength{\itemsep}{-1pt}
    \item \textbf{Interpretability.} \revised{The surrogate model should be simple enough such that its parameters are inherently interpretable for humans \citep[Chapter 9.2]{molnar2019}.}
    \item \textbf{Learnability.} The surrogate model should be easily representable by common transformers. \revised{Learnability is crucial because using a surrogate model that is hard to represent for the predictive model will likely result in low-fidelity explanations.}
    \item \textbf{Recovery.} If the predictive model falls into the surrogate model's class, the fitted surrogate model's parameters should match those of the predictive model. \revised{Together with learnability, recovery ensures that a model can pick up the true relations present in the data and they can be re-identified by the explanation, which is essential, e.g., in scientific discovery with XAI.}
    \item \textbf{Efficiency.} The surrogate model should be efficient to estimate even for larger models.
\end{enumerate}
\newcommand{\subwidth}[0]{4cm}
\begin{figure}[t]
 \begin{subfigure}[b]{0.26\linewidth}
 \includegraphics[width=4.3cm]{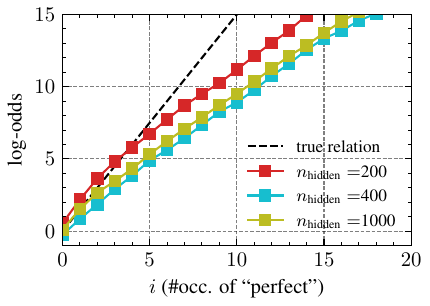}
 \caption{Fully-Connected Model\label{fig:b1}}
 \end{subfigure}
\begin{subfigure}[b]{0.24\linewidth}
 \includegraphics[width=\subwidth]{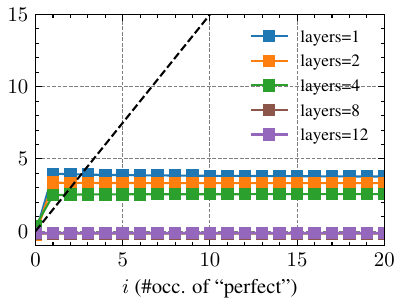}
 \caption{BERT\label{fig:b2}}
 \end{subfigure}
\begin{subfigure}[b]{0.24\linewidth}
\includegraphics[width=\subwidth]{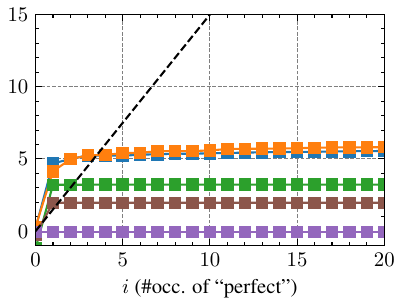}
\caption{DistilBERT \label{fig:b3}}
\end{subfigure}
\begin{subfigure}[b]{0.24\linewidth}
\includegraphics[width=\subwidth]{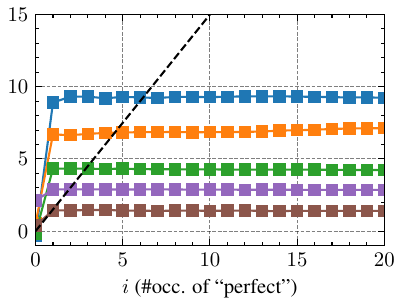}
\caption{GPT-2 \label{fig:b4}}
\end{subfigure}
\caption{\textbf{Transformers fail to learn linear models.} We train different models on a synthetically sampled dataset where the log odds obey a linear relation to the features. Fully connected models (2-layer ReLU networks with different hidden layer widths) capture the linear form of the relationship well despite some estimation error (a). However, common transformer models fail to model this relationship and output almost constant values (b)-(d). This does not change with more layers.\label{fig:nolinearmodels}}
\end{figure}

\subsection{The Softmax-Linked Additive Log Odds Model}
To meet the requirements, we propose a novel discriminative surrogate model that explicitly models the behavior of the softmax function. Instead of only assigning a single weight $w$ to each token, we separate two characteristics: We introduce the \emph{token importance} as a \revised{map} $s: \vocab \rightarrow \mathbb{R}$ and a \emph{token value} in form of a \revised{map} $v: \vocab \rightarrow \mathbb{R}$. Subsequently, we consider the following discriminative model:\vspace{-0.2cm}
\begin{align}
    F(\vt) &= \log\frac{p(y=1|\vt)}{p(y=0|\vt)} = \sum_{\tau_i \in \vt}\alpha_i(\vt)v(t_i), ~~~~\text{where}~
\alpha_i(\vt) = \frac{\exp(s(t_i))}{\sum_{t_j \in \vt}\exp(s(t_j))}.\label{eqn:salo}
\end{align}
Due to the shift-invariance of the softmax function, we observe that the \revised{maps} $s$ and $s'$ given by $s'(\tau) = s(\tau)+\delta$ result in the same softmax-score and thus the same log odds model for any input $\vt$. Therefore, the parameterization would not be unique. To this end, we introduce a \textit{normalization constraint} on the sum of token importances for uniqueness. Formally, we constrain it to a user-defined constant $\gamma \in \mathbb{R}$ such that $\sum_{\tau \in \mathcal{V}} s(\tau) = \gamma$, where natural choices include $\gamma\in \lbrace{0,1\rbrace}$. We refer to the discriminative model given in Eqn.~\eqref{eqn:salo} together with the normalization constraint as the \emph{softmax-linked additive log odds model} (SLALOM).

As common in surrogate model explanations, we can fit SLALOM to a predictive model's outputs globally or locally and use tuples of token importance scores and token values scores, $(v(\tau), s(\tau))$ to give explanations for an input token $\tau$. While the value score provides an absolute contribution of $\tau$ to the output, its token importance $s(\tau)$ determines its weight with respect to the other tokens. For instance, if only one token $\tau$ is present in a sequence, the output is only determined by its value score $v(\tau)$. However, in a sequence of multiple tokens, the importance of each token with respect to the others -- and thereby the contribution of this token's value -- is determined by the token importance scores $s$. This intuitive relation makes \SALO interpretable, thereby satisfying Property (1).

\subsection{Theoretical properties of SLALOM}
We analyze the proposed \SALO theoretically to ensure that it fulfills Properties (2) and (3), Learnability and Recovery, and subsequently provide efficient algorithms to estimate its parameters (4). First, we show that -- unlike linear models, {\SALO}s can be easily learned by transformers.

\begin{proposition}[Transformers can fit {\SALO}] For any \revised{maps} $s$, $v$, and a transformer with an embedding size $d$ and head dimension $d_h$ with $d, d_h \geq 3$, there exists a parameterization of the transformer to reflect \SALO in \Cref{eqn:salo} together with the normalization constraint. \label{prop:trans_identify_slm}
\end{proposition}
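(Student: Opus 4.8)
The plan is to construct an explicit parametrization of a single-layer, single-head transformer (without positional embeddings, as in the paper's analysis model) that realizes \eqref{eqn:salo}. The key observation is that SLALOM's log-odds is exactly a softmax-weighted average of per-token scalars, which is precisely the functional shape that the attention output $\vs_r = \sum_j \alpha_{rj}\vvv_j$ produces — so we just need to route the right scalars into the right coordinates. I would use an embedding dimension $d=3$ (and head dimension $d_h=3$): embed each token $t$ as $\ve(t) = (s(t) - \gamma/|\vocab|\cdot 0,\ v(t),\ 1)^\top$, or more carefully $\ve(t)=(s(t), v(t), 1)$ up to the shift fixed by the normalization constraint $\sum_{\tau}s(\tau)=\gamma$, which we simply impose on the chosen embedding. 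The third ``bias'' coordinate is there to make the affine maps expressible and to carry a constant.

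Next I would specify the attention head. Choose the query map so that every token's query is a fixed constant vector, and the key map so that the key of token $t_j$ is a vector whose inner product with that constant query equals $s(t_j)$ (using the $s$-coordinate of the embedding and the constant coordinate); then the pre-softmax logits are exactly $s(t_j)$ and the row-wise softmax yields $\alpha_j(\vt) = \exp(s(t_j))/\sum_k \exp(s(t_k))$, matching \eqref{eqn:salo} including the causal-mask-free encoder case (for the classification token $r$, the attention is over all $j$). Set the value map so that $\vvv_j = (v(t_j), 0, \ldots)$, i.e., it extracts the value-coordinate. Then $\vs_r = (\sum_j \alpha_j(\vt) v(t_j),\, 0,\, 0)^\top$, whose first coordinate is exactly $F(\vt)$ as desired.

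It remains to pass this scalar through the skip connection, the feed-forward block, the projection $P$, and the classification head $\text{cls}$ and difference $\Delta$ without distortion. Here I would exploit that these maps are arbitrary trainable functions: pick $P$ to be the identity (or an appropriate embedding of $\mathbb{R}^{d_h}$ into $\mathbb{R}^d$), arrange the feed-forward block to act as (close to) the identity on the relevant coordinate — e.g. choose its weights so that on the affine subset of activations that actually occur the ReLU-MLP computes an affine, in fact identity-like, map, absorbing the unwanted contribution of the skip-connected embedding $\vh_r^{(0)}$ by cancellation — and finally let $\text{cls}$ be the linear map that reads off the first coordinate into $\vl_1$ and sends $\vl_0$ to $0$, so that $\Delta(\vl)=\vl_1-\vl_0$ equals that first coordinate. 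Composing, $F(\vt) = \sum_i \alpha_i(\vt)v(t_i)$.

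The main obstacle is the skip connection: the feed-forward input is $\vh_r^{(0)} + P(\vs_r)$, so the embedding of the classification token $\vh_r^{(0)}=\ve(t_r)$ leaks additively into the output, whereas SLALOM must not depend on which token sits at position $r$ beyond its appearance in the sum. I would handle this by using separate coordinates: let $P(\vs_r)$ land the quantity $\sum_i\alpha_i v(t_i)$ in a coordinate that is zero in all token embeddings, so after the skip connection that coordinate still holds exactly the SLALOM output while the embedding occupies orthogonal coordinates that $\text{cls}$ simply ignores. With $d, d_h \ge 3$ there is just enough room ($s$-channel for the attention logits, $v$-channel for the values, and the constant/bias channel) to carry this out; verifying that the $\text{ffn}$ can be chosen to act as the identity on the finite set of realized activations (or, if one insists on a genuine ReLU MLP, that a two-layer ReLU network reproduces the identity on a bounded region via the standard $x = \sigma(x) - \sigma(-x)$ trick) is the one routine computation I would spell out.
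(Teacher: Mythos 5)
Your construction is correct and is essentially the paper's own proof: embed each token as $(s(t),v(t),\ast)$, make one side of the query--key product a constant vector so the pre-softmax logits equal $s(t_j)$, route $v(t_j)$ through the value map into a channel that survives the skip connection untouched, neutralize the feed-forward block via its own residual path, and read off the result with an affine classification head. The only cosmetic difference is that the paper keeps the third embedding coordinate at zero and injects the needed constant through the key bias $\vb_k$ rather than through a constant-$1$ embedding channel, which avoids the small bookkeeping issue in your variant that with $d=3$ and embedding $(s(t),v(t),1)$ no coordinate is identically zero, so the classification head would have to cancel the constant through its bias.
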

\ifarxiv
\begin{hproof} We prove this statement by explicitly constructing the weight matrices of the transformer for this \revised{map}. In particular, this involves obtaining raw attention scores of $s(t_j)$ for key $t_j$ that are normalized as in the \SALO model. See \Cref{sec:trans_identify_slm} for the full construction.
\end{hproof}
\else
This statement can be proven by explicitly constructing the corresponding weight matrix (cf. \Cref{sec:trans_identify_slm}).
\fi
This proposition highlights that -- unlike linear models -- there are simple ways for the transformer to represent relations governed by {\SALO}s. We demonstrate this empirically in our experimental section and conclude that \SALO fulfills Property (2). 
For Property (3), Recovery, we make the following proposition:
\begin{proposition}[Recovery of {\SALO}s]
\label{prop:slalom_identifiability}
Suppose query access to a model $G$ that takes sequences of tokens $\vt$ with lengths $ \lvert \vt\rvert \in 1,\ldots, C$ and returns the log odds according to a non-constant \SALO on a vocabulary $\vocab$, normalization constant $\tau \in \mathbb{R}$, but with unknown parameter \revised{maps} $s: \vocab \rightarrow \mathbb{R}$, $v: \vocab \rightarrow \mathbb{R}$. For $C\ge 2$, we can recover the true \revised{maps} $s$, $v$ with $2\lvert \vocab \rvert -1$ forward passes of $F$. 
\end{proposition}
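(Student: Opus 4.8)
The plan is to recover the value map $v$ from single-token queries and then the importance map $s$ from two-token queries, exploiting that the softmax normalization over a length-one sequence is trivial.

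First I would query $G$ on each singleton sequence $[\tau]$ for $\tau \in \vocab$. Since the sequence contains a single token, $\alpha_1([\tau]) = 1$ and the \SALO output collapses to $G([\tau]) = v(\tau)$, so after these $\lvert\vocab\rvert$ forward passes the map $v$ is known exactly. I would also record that a \SALO with constant $v$ is a constant function (the attention weights always sum to one), hence a non-constant \SALO must have $v(a)\neq v(b)$ for some pair $a,b\in\vocab$; I fix such a pair.

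Second I would recover $s$ up to an additive constant from two-token queries (this is where $C\ge 2$ is used). For tokens $x,\tau$ with $v(x)\neq v(\tau)$, writing $p = \exp(s(x))/(\exp(s(x))+\exp(s(\tau)))$, the model gives $G([x,\tau]) = p\, v(x) + (1-p)\, v(\tau)$, so $p = \bigl(G([x,\tau]) - v(\tau)\bigr)/\bigl(v(x)-v(\tau)\bigr)$ is determined; since $p\in(0,1)$ we obtain $s(x) - s(\tau) = \log p - \log(1-p)$. I would query $[a,b]$ to get $s(a)-s(b)$, and then, for each remaining token $\tau$, query one informative pair: because $v(a)\neq v(b)$, at least one of $v(\tau)\neq v(a)$ or $v(\tau)\neq v(b)$ holds, and the corresponding query yields $s(a)-s(\tau)$ either directly or by combining with $s(a)-s(b)$. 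This costs $1 + (\lvert\vocab\rvert-2) = \lvert\vocab\rvert-1$ further forward passes and pins down all differences $s(\tau)-s(a)$.

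Finally I would use the normalization constraint $\sum_{\tau\in\vocab} s(\tau) = \gamma$ to fix the free additive constant: substituting $s(\tau) = s(a) + \bigl(s(\tau)-s(a)\bigr)$ gives $\lvert\vocab\rvert\, s(a) + \sum_{\tau}\bigl(s(\tau)-s(a)\bigr) = \gamma$, which determines $s(a)$ and hence all of $s$. The query count totals $\lvert\vocab\rvert + (\lvert\vocab\rvert - 1) = 2\lvert\vocab\rvert - 1$. The main obstacle is the possibility of ties $v(x)=v(\tau)$, for which the two-token query is uninformative about $s$ (the output then equals $v(x)$ regardless of the importances); the fixed pair $a,b$ with distinct values, which exists precisely because the \SALO is non-constant, lets me route around every such tie while staying within the $\lvert\vocab\rvert-1$ budget. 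A minor point to verify is that $p$ lies strictly in $(0,1)$ so that $\log p - \log(1-p)$ is finite, which holds because $s$ takes finite values.
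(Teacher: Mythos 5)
Your proposal is correct and follows essentially the same route as the paper's proof: recover $v$ from the $\lvert\vocab\rvert$ singleton queries, fix a reference pair with distinct values (which exists by non-constancy), extract importance differences from the logit of the interpolation weight in two-token queries while routing around ties $v(\tau)=v(a)$ via the second reference token, and fix the additive constant with the normalization constraint, for a total of $2\lvert\vocab\rvert-1$ passes. No gaps.
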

\ifarxiv
\begin{hproof}
We first query $G([\tau]), \forall \tau \in \mathcal{V}$ for single token sequences. We know that for $\seqlen{\vt}=1$ all attention is on one token, i.e., $(\alpha_i =1)$ and we thus have $G([\tau]) = v(\tau)$. Having the values of each token, We then query $n-1$ two-token sequences with $G([\tau,\omega]),\omega \in \vocab \setminus \lbrace\tau \rbrace$. Given the token's value scores, we can obtain their relative importances using $n-1$ forward passes. 
We use the condition that the \SALO is non-constant to make sure $v(\tau)$ and $v(\omega)$ are always different and we can use the output to obtain the importance \revisec{map} $s$.
\end{hproof}
\fi
This statement confirms property (3) and shows that \SALO can be uniquely re-identified when we rule out the corner case of constant models. We prove it in \Cref{sec:slalom_identifiability}. 

\textbf{Complexity considerations.} Computational complexity can be a concern for XAI methods. To estimate exact Shapley values, the model's output on exponentially many feature coalitions needs to be evaluated. 
However, as the proof of \Cref{prop:slalom_identifiability} shows, to estimate \SALO's parameters for an input sequence of $\vocab$ tokens, only $2\lvert \vocab \rvert -1$ forward passes are required, verifying Property (4). We empirically show that computing SLALOM explanations is about \textbf{5$\times$ faster} than computing SHAP explanations when using the same number of samples in our experimental section.

\subsection{Numerical algorithms for computing {\SALO}s}
Having derived \SALO as a better surrogate model, \revised{we require numerical algorithms to estimate $v$ and $s$. Unfortunately, the strategy derived in \Cref{prop:slalom_identifiability} using a minimal number of samples is numerically unstable.} We make two key implementation choices for SLALOM to be used as an explanation technique. First, we can control the sample set of features and labels obtained through queries of the predictive model. Second, we can use different optimization strategies to fit \SALO on this sample set.
We suggest two algorithms to fit {\SALO}s post-hoc on input-output pairs of a trained predictive model:

\textbf{\SALO-\texttt{eff}}. The first version of the algorithm to fit SLALOM models is designed for maximum efficiency while maintaining reasonable performance across several XAI metrics. Obtaining a large dataset of input-output pairs can incur substantial computational costs as a forward pass of the models needs to be executed for each sample. To speed up this process, \SALO-\texttt{Eff} uses very short sequences (we use only two tokens in this work) randomly sampled from the vocabulary for this purpose. To efficiently fit the surrogate model, we perform stochastic gradient descent on \SALO's parameters using the mean-squared-error loss \revised{between the score output by SLALOM and the score by the predictive model as an objective.} \SALO-\texttt{eff} is our default technique used unless stated otherwise.

\textbf{\SALO-\texttt{fidel}}. We provide another technique to fit \SALO optimized for maximum fidelity under input perturbations such as token removals. To explain a specific sample, we sample input where we remove up to $K$ randomly sampled tokens. The sequences with tokens removed and their predictive model scores are used to fit the model, similar to LIME \citep{ribeiro2016should}. Instead of SGD, we can leverage optimizers for Least-Square-Problems to fit the parameters iteratively, however incurring a higher latency.
We provide details and pseudocode for both fitting routines in \Cref{sec:algo_localslalom}.

\newcommand{\colfigwidth}{3.75cm}
\begin{figure}[t!]
\centering
\begin{subfigure}[b]{0.24\linewidth}
 \centering
 \includegraphics[width=\linewidth]{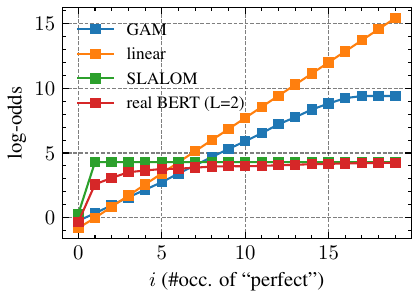}
\caption{Fitting SLALOM to BERT outputs \label{fig:c1}}
\end{subfigure}\hfill
\begin{subfigure}[b]{0.24\linewidth}
\includegraphics[width=\linewidth]{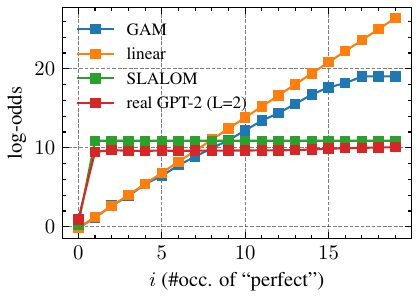}
\caption{Fitting SLALOM to GPT-2 outputs}
\end{subfigure}\hfill
\begin{subfigure}[b]{0.26\linewidth}
\includegraphics[width=\linewidth,trim={0 0 5cm 0},clip]{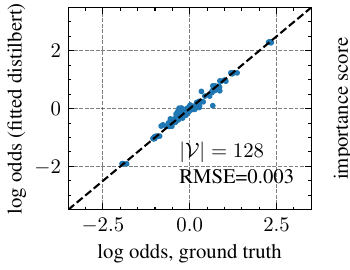}\caption{Comparing \SALO and predictive output}
\end{subfigure}\hfill
\begin{subfigure}[b]{0.22\linewidth}
\includegraphics[width=0.99\linewidth,trim={6cm 0 0 0},clip]{figures/D_scatter_distilbert_2_128.pdf}\caption{Recovering parameters of \SALO dataset}
\end{subfigure}
 \caption{\textbf{Verifying properties with synthetic data: \SALO describes outputs of transformer models well (a,\,b).} We fit \SALO to the outputs of the BERT and GPT-2 models trained on the linear synthetic dataset.  The linear and GAM models (despite having  $C/2{=}15\times$ more parameters) do not match the transformer's behavior. We provide another empirical counterexample and additional quantitative results in \Cref{sec:describingtransformers}. \textbf{Verifying recovery (c,\,d).} We verify the recovery property on a second synthetic dataset where features and labels obey a \SALO relation. We train a 2-layer DistilBERT model on the data and fit \SALO to the trained model. We can recover the original logit scores (c) and see a strong connection between original \SALO parameters and the recovered ones (d). These findings verify the learnability and recovery properties. More results in \Cref{sec:recovery_ext}.
\label{fig:slalomposthoc}\label{fig:recoveringslalom}}
\end{figure}

\subsection{Relating \SALO scores to linear attributions}
\label{sec:relating_slalomstub}
Importantly, \SALO scores can be readily converted to locally linear interpretability scores where necessary.
For this purpose, a differentiable model for soft removals is required. We consider the weighted model:
\begin{align}
F(\vlambda) = \frac{\sum_{t_i\in \vt}\lambda_i\exp(s(t_i))v(t_i)}{\sum_{t_i\in \vt}\lambda_i\exp(s(t_i))},
\end{align}
where $\lambda_i= 1$ if a token is present and $\lambda_i = 0$ if it is absent. We observe that setting $\lambda_i=0$ has the desired effect of making the output of the soft-removal model equivalent to that of the standard \SALO on a sequence without this token. Taking the gradients at $\vlambda=\mathbf{1}$ we obtain $\left.\frac{\partial F}{\partial \lambda_i}  \right\rvert_{\vlambda=\mathbf{1}} \propto v(t_i)\exp(s(t_i))$, which can be used to rank tokens according to the linearized attributions. We defer the derivation to \Cref{sec:relatingslalom} and refer to these scores as \emph{linearized} \SALO scores. \revised{As \SALO is just another multi-variate function, it also possible to compute SHAP values for it. We discuss this relation in \Cref{sec:relatingslalom} as well.}

\section{Experimental Evaluation}
\label{sec:experiments}

We run a series of experiments to show the mismatch between surrogate model explanations and the transformers. Specifically, we verify that (1) real transformers fail to learn additive models, (2) \SALO better captures transformer output, (3) \SALO models can be recovered from fitted models with tolerable error, (4) \SALO scores are versatile and align well with linear attribution scores and human attention, and (5) that SLALOM performs well in faithfulness metrics and has substantially higher fidelity than competing techniques. For experiments (1)-(3), we require knowledge of the ground truth and use synthetic datasets. To demonstrate the practical strengths of our method, all the experiments for (4) and (5) are conducted on real-world datasets and in comparison with state-of-the-art XAI techniques.

\subsection{Experimental setup}
\textbf{LM architectures.} We study three representative transformer language model architectures in our experiments. In sequence classification, mid-sized transformer LMs are most popular on platforms such as the Huggingface hub ~\citep{huggingface2024models} often based on the BERT-architecture \citep{devlin2018bert}, which is reflected in our experimental setup. To represent the family of encoder-only models, we deploy BERT \citep{devlin2018bert} and DistilBERT \citep{sanh2019distilbert}. We further experiment with GPT-2 \citep{radford2019language}, which is a decoder-only model. We use the \texttt{transformers} framework to run our experiments. While not the main scope of this work concerned with sequence classification, we will also show that due to its model-agnostic nature, SLALOM can be applied to LLMs with up to 7B parameters and non-transformer models such as Mamba \citep{gu2023mamba} with plausible results. We provide a proof-of-concept to show that \SALO can be applied to black-box models such as GPT-4o in \Cref{sec:largemodels}.

\textbf{Datasets.} We use synthetic datasets and two real-world datasets for sentiment classification. Specifically, we study the IMDB dataset, consisting of movie reviews \citep{maas2011learning} and Yelp-HAT, a dataset of restaurant reviews, for which human annotators have provided annotations on which tokens are relevant for the classification outcome \citep{sen2020human}.
We provide additional details on hyperparameters, training, and datasets in \Cref{sec:experimentaldetails}.

\subsection{Evaluation with known ground truth}
\label{sec:knowntruth}
\textbf{Transformers fail to capture linear relationships.} We empirically verify the claims made in \Cref{prop:nogams} and \Cref{corr:multilayer}. To ascertain that the underlying relation captured by the models is additive, we resort to a synthetic dataset with a linear relation. The dataset is created as follows: First, we sample different sequence lengths from a binomial distribution with a mean of $15$. Second, we sample words independently from a vocabulary of size $10$. This vocabulary was chosen to include positive words, negative words, and neutral words, with manually assigned weights $w \in \lbrace-1.5, -1, 0, 1, 1.5\rbrace$, that can be used to compute a linear log odds model. We evaluate this model and finally sample the sequence label accordingly, thereby ensuring a linear relation between input sequences and log odds. We train transformer models on this dataset and evaluate them on sequences containing the same word (``perfect'') multiple times. Our results in \Cref{fig:nolinearmodels} show that the models fail to capture the relationship regardless of the model or number of layers used. In \Cref{sec:motivation_ext}, we show how this undermines the recovery property with Shapley value explanations.

\textbf{Fitting \SALO as a surrogate to transformer models.} Having demonstrated the mismatch between additive functions and transformers, we turn to \SALO as a more suitable surrogate model. As shown in \Cref{prop:trans_identify_slm}, transformers can easily fit {\SALO}s, which is why we hypothesize that they should model the output of such a model well in practice. We fit the different surrogate models on a dataset of input sequences and real transformer outputs from our linear synthetic dataset and observe that linear models and additive models fail to capture the relationship learned by the transformer as shown in \Cref{fig:slalomposthoc}(a,\,b). On the contrary, SALO manages to model the relationship well, even if it has considerably less trainable parameters than the additive model (GAM).

\newcommand{\wstd}[2]{#1 \small{$\pm$ #2}}
\newcommand{\bwstd}[2]{\textbf{#1} \small{$\pm$ #2}}

\textbf{Verifying recovery.} We run an experiment to study whether, unlike linear models, \SALO can be fitted and recovered by transformers. To test this, we sample a second synthetic dataset that exactly follows the relation given by \SALO. We then train transformer models on this dataset. The results in \Cref{fig:recoveringslalom}(c,\,d) show that the surrogate model fitted on transformer outputs as a post-hoc explanation recovers the correct log odds mandated by \SALO (c) and that there is a good correspondence between the true model's parameters and the recovered model's parameters (d).
\begin{figure}[b]\vspace{-0.7em}
\begin{subfigure}[b]{0.24\linewidth}
    \centering
    \includegraphics[width=0.95\linewidth]{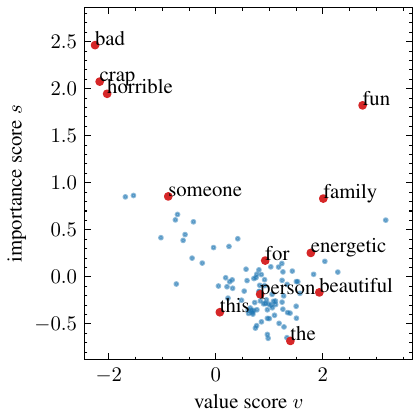}
    \label{fig:f_scatter_bert}
\caption{BERT}
\end{subfigure}
\begin{subfigure}[b]{0.24\linewidth}
     \centering
    \includegraphics[width=0.95\linewidth]{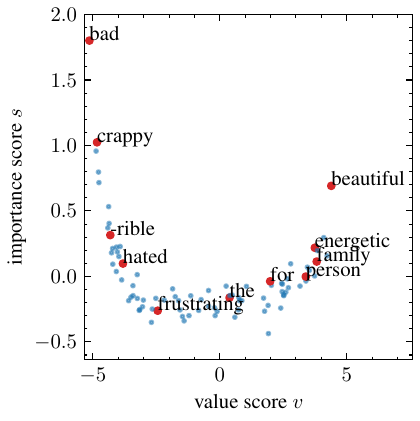}
    \label{fig:f_scattergpt}
    \caption{GPT-2}
\end{subfigure}
    \begin{subfigure}[b]{0.48\linewidth}
    \includegraphics[width=0.99\linewidth,trim={0cm 1cm 0cm 4.9cm},clip]{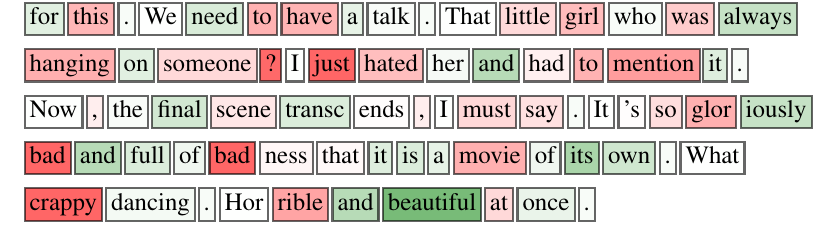}\vspace{0.5cm}
\label{fig:f_highilightbert}
\caption{Token highlighting (linearized \SALO, GPT-2)}
\end{subfigure}
\caption{Explaining a real review with \SALO (qualitative results). \SALO assigns two scores to each token (a,b) and can be used to compute attributions via its linearization (c). We observe that the impactful words have high importances and the value scores indicate the sign of their contribution (positive or negative words). See \Cref{fig:imdbfullscatter} (Appendix) for fully annotated plots. \label{fig:qualitative}}\vspace{-1em}
\end{figure}

\subsection{Examining real-world predictions from different angles}
We increase the difficulty and deploy \SALO (fitted using \SALO-\texttt{eff}) to explain predictions on real-world datasets. As there is no ground truth for these datasets, it is challenging to evaluate the quality of the explanations \citep{rong2022consistent}. To better understand \SALO explanations, we study them from several angles: We compare to linear scores obtained when fitting a Na\"ive-Bayes Bag-of-Words (BoW) model, scores on removal and insertion benchmarks \citep{tomsett2020sanity, deyoung2020eraser}, the human attention scores available on the Yelp-HAT dataset \citep{sen2020human}, and provide qualitative results.

\textbf{Explaining Sentiment Classification.} We show qualitative results for explaining a movie review in \Cref{fig:qualitative}. The figure shows that both negative and positive words are assigned high importance scores but have value scores of different signs. Furthermore, we see that some words (``the'') have positive value scores, but a very low importance. This means that they lead to positive scores on their own but are easily overruled by other words. We compare the SLALOM scores obtained on $100$ random test samples to a linear Na\"ive-Bayes model (obtained though counting class-wise word frequencies) as a surrogate ground truth in \Cref{tab:comparenbbow} through the Spearman rank correlation. We observe good agreement with the value scores $v$ and the combined linearized \SALO scores (``lin'', see \Cref{sec:relating_slalomstub}).

\begin{table}[t]
\begin{subtable}[t]{0.42\linewidth}
    \setlength{\tabcolsep}{1pt}
    \centering
    \scalebox{0.75}{\begin{tabular}{rccc}
    \toprule
    LM & values $v$ & importances $s$ & lin.  \\
     \midrule
    BERT & \wstd{0.619}{0.01} & \wstd{0.349}{0.01} & \bwstd{0.626}{0.01} \\
    DistilBERT & \wstd{0.692}{0.01} & \wstd{0.373}{0.01} & \bwstd{0.693}{0.01} \\
    GPT-2 & \wstd{0.618}{0.01} & \wstd{0.292}{0.01} & \bwstd{0.619}{0.01} \\
    \midrule
    average & 0.643 & 0.338 & \textbf{0.646} \\
    \bottomrule
    \end{tabular}}
    \caption{Measuring average rank-correlation (Spearman) between Naive-Bayes scores and \SALO scores. Linearized performs best.}
    \label{tab:comparenbbow}
\end{subtable}\hfill
\begin{subtable}[t]{0.5\linewidth}
    \setlength{\tabcolsep}{1pt}
    \centering
    \scalebox{0.75}{\begin{tabular}{rccccc}
    \toprule
     LM & values $v$ & importances $s$ & lin. \\
     \midrule
    Bert & \wstd{0.786}{0.01} & \bwstd{0.807}{0.01} & \wstd{0.801}{0.01} \\
    DistilBERT & \bwstd{0.688}{0.01} & \wstd{0.681}{0.01} & \wstd{0.686}{0.01} \\
    GPT-2 & \wstd{0.674}{0.01} & \bwstd{0.685}{0.01} & \wstd{0.683}{0.01} \\
    \midrule
    average & 0.716 & \textbf{0.724} & 0.724 \\
    \midrule
    & \\
    \midrule
    BLOOM-7.1B & \wstd{0.739}{0.02} & \wstd{0.712}{0.03} & \wstd{0.740}{0.02}\\
    Mamba-2.8B & \wstd{0.615}{0.03} & \wstd{0.437}{0.03} & \wstd{0.535}{0.03}\\
    \bottomrule
    \end{tabular}}
    \caption{Measuring average AU-ROC between SLALOM explanations and human token attention. The importance scores and most strongly predictive of human attention for the classical models. Applying SLALOM to larger models underlines its scalability, but it remains less reliable for non-transformer models like Mamba.\vspace{1em}}
    \label{tab:compareyelphat}
\end{subtable}
\begin{subtable}[b]{0.98\linewidth}
\small
    \setlength{\tabcolsep}{4pt}
    \centering
    \scalebox{0.75}{\begin{tabular}{rcccccccccc}
    \toprule
    & \multicolumn{2}{c}{\SALO-\texttt{fidel}} & \multicolumn{2}{c}{\SALO-\texttt{eff}}\\
    \cmidrule(lr){2-3} \cmidrule(lr){4-5}
    LM &  $v$-scores & lin. & $v$-scores & lin. & LIME & SHAP & IG & Grad & LRP \\
     \midrule
BERT & \wstd{0.025}{0.002} & \bwstd{0.023}{0.001} & \wstd{0.031}{0.002} & \wstd{0.031}{0.002} & \wstd{0.024}{0.002} & \wstd{0.026}{0.003} & \wstd{0.557}{0.034} & \wstd{0.611}{0.033} & \wstd{0.030}{0.006} \\
DistilBERT & \wstd{0.028}{0.003} & \wstd{0.024}{0.002} & \wstd{0.027}{0.002} & \wstd{0.027}{0.002} & \wstd{0.027}{0.003} & \wstd{0.029}{0.003} & \wstd{0.495}{0.027} & \wstd{0.508}{0.028} & \bwstd{0.023}{0.002} \\
GPT-2 & \wstd{0.052}{0.008} & \wstd{0.050}{0.008} & \wstd{0.089}{0.008} & \wstd{0.089}{0.008} & \wstd{0.230}{0.017} & \bwstd{0.042}{0.005} & \wstd{0.454}{0.022} & \wstd{0.493}{0.023} & \wstd{0.069}{0.010} \\
\midrule
average & \wstd{0.035}{0.004} & \bwstd{0.032}{0.004} & \wstd{0.049}{0.004} & \wstd{0.049}{0.004} & \wstd{0.094}{0.007} & \bwstd{0.032}{0.003} & \wstd{0.502}{0.028} & \wstd{0.537}{0.028} & \wstd{0.041}{0.006} \\
    \bottomrule
    \end{tabular}}
    \caption{Area Over Perturbation Curve for deletion. Linearized scores and SHAP performs best in the XAI metric. 
    \vspace{-0.2cm}
    \label{tab:aopcdeletion}
    }
\end{subtable}
\caption{Evaluation of SLALOM  scores (``values'', ``importance'', ``lin.'') with std.\,errors across  explanation quality measures highlights that \SALO's different scores serve different purposes. IMDB dataset shown. \label{tab:benchmarkresults}}\vspace{-0.5cm}
\end{table}

\textbf{Predicting Human Attention.} To study alignment with a user perspective, we predict human attention from \SALO scores. We compute AU-ROC for predicting annotated human attention as suggested in \citet{sen2020human} in \Cref{tab:compareyelphat}. We use absolute values of all signed explanations as human attention is unsigned as well. In contrast to the previous experiments, where value scores were more effective than importances, we observe that the importance scores are often best at predicting where the human attention is placed. In summary, these findings highlight that the two scores serve different purposes and cover different dimensions of interpretability. \SALO offers higher flexibility through its $2$-dimensional representation. 

We also use this opportunity to study the applicability of \SALO to larger and non-transformer models. To this end, we train Mamba and BLOOM models \citep{le2023bloom} with a classification head on the Yelp dataset. Our results in the lower part of \Cref{tab:compareyelphat} prove that SLALOM can be well applied to larger models with sizes of up to 7B. For BLOOM the results are promising and reach the same level as for the classical transformer models. SLALOM can also be applied to non-transformer models like Mamba \citep{gu2023mamba} due to its model-agnostic nature. Due to its general expessivity, the explanations are capable of predicting human attention (at least the value scores), but we observe a slight reduction in quality. We thus see \SALO's main scope with models that follow the classical transformer paradigm. We show that \SALO's results for BoW correlations and human attention prediction are in the same range and often outperform competing XAI techniques in \Cref{sec:hat_nb}, but defer a comparative analysis to the next sections.

\begin{figure}[t]
\vspace{0.5em}
\centering
\begin{subfigure}[b]{0.3\linewidth}
 \centering
 \includegraphics[width=\linewidth]{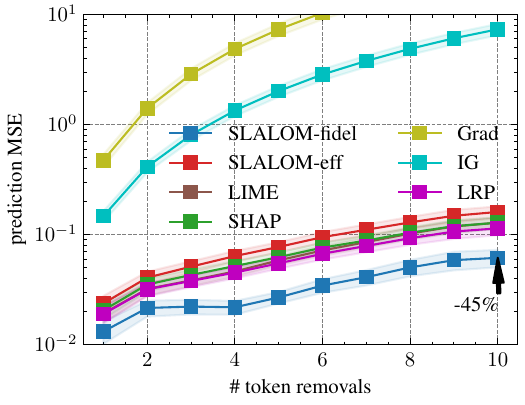}
\caption{BERT \label{fig:c1}}
\end{subfigure}
\begin{subfigure}[b]{0.3\linewidth}
\includegraphics[width=\linewidth]{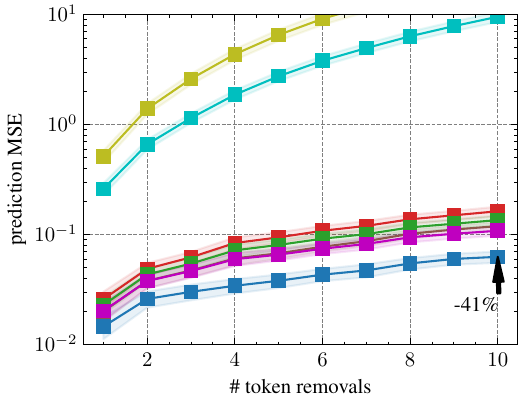}
\caption{DistilBERT}
\end{subfigure}
\begin{subfigure}[b]{0.3\linewidth}
\includegraphics[width=\linewidth]{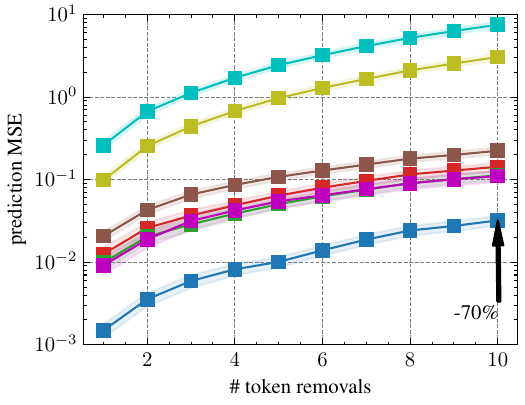}\caption{GPT-2}
\end{subfigure}
 \caption{\textbf{Assessing Fidelity.} We plot the MSE for predicting model outputs under token removal and find that SLALOM's predictions have up to 70\% less error than the closest competitor when up to 10 random tokens from a sentence are removed (log-$y$ plots). We interpret LRP scores as a linear  model.\label{fig:strictfidelity}}
\end{figure}

\textbf{Assessing Fidelity.} Having established the roles of its components, we verify that \SALO can produce explanations that have substantially higher fidelity than competing surrogate or non-surrogate explanation techniques. To assess this, we remove up to 10 tokens from the input sequences and use the explanations to predict the change in model output using the surrogate model (\SALO or linear). We compare the two \SALO versions to baselines such as LIME \citep{ribeiro2016should}, Kernel-SHAP \citep{lundberg2017unified}, Gradients \citep{simonyan2013deep}, and Integrated Gradients (IG, \citealp{sundararajan2017axiomatic}) and layer-wise relevance propagation (LRP) for transformers \citep{achtibat2024attnlrp}, a non-surrogate technique. We report the Mean-Squared-Error (MSE) between the predicted change and the observed change when running the model on the modified inputs in \Cref{fig:strictfidelity}. We observe that \SALO-\texttt{fidel} offers substantially higher fidelity with a reduction of 70\% in MSE for predicting the output changes over the second-best method (LRP). Other surrogate approaches and LRP remain cluttered together, potentially highlighting the frontier of maximum fidelity possible with a linear surrogate model.

\begin{wraptable}[12]{r}{5cm}
\vspace{-0.3cm}
    \centering
\scalebox{0.8}{\begin{tabular}{rr}
    \toprule
    Approach & Avg. Time (s)\\
    \midrule
Grad  & \wstd{0.01}{0.00}\\
IG  & \wstd{0.02}{0.00}\\
LRP  & \wstd{0.02}{0.00}\\
SLALOM-\texttt{eff}  & \wstd{2.03}{0.01}\\
SLALOM-\texttt{fidel}  & \wstd{3.77}{0.24}\\
LIME  & \wstd{3.93}{0.19}\\
SHAP  & \wstd{11.56}{0.03}\\
\bottomrule
\end{tabular}}
\caption{Runtime comparison using 5000 samples to estimate surrogate models.\label{tab:runtimes_main}}
\end{wraptable} \textbf{Evaluating XAI Metrics.} There are several other metrics to quantify the quality of explanations and to compare different explanation techniques. As a sanity check and to show that SLALOM explanations do not lag behind other techniques in established metrics,  
we run the classical insertion/removal benchmarks \citep{tomsett2020sanity}. For the insertion benchmark, we successively add the tokens with the highest attributions to the sample, which should result in a high score for the target class. We iteratively insert more tokens and compute the  “Area Over the Perturbation Curve” (AOPC, see \citet{deyoung2020eraser}), which should be low for insertion. This metric quantifies the alignment of explanations and model behavior but only considers the feature ranking and not the assigned score. For surrogate techniques (LIME, SHAP, SLALOM) we use 5000 samples each. Our results in \cref{tab:aopcdeletion} highlight that linearized SLALOM scores outperform LIME and LRP and perform on par with SHAP. Removal results for IMDB and results on YELP with similar findings are deferred to \Cref{tab:aopcappx} (Appendix). In conclusion, this shows that on top of \SALO's desirable properties, it is on par with other techiques in common evaluation metrics.

\textbf{Computational Costs.} Finally, we take a look at the computational costs of the methods, which are mainly determined by sampling the dataset to fit the surrogate model. We provide the runtimes to explain a sample on our hardware when using 5000 samples to estimate surrogates in \Cref{tab:runtimes_main} with more results in \Cref{sec:runtime}. We observe that SHAP incurs the highest computational burden. Among surrogate model explanations \SALO-\texttt{eff} is the most efficient, being about 5$\times$ more efficient that SHAP and 2$\times$ more efficient than LIME. Nevertheless, non-surrogate techniques are far more efficient as they require only one or few (IG steps) forward or backward passes, but suffer from other disadvantages (e.g., implementation effort, no explicit way to predict model behavior). Overall, our results highlight that the two \SALO fitting routines can produce explanations of comparable utility to other surrogate models at a fraction of the costs, or produce explanations with higher fidelity at similar costs due to structurally better alignment between surrogate and predictive models.

\section{Discussion and Conclusion}
\vspace{-0.2em}
In this work, we established that transformer networks are inherently incapable of representing linear or additive models commonly used for feature attribution. We prove that the function spaces learnable by transformers and linear models are disjoint when ruling out trivial cases. This may explain similar incapacities observed in time-series forecasting~\citep{zeng2023transformers}, where they seem incapable of representing certain relations. To address this shortcoming, we have introduced the Softmax-Linked Additive Log Odds Model (\SALO), a surrogate model for explaining the influence of features on transformers and other complex LMs through a two-dimensional representation.

Our work still has certain limitations that could be addressed in future work. SLALOM is specifically designed to explain the behavior of transformer models and therefore aligned with the classes of functions commonly represented by transformers. However, it would not be a suitable choice to explain models capturing a linear relationship. While \SALO is generally applicable to any token-based LMs, we recommend using SLALOM only when the model is known to have attention-like non-linearities. Our results indicate that performance for these models is highest. We also note that SLALOM operates at the token level by assigning each individual token importance and value scores. Contextual or higher-order interpretability considering the meaning and impact of phrases, clauses, or sentences is not covered by \SALO.
To complement this theoretical foundation, future work will include further evaluation of \SALO from a user-centric perspective, for instance, on human-centered evaluation frameworks~\citep{colin2022cannot}. From a broader perspective, we hope that this research paves the way for advancing the interpretability and theoretical understanding of widely adopted transformer models.


\subsubsection*{Broader Impact Statement}
This paper presents theoretical work on better understanding feature attributions in the transformer framework. We advise using caution when using our XAI technique or other model explanation as all explanations present only a simplified view of the complex ML model. Our method works best with models of the transformer architecture. Besides that, we do not see any immediate impact which we feel must be specifically highlighted here.



\bibliography{main}
\bibliographystyle{tmlr}

\appendix
\onecolumn
\section{Motivation: Failure Cases For Model Recovery}
\label{sec:motivation_ext}
We provide another motivational example that shows a failure case of current explanation methods on transformer architectures. In this example we test the recovery property for a linear model. We create a synthetic dataset where each word in a sequence $\vt$ has a linear contribution to the log odds score, formalized by
\begin{align}
    \log \frac{p(y=1|\vt)}{p(y=0|\vt)} = F(\lbrack t_1, t_2, \ldots, t_\seqlen{\vt}  \rbrack) = b + \sum_{i=1}^{\seqlen{\vt}} w(t_i).
\end{align}
We create a dataset of 10 words (cf. \Cref{tab:lineardataset}) and train transformer models on samples from this dataset. We subsequently create sequences that repeatedly contain a single token $\tau$ (in this case, $\tau$=``perfect''), pass them through the transformers, and use Shapley values (approximated by Kernel-Shap) to explain their output. The result is visualized in \Cref{fig:motivation_ext}, and shows that a fully connected model (two-layer, 400 hidden units, ReLU) recovers the correct scores, whereas transformer models fail to reflect the true relationship. This shows that explanation methods that are explicitly or implicitly based on additive models lose their ability to recover the true data-generating process when transformer models are explained. 

\begin{figure}[h]
\centering
\includegraphics[width=0.4\linewidth]{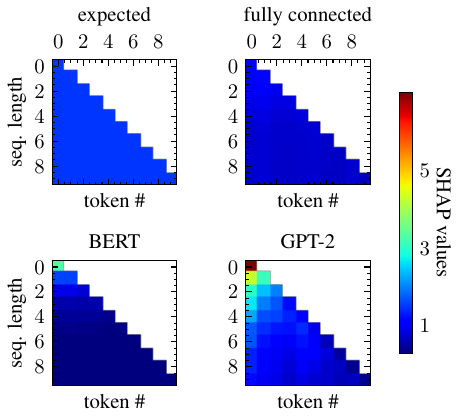}\vspace{-0.5cm}
    \caption{\textbf{SHAP values do not recover linear functions $F$ for transformers.} We compute SHAP values for token sequences that repeatedly contain a single token $\tau$ with a ground truth score of $1.5$ (i.e., $F(\lbrack\tau\rbrack){=}1.5$, $F(\lbrack \tau, \tau\rbrack ){=}3.0$, ...) such that the ground truth attributions should yield $1.5$ independent of the sequence length. While this approximately holds true for a fully connected model, BERT and GPT-2 systematically overestimate the importance for short sequences and underestimate it for longer ones.
    \label{fig:motivation_ext}}
\end{figure}
\section{Proofs}
\label{sec:proofs}
\subsection{Formalization of the transformer}
\label{sec:formalization}
Many popular LLMs follow the transformer architecture introduced by \cite{vaswani2017attention} with only minor modifications. We will introduce the most relevant building blocks of the architecture in this section. A schematic overview of the architecture is visualized in \Cref{fig:transformerarch}. Let us denote the input embeddings for Layer $l \in 1,\ldots L$ by $\mH^{(l-1)} = \lbrack\vh^{(l-1)}_1, \ldots,  \vh^{(l-1)}_\seqlen{\vt} \rbrack^\top\in \mathbb{R}^{\seqlen{\vt} \times d}$, where a single line $\vh_i$ contains the embedding for token $i$. The input embeddings of the first layer consist of the token embeddings, i.e., $\mH^{(0)}= \mE$, \revised{where $\mE = \lbrack \ve_1, \ldots, \ve_\seqlen{\vt} \rbrack ^\top \in \mathbb{R}^{\seqlen{\vt} \times d}$ is a matrix of the individual token embeddings.} At the core of the architecture lies the attention head. For each token, a \emph{query}, \emph{key}, and a \emph{value} vector are computed by applying an affine-linear transform. In matrix notation this can be written as
\begin{align}
    \mQ^{(l)} &= \mH^{(l-1)}\mW_Q^{(l)}{+} \mathbf{1}_{\seqlen{\vt}}{\vb_Q^{(l)}}^\top,\\
    \mK^{(l)} &= \mH^{(l-1)}\mW_K^{(l)} {+}\mathbf{1}_{\seqlen{\vt}}{\vb_K^{(l)}}^\top,\\
    \mV^{(l)} &= \mH^{(l-1)}\mW_V^{(l)}{+} \mathbf{1}_{\seqlen{\vt}}{\vb_V^{(l)}}^\top,
\end{align}
where $\mathbf{1}_{\seqlen{\vt}} \in \mathbb{R}^{\seqlen{\vt}}$ denotes a vector of ones of length $\seqlen{\vt}$, $\vb_Q^{(l)}, \vb_V^{(l)}, \vb_K^{(l)} \in \mathbb{R}^{d_h}$, $\mW_Q^{(l)}, \mW_K^{(l)}, \mW_V^{(l)} \in \mathbb{R}^{d \times {d_h}},$ are trainable parameters and $d_h$ denotes the dimension of the attention head.\footnote{We only formalize one attention head here, but consider the analogous caae of multiple heads in our formal proofs.} Keys and queries are projected onto each other and normalized by a row-wise softmax operation,
\begin{align}
\mA^{(l)} &= \text{rowsoftmax}\left(\frac{\mQ^{(l)}\mK^{(l)^\top}}{\sqrt{d_k}}\right).
\end{align}
This results in the attention matrix $\mA^{(l)} \in \mathbb{R}^{{\seqlen{t} \times \seqlen{t}}}$, where row $i$ indicates how much the other tokens will contribute to its output embedding. To compute output embeddings, we obtain attention outputs $\vs_i$,
\begin{align}
\mS = \lbrack \vs_1, \ldots, \vs_\seqlen{\vt}\rbrack^\top= \mA^{(l)}\mV^{(l)}.
\end{align}
Note that an attention output can be computed as $\vs_i = \sum_{j=1}^{\seqlen{\vt}} a_{ij}\vvv_j$, where $\vvv_j$ denotes the value vector in the line corresponding to token $j$ in $\mV = \lbrack \vvv_1, \ldots, \vvv_{\seqlen{\vt}} \rbrack^\top$ and $a_{ij} = \mA^{(l)}_{i,j}$.
The final $\vs_i$ are projected back to the original dimension $d$ by some projection operator $P: \mathbb{R}^{d_h}\rightarrow \mathbb{R}^d$ before they are added to the corresponding input embedding $\vh_i^{(l-1)}$ due to the skip-connections. The sum is then transformed by a nonlinear function that we denote by $\text{ffn}: \mathbb{R}^d \rightarrow \mathbb{R}^d$. In summary, we obtain the output for the layer, $\vh^{(l)}_i$, with
\begin{align}
    \vh^{(l)}_i &= \text{ffn}_l(\vh^{(l-1)}_i + \mP(\vs_i)).
\end{align}
This procedure is repeated iteratively for layers $1,\ldots, L$ such that we finally arrive at output embeddings $\mH^{(L)}$. To perform classification, a classification head $\text{cls}: \mathbb{R}^{d} \rightarrow \mathbb{R}^{\lvert \spaceY \rvert}$ is put on top of a token at classification index $r$ (how this token is chosen  depends on the architecture, common choices include $r\in \lbrace 1, \seqlen{\vt} \rbrace $), such that we get the final logit output $\vl = \text{cls}\left(\vh^{(L)}_r\right)$.
The logit output is transformed to a probability vector via another softmax operation. Note that in the two-class case, we obtain the log odds $F(\vt)$ by taking the difference ($\Delta$) between logits
\begin{align}
    F(\vt) \coloneqq \log\frac{p(y=1|\vt)}{p(y=0|\vt)} = \Delta(\vl) = \vl_1 - \vl_0.
\end{align}
\subsection{Proof of \Cref{prop:nogams}}
\begin{proposition}[Proposition 4.1 in the main paper]
\label{prop:nogams_app}
Let $\mathcal{V}$ be a vocabulary and $C \ge 2, C \in \mathbb{N}$ be a maximum sequence length (context window). Let $w_i: \mathcal{V} \rightarrow \mathbb{R}, \forall i  \in 1,...,C$ be any map that assigns a token encountered at position $i$ a numerical score including at least one token $\tau \in \mathcal{V}$ with non-zero weight $w_i(\tau) \neq 0$ for some $i \in 2, \ldots, C$. Let $b \in \mathbb{R}$ be an arbitrary offset. 
Then, there exists no parametrization of the encoder or decoder single-layer transformer $F$ such that for every sequence $\vt = \lbrack t_1, t_2, \ldots, t_\seqlen{\vt} \rbrack$ with length $1 \le \seqlen{\vt} \le C$, the output of the transformer network is equivalent to
\begin{align}
    F(\lbrack t_1, t_2, \ldots, t_\seqlen{\vt}  \rbrack) = b + \sum_{i=1}^{\seqlen{\vt}} w_i(t_i).
\end{align}
\end{proposition}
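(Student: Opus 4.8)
The plan is to build a contradiction from the identity $F(\vt) = g(\vh^{(0)}_r, \vs_r)$ established in \eqref{eq:gtwoargs}, in which $g$ is a fixed deterministic map and $\vs_r = \sum_j a_{rj}\vvv_j$ is a convex combination of first-layer value vectors. The crucial structural fact (from the formalization in \Cref{sec:formalization}, using that positional embeddings are dropped) is that both the classification-token input embedding $\vh^{(0)}_r = \ve(t_r)$ and every value vector $\vvv_j = \vvv(t_j)$ depend only on the token identity. Hence for a sequence consisting of a single repeated token $\tau$, each $\vvv_j$ equals the common vector $\vvv(\tau)$, and since the relevant attention weights are non-negative and sum to one, the attention output collapses to $\vs_r = \sum_j a_{rj}\,\vvv(\tau) = \vvv(\tau)$, \emph{irrespective of the sequence length}.

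Concretely, I would first fix, by hypothesis, an index $j \in \{2,\ldots,C\}$ and a token $\tau \in \vocab$ with $w_j(\tau) \neq 0$, and consider the two constant sequences $\vt^{-} = [\tau,\ldots,\tau]$ of length $j-1$ and $\vt^{+} = [\tau,\ldots,\tau]$ of length $j$ (both of length $\le C$). For an encoder ($r = 1$) the classification token is $\tau$ for both sequences, so $\vh^{(0)}_r = \ve(\tau)$; the full attention row sums to one, so $\vs_r = \vvv(\tau)$; thus the pair $(\vh^{(0)}_r,\vs_r)$ agrees for $\vt^{-}$ and $\vt^{+}$. For a decoder ($r = \seqlen{\vt}$) the classification token is again $\tau$, and under the causal mask token $r$ attends exactly to positions $1,\ldots,r$, all carrying value $\vvv(\tau)$, so again the (masked) attention row sums to one and $(\vh^{(0)}_r,\vs_r) = (\ve(\tau),\vvv(\tau))$ for both sequences. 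Feeding the identical pair into the deterministic $g$ forces $F(\vt^{-}) = F(\vt^{+})$. But the assumed additive form gives $F(\vt^{+}) - F(\vt^{-}) = \bigl(b + \sum_{i=1}^{j} w_i(\tau)\bigr) - \bigl(b + \sum_{i=1}^{j-1} w_i(\tau)\bigr) = w_j(\tau) \neq 0$, a contradiction.

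To make the argument fully general I would also note that multiple attention heads change nothing: the attention output is then a concatenation of per-head weighted averages of per-head value vectors, and for a constant sequence each per-head average again collapses to that head's value vector for $\tau$, so $\vs_r$ remains a fixed function of $\tau$ alone. The step requiring the most care — rather than computation — is the rigorous justification that the first-layer value and input embeddings are functions of the token identity alone (this is exactly where the no-positional-embedding assumption enters) and that the ``convex combination of equal vectors'' collapse holds verbatim for the causally masked decoder row; once these are pinned down, the contradiction is immediate and needs no further calculation.
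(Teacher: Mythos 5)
Your proposal is correct and follows essentially the same route as the paper's proof: fix $j$ and $\tau$ with $w_j(\tau)\neq 0$, compare the constant sequences of lengths $j-1$ and $j$, show the pair $(\vh^{(0)}_r,\vs_r)$ fed into the deterministic $g$ is identical for both (for encoder and causally-masked decoder alike, and per head in the multi-head case), and contradict the additive model's required jump of $w_j(\tau)$. The only cosmetic difference is that you invoke the collapse of a convex combination of equal vectors directly, whereas the paper explicitly computes the uniform attention weights $1/k$ first; the conclusion and structure are the same.
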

\begin{proof} We show the statement in the theorem by contradiction. Consider the token, $\tau \in \mathcal{V}$, for which $w_j(\tau) \neq 0$ for some token index $j \geq 2$ which exists by the condition in the theorem. We now consider sequences of length $k$ of the form $\vt_k = \lbrack \underset{\text{repeat}~k~\text{times}}{\underbrace{\tau, \ldots, \tau}} \rbrack$ for $k = 1,\ldots, C$. For example, we have $\vt_1 = \lbrack \tau \rbrack$, $\vt_2 = \lbrack\tau, \tau\rbrack$, etc. The output of the transformer is given by
\begin{align}
F(\vt) = g\left(\vh^{(0)}_r, \sum_{j=1}^{\seqlen{\vt}} a_{rj}\vvv_j\right) = g\left(\ve(\tau), \sum_{j=1}^{\seqlen{\vt}} \alpha_{rj}\vvv_j \right),
\end{align}
where $r$ is the token index on which the classification head is placed. Note that with $r=\seqlen{\vt}$ for the decoder architecture, the sum always goes up to $\seqlen{\vt}$ (for the encoder architecture this is always true). As all tokens in the sequence have a value of $\tau$, we obtain $\vh^{(0)}_r= \ve(t_r) = \ve(\tau)$. The first input to the final part will thus be equal for all sequences $\vt_k$. We will now show that the second part will also be equal.


We compute the value, key, and query vectors for $\tau$. $\vvv, \vk, \vq \in \mathbb{R}^{d_h}$ correspond to one line in the respective key, query and value matrices. As the inputs are identical and we omit positional embeddings in this proof, all lines are identical in the matrices. This results in
\begin{align}
    \vvv&= \mW_V^\top\ve(\tau) + \vb_V \\
    \vk &= \mW_K^\top\ve(\tau) + \vb_K \\
    \vq &= \mW_Q^\top\ve(\tau) + \vb_Q
\end{align}
We omit the layer indices for simplicity.
As pre-softmax attention scores (product of key and value vector), we obtain $s = \vq^\top\vk/\sqrt{d_k}$. 
Subsequently, the softmax computation will be performed over the entire sequence, resulting in 
\begin{align}
    \bm{\alpha}_{r} &= \text{softmax}(\lbrack \underset{k~\text{times}}{{\underbrace{s, \ldots, s}}} \rbrack) = \left[ \frac{\exp(s)}{k\exp(s)}\right]\\
    &= \Big\lbrack \underset{k~\text{times}}{{\underbrace{\frac{1}{k}, \ldots, \frac{1}{k}}}}\ \Big\rbrack
\end{align}
The second input $\sum_{j=1}^{\seqlen{\vt}} \alpha_{rj}\vvv_j$ to the feed-forward part is given by
\begin{align}
    \sum_{j=1}^{\seqlen{\vt}} \alpha_{rj}\vvv_j=\sum_{j = 1}^{k} \alpha_{rj} \bm{v}_j = \sum_{j = 1}^{k} \frac{1}{k} \vvv  = \vvv,
\end{align}
as $\alpha_{rj}$ and $\bm{v}$ are independent of the token index $j$. We observe that the total input to final part $g$ is independent of $k$ in its entirety, as the first input $\ve(\tau)$ is independent of $k$ and the second input is independent of $k$ as well. As $g$ is a deterministic function, also the log odds output will be the same for all input sequences $\vt_k$ and be independent of $k$. By the condition we have a non-zero weight $w_j(\tau)\neq 0$ for some $j\geq 2$. In this case, there are two sequences  $\vt_{j-1}$ (length $j{-}1$) and $\vt_{j}$ (length $j$) consisting of only token $\tau$, where the outputs of the additive model (GAM) follow
\begin{align}
f_{\text{GAM}}(\vt_{j}) &= b + \sum_{i=1}^{j} w_i(\tau) \\ &= b + \sum_{i=1}^{j-1} w_i(\tau) + w_j(\tau)\\
&= f_{\text{GAM}}(\vt_{j-1}) +  w_j(\tau)
\end{align}
As we suppose $w_j(\tau)\neq 0$, it must be that $f_{\text{GAM}}(\vt_{j}) \neq f_{\text{GAM}}(\vt_{j-1})$ which is a contradiction, with the output being equal for all sequence lengths.

\textbf{Multi-head attention.} In the case of multiple heads, we have 
\begin{align}
F(\vt) &= \Delta\left(\text{cls}(\vh^{(1)}_r)\right)\\
& =\Delta\left(\text{cls}\left(\text{ffn}(\vh^{(0)}_r + \mP_{h=1}(\vs_r^{h=1}) + \mP_{h=2}(\vs_r^{h=2}) + \ldots +\mP_{h=1}(\vs_r^{h=H}))\right)\right)
\\
&=g(\vh^{(0)}_r, \vs_r^{h=1}, \ldots,  \vs_r^{h=H}) 
\end{align}
As before, we can make the same argument, if we show that all inputs to $g$ are the same. This is straight-forward, as we can extend the argument made for one head for every head, because none of the head can differentiate between the sequence lengths. The first input will still correspond to $\vh^{(0)}_r=\ve(\tau)$, which results in the same contradiction.
\end{proof}

\subsection{Corollary: Transformers cannot represent linear models}
\begin{corollary}[Transformers cannot represent linear models]\label{corr:nolin}
Let the context window be $C>2$ and suppose the same model as in \Cref{prop:nogams}. Let $w: \mathcal{V} \rightarrow \mathbb{R}$ be any weighting function that is independent of the token position with $w(\tau) \neq 0$ where for at least one token $\tau \in \mathcal{V}$. Then, the single layer transformer cannot represent the linear model
\begin{align}
    F(\lbrack t_1, t_2, \ldots, t_N \rbrack) = b + \sum_{i=1}^{N} w(t_i).
\end{align}
\end{corollary}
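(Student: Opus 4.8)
The plan is to obtain \Cref{corr:nolin} as an immediate specialization of \Cref{prop:nogams}. A position-independent linear model is simply a generalized additive model in which every per-position weight map agrees: if in \Cref{prop:nogams} we set $w_i := w$ for all $i \in \lbrace 1, \ldots, C \rbrace$, then the additive form $b + \sum_{i=1}^{N} w_i(t_i)$ collapses to exactly the linear form $b + \sum_{i=1}^{N} w(t_i)$ appearing in the statement of the corollary (with $N = \seqlen{\vt}$). So the only real work is to feed this particular family of maps into the proposition and verify that its hypotheses are met.

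Next I would check the nontriviality hypothesis of \Cref{prop:nogams} for this family $(w_i)_{i=1}^{C}$, namely that there exist a token $\tau \in \mathcal{V}$ and an index $i \in \lbrace 2, \ldots, C \rbrace$ with $w_i(\tau) \neq 0$. By assumption there is a token $\tau$ with $w(\tau) \neq 0$, and since $C > 2$ the index $i = 2$ is admissible; then $w_2(\tau) = w(\tau) \neq 0$, so the hypothesis holds with this $\tau$ and $i = 2$. Applying \Cref{prop:nogams} to the family $(w_i)_i$ now yields that no parametrization of the single-layer encoder or decoder transformer $F$ can satisfy $F(\lbrack t_1, \ldots, t_N \rbrack) = b + \sum_{i=1}^{N} w_i(t_i) = b + \sum_{i=1}^{N} w(t_i)$ for every sequence of length at most $C$, which is precisely the claim of the corollary.

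I expect no genuine obstacle here: the entire argument lives in \Cref{prop:nogams}, and this corollary is a bookkeeping instantiation. The only points requiring minor care are aligning notation ($N$ versus $\seqlen{\vt}$, and the fact that the corollary is phrased with a single map $w$ rather than a family $(w_i)_i$) and confirming that the nontriviality condition on $w$ transfers to the nontriviality condition on $(w_i)_i$ — which it does, since with $w_i \equiv w$ every position $i \ge 2$ carries the same nonzero weight on $\tau$.
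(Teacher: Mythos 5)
Your proposal is correct and matches the paper's own proof exactly: both set $w_i \equiv w$ for every position $i$, observe that $w(\tau) \neq 0$ together with the admissibility of an index $i \ge 2$ makes the nontriviality hypothesis of \Cref{prop:nogams} hold, and then invoke the proposition directly. No further comment is needed.
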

\begin{proof}
    This can be seen by setting $w_i \equiv w$ for every $i$ in \Cref{prop:nogams}. With $w(\tau) \neq 0$, the condition from \Cref{prop:nogams}, i.e., having one $w_i$ with $w_i(\tau) \neq 0$ for $i\geq 2$ is fulfilled as well such that the result of the proposition as well.
\end{proof}
This statement has a strong implication on the capabilities of transformers as it shows that they struggle to learn linear models.

\subsection{Proof of \Cref{corr:multilayer}}
\begin{corollary}[Corollary 4.3 in the main paper] Under the same conditions as in \Cref{prop:nogams}, a stack of multiple transformer blocks as in the model $F$ neither has a parametrization sufficient to represent the additive model.\label{corr:multilayer_ext}
\end{corollary}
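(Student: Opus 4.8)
The plan is to bootstrap the single-layer argument of \Cref{prop:nogams} through the layer stack by an induction that preserves the structural invariant ``all token embeddings are identical and independent of the sequence length.'' As before, fix the token $\tau \in \vocab$ with $w_j(\tau) \neq 0$ for some $j \geq 2$ (which exists by hypothesis) and consider the family of constant sequences $\vt_k = \lbrack \tau, \ldots, \tau \rbrack$ of length $k = 1, \ldots, C$. Since positional embeddings are omitted, the layer-$0$ input $\mH^{(0)}$ for $\vt_k$ is the matrix whose $k$ rows all equal $\ve(\tau)$; in particular the per-row embedding is the \emph{same} vector for every position and every $k$. This is the base case of the invariant.

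Next I would establish the inductive step: \emph{if} $\mH^{(l-1)}$ has all rows equal to a common vector $\vh^{(l-1)}$ not depending on $k$, \emph{then} $\mH^{(l)}$ has all rows equal to a common vector $\vh^{(l)}$ not depending on $k$. The argument mirrors that of \Cref{prop:nogams}: with identical input rows, the affine query/key/value maps yield identical rows $\vq, \vk, \vvv$; all pre-softmax scores coincide, so each row of the attention matrix is the uniform distribution over its set of attended positions — $1/k$ in the encoder case, and $1/i$ over the first $i$ entries in the decoder (causal-mask) case. In both cases $\vs_i = \sum_{j} a_{ij}\vvv_j = \vvv$, because every value vector equals the same $\vvv$ and a convex combination of copies of one vector is that vector; hence $\vs_i$ is independent of the position $i$ and of $k$. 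Applying the position-wise recursion $\vh^{(l)}_i = \text{ffn}_l\!\left(\vh^{(l-1)}_i + \mP_l(\vs_i)\right) = g_l(\vh^{(l-1)}_i, \vs_i)$ then produces the identical output vector in every row, still independent of $k$. The multi-head case is handled exactly as in the proof of \Cref{prop:nogams}: each head independently produces a $k$-independent, position-independent output, and these are summed (through the per-head projections) before the ffn, so the invariant is still preserved.

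Iterating the step from $l = 1$ to $L$ gives that $\mH^{(L)}$ has all rows equal to a fixed vector $\vh^{(L)}$ independent of $k$. In particular the classification-token embedding $\vh^{(L)}_r$ is this same vector regardless of whether $r = 1$ (encoder) or $r = \seqlen{\vt_k} = k$ (decoder), so $F(\vt_k) = \Delta\!\left(\text{cls}(\vh^{(L)}_r)\right)$ is a constant $c$ independent of $k$. But the additive (GAM) target satisfies $f_{\text{GAM}}(\vt_j) = f_{\text{GAM}}(\vt_{j-1}) + w_j(\tau)$ with $w_j(\tau) \neq 0$, so it takes different values on $\vt_{j-1}$ and $\vt_j$; this contradicts $F(\vt_{j-1}) = F(\vt_j) = c$. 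Hence no parametrization of the $L$-layer transformer represents the additive model.

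The main obstacle I anticipate is formulating the inductive invariant sharply enough that it survives all architectural components simultaneously — in particular, verifying that the causal mask in the decoder does not break ``the attention output equals the common value vector'' (it does not, since averaging copies of a single vector returns that vector, but this must be stated explicitly for each row), and that the skip connection, the projection $\mP_l$, and the position-wise $\text{ffn}_l$ act identically on each row so that equal input rows map to equal output rows. Once the invariant ``$\mH^{(l)}$ is rank-one with a $k$-independent generating row'' is shown to propagate, the contradiction follows immediately from the same bookkeeping already carried out in \Cref{prop:nogams}.
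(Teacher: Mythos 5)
Your proposal is correct and follows essentially the same route as the paper: the paper likewise proves a per-layer lemma stating that identical, length-independent input embeddings yield identical, length-independent output embeddings (covering the uniform $1/k$ encoder attention and the $1/i$ causal-mask decoder attention in exactly the way you describe), iterates it over the $L$ layers, and derives the same contradiction with $w_j(\tau)\neq 0$. The only cosmetic difference is that the paper phrases the invariant over the two-element set $\lbrace\vt_{j-1},\vt_j\rbrace$ rather than over all lengths $k=1,\ldots,C$, which changes nothing.
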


\begin{proof}
    We show the result by induction with the help of a lemma.
    
    \textit{Lemma: Suppose a set $S$ of sequences. If (1) for every sequence $\vt \in S$ the input matrix $ \mH^{(l)} = \lbrack\vh^{(l)}_1, \ldots,  \vh^{(l)}_\seqlen{\vt} \rbrack$  will consist of input embeddings that are identical for each token $i$, and (2) single input embeddings also have the same value for every sequence $\vt \in S$, in the output $\mH^{(l+1)}$ (1) the output embeddings will be identical for all tokens $i$  and (2) they will have equal value for all the sequences $\vt \in S$ considered before.}
    
    For the encoder-only architecture, the proof from \Cref{prop:nogams} holds analogously for each token output embedding (in the previous proof, we only considered the output embedding at the classification token $r$). Without restating the full proof the main steps consist of 
    \begin{itemize}
        \item considering same-token sequences of variable length
        \item showing the attention to be equally distributed across tokens, i.e., $\alpha_{ij}=1/\seqlen{\vt}$
        \item showing the value vectors $\vvv_i$ to be equal because they only depend on the input embeddings which are equal
        \item concluding that the output will be equal regardless of the number of inputs
    \end{itemize}
    This shows that for each sequence $\vt \in S$, the output at token $i$ remains constant. To show that all tokens $i$ result in the same output, we observe that the the only dependence of the input token to the output is through the query, which however is also equivalent if we have the same inputs.
    
    For the decoder-only architecture, for token $i$, the attention weights are taken only up to index $i$ resulting in a weight of $\frac{1}{i}$ for each previous token and a weight of $0$ (masked) for subsequent ones. However, with the sum also being equal to $1$ and the value vectors being equivalent, there is no difference in the outcome. This proves the lemma.

    Having shown this lemma, we consider a set S of two sequences $S=\lbrace\vt_{j-1}, \vt_{j}\rbrace$ where $\vk_{j-1}$ contains $j{-}1$ repetitions of token $\tau$ and $\vt_{j}$ contains $j$ repetitions of token $\tau.$ We chose $j\ge2, \tau$ such that $w_j(\tau) \neq 0$, which is possible by the conditions of the theorem. We observe that for $\mH^{(0)}$, the embeddings are equal for each token and their value is the same for both sequences.
    We then apply the lemma for layers $1,\ldots, L$, resulting in the output embeddings of $\mH^{(L)}$ being equal for each token, and most importantly identical for $\vt_{j-1}$ and $\vt_j$.
    As we perform the classification by $F(\vt) = \Delta\left(\text{cls}\left(\vh^{(L)}_r\right)\right)$, this output will also not change with the sequence length. This result can be used to construct the same contradiction as in the proof of \Cref{prop:nogams}.
\end{proof}

\subsection{Proof of \Cref{prop:trans_identify_slm}}
\label{sec:trans_identify_slm}
\begin{proposition}[Transformers can easily fit \SALO models] For any map $s$, $v$ and a transformer with an embedding size $d, d_h \geq 3$, there exists a parameterization of the transformer to reflect the \SALO model in \Cref{eqn:salo}. \label{prop:trans_identify_slm_app}
\end{proposition}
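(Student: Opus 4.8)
The plan is to give an explicit parametrization. Since the statement lets us pick the token embedding map $\ve:\vocab\to\mathbb{R}^{d}$ as well as every weight matrix of the single head, the projection $P$, the nonlinearity $\text{ffn}$, and the linear head $\text{cls}$, it suffices to design each piece so that the composed single-layer network computes $F(\vt)=\sum_{t_i\in\vt}\alpha_i(\vt)\,v(t_i)$ with $\alpha_i(\vt)=\exp(s(t_i))/\sum_{t_j\in\vt}\exp(s(t_j))$. The key observation — and the reason \SALO is shaped this way — is that the softmax \emph{inside} the attention head can be forced to coincide exactly with the \SALO softmax. I reserve three coordinates of the residual stream, which is where the hypotheses $d,d_h\ge 3$ are used: coordinate $1$ carries $s$, coordinate $2$ carries $v$ inside the token embeddings, and coordinate $3$ is a fresh ``scratch'' slot into which the head writes its output.

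First I make the head reproduce the \SALO weights. Set $\ve(\tau)=(s(\tau),v(\tau),0,\ldots,0)^{\top}$, take $\mW_Q=0$ and $\vb_Q=(\sqrt{d_k},0,\ldots,0)^{\top}$ so that \emph{every} query vector — in particular $\vq_r$ at the classification index $r$ — equals the constant $(\sqrt{d_k},0,\ldots,0)^{\top}$, and choose $\mW_K,\vb_K$ so that the key of token $j$ is $\vk_j=(s(t_j),0,\ldots)^{\top}$. Then the pre-softmax score of token $j$ is $\vq_r^{\top}\vk_j/\sqrt{d_k}=s(t_j)$, and the row-softmax returns $\alpha_{rj}=\exp(s(t_j))/\sum_k\exp(s(t_k))$, which is precisely the \SALO weight; this holds for the encoder ($r=1$) and, since the causal mask still lets the last token attend to every position, also for the decoder ($r=\seqlen{\vt}$).

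Next I route $v$ through the value path: choose $\mW_V,\vb_V$ with $\vvv_j=(0,0,v(t_j),0,\ldots)^{\top}$, so that $\vs_r=\sum_j\alpha_{rj}\vvv_j=(0,0,\sum_j\alpha_{rj}v(t_j),0,\ldots)^{\top}$, and let $P$ keep coordinate $3$ and kill the rest, giving $\vh^{(0)}_r+P(\vs_r)=(s(t_r),v(t_r),\sum_j\alpha_{rj}v(t_j),0,\ldots)^{\top}$. Finally pick $\text{ffn}$ to be a small ReLU network that zeroes coordinates $1,2$ and forwards coordinate $3$, realizing the signed identity on $a:=\sum_j\alpha_{rj}v(t_j)$ via $\text{ReLU}(a)-\text{ReLU}(-a)=a$, and a linear $\text{cls}$ head sending this coordinate to logits $(\tfrac12 a,-\tfrac12 a)$, so that $F(\vt)=\Delta(\text{cls}(\vh^{(1)}_r))=a=\sum_{t_i\in\vt}\alpha_i(\vt)\,v(t_i)$, exactly \Cref{eqn:salo}. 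The normalization constraint $\sum_{\tau\in\vocab}s(\tau)=\gamma$ is free: shifting $s\mapsto s+\delta$ changes no $\alpha_i(\vt)$, so we may assume the given $s$ already satisfies it.

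The main obstacle is the skip connection: the feed-forward block necessarily sees $\vh^{(0)}_r+P(\vs_r)$, so the residual embedding $\ve(t_r)$ is added onto the head's output and would contaminate the logits if stored carelessly. The fix is precisely the coordinate bookkeeping above — keep the embedding's $s$- and $v$-content in coordinates $1,2$, write the head's $v$-weighted average into the disjoint coordinate $3$, and let $\text{ffn}$ delete coordinates $1,2$ while passing $3$ through, both of which a ReLU MLP implements. (If one insists on a feed-forward block with its own internal residual, the same sub-network instead subtracts coordinates $1,2$; equally routine.) Everything else is a direct check that the chosen matrices produce the claimed query/key/value vectors and attention weights.
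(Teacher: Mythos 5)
Your construction is correct and is essentially the paper's own proof: reserve separate embedding coordinates for $s$ and $v$, force the pre-softmax attention score of key token $t_j$ to equal $s(t_j)$ so the attention softmax coincides with the \SALO softmax, carry $v$ through the value path, and read off the attended average with the classification head — and your query/key orientation is in fact the consistent one, since the paper's displayed formulas (identity query map, constant key) would literally produce a score $s(t_i)$ depending on the query rather than the needed $s(t_j)$. The only piece you omit is the routine extension to multiple heads and layers (zero out the extra projection operators $P$ so the additional heads contribute nothing and later layers act as the identity through the skip connections), which the paper spells out explicitly.
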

\begin{proof}
    We can prove the theorem by constructing a weight setup to reflect this map.
    We let the embedding $\ve(\tau)$ be given by 
    \begin{align}
    \ve(\tau) = \lbrack s(\tau), v(\tau), 0, 0, \ldots, 0 \rbrack.
    \end{align}
    We then set the key map matrix $K$ to be 
    \begin{align}
    \mW_k &= \mathbf{0} \\
    \vb_k &= \lbrack 1 , 0, \ldots, 0 \rbrack.
    \end{align}
    such that we have 
    \begin{align}
    \mW_k\ve(\tau) + \vb_k = \lbrack 1 , 0, \ldots, 0 \rbrack.
    \end{align}
    For the query map we can use
    \begin{align}
    \mW_q &= \bm{I} \\
    \vb_q &= \bm{0} 
    \end{align}
    such that 
    \begin{align}
    \mW_v\ve(\tau) + \vb_v = \lbrack s(\tau), v(\tau), 0, \ldots, 0 \rbrack.
    \end{align}
    This results in the non-normalized attention scores for query $\tau \in \vocab$ and key $\theta \in \vocab$
    \begin{align}
    a(t_i, t_j) =  (\mW_q\ve(t_i) + \vb_q)^\top(\mW_k\ve(t_j) + \vb_k) = s(t_j)
    \end{align}
    We see that regardless of the query token, the pre-softmax score will be $s(\theta)$.
    For the value scores, we perform a similar transform with 
    \begin{align}
    \mW_v &= \text{diag}\left(\lbrack0, 1, 0, \ldots, 0\rbrack\right) \\
    \vb_v &= \bm{0} 
    \end{align}
    such that 
    \begin{align}
    \vvv_i = \mW_v\ve(t_i) + \vb_v = \lbrack 0, 0, v(t_i), 0, \ldots, 0 \rbrack.
    \end{align}
    We then obtain
    \begin{align}
    \vs_r &= \sum_{t_i \in \vt} a_{ri}\vvv_i = \sum_{t_i \in \vt} \text{softmax}_i\lbrack s(t_1), \ldots, s(t_\seqlen{\vt})\rbrack\vvv_i\\
    & = \left[0, 0, \sum_{t_i \in \vt} \alpha_i(\vt)v(t_i), \ldots, 0 \right]^\top
    \end{align}
    We saw that the final output can be represented by 
    \begin{align}
    F(\vt) = \Delta\left(\text{cls}\left(\text{ffn}\left(\ve(t_0)+ P(\vs_r)\right) \right) \right)
    \end{align}
    The projection operator is linear, which can set to easily forward in input by setting $\mP\equiv \mI$.
    Due to the skip connection of the feed-forward part, we can easily transfer the second part through the first ffn part. In the classification part, we output the third component and zero by applying the final weight matrix 
    \begin{align}
    \mW_{class} = \begin{bmatrix}
    0 & 0 & 0 & \cdots & 0\\
    0 & 0 & 1 & \cdots & 0\\   
    \end{bmatrix}
    \end{align}
    and a bias vector of $\mathbf{0}$.
    
    \textbf{Multiple Heads.} Multiple heads can represent the pattern by choosing $P=\bm{I}$ for one head and choosing $P=\mathbf{0}$ for the other heads.
    
    \textbf{Multiple Layers.} We can extend the argument to multiple layers by showing that the input vectors can just be forwarded by the transformer. This is simple by setting $\mP\equiv\mathbf{0}$, the null-map, which can be represented by a linear operator. We then use the same classification hat as before.
\end{proof}

\subsection{Proof of \Cref{prop:slalom_identifiability}}
\label{sec:slalom_identifiability}
\begin{proposition}[Proposition 5.2. in the main paper]
\label{prop:slalom_identifiability_app}
Suppose query access to a model $G$ that takes sequences of tokens $\vt$ with lengths $ \lvert \vt\rvert \in 1,\ldots, C$ and returns the log odds according to a non-constant \SALO on a vocabulary $\vocab$ with unknown parameter maps $s: \vocab \rightarrow \mathbb{R}$, $v: \vocab \rightarrow \mathbb{R}$. For $C\ge 2$, we can recover the true maps $s$, $v$ with $2\lvert \vocab \rvert -1$ queries (forward passes) of $F$.
\end{proposition}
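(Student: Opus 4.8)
The plan is to recover the two maps in stages: first the value map $v$ from single-token queries, then the importance map $s$ (up to an additive constant) from a small batch of two-token queries, and finally the remaining offset from the normalization constraint. Throughout I use that $C \ge 2$ so that two-token sequences are admissible inputs.

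First, for every $\tau \in \vocab$ I would query $G([\tau])$. A length-one sequence puts all attention mass on its single token, so $\alpha_1(\vt) = 1$ and \eqref{eqn:salo} gives $G([\tau]) = v(\tau)$; this recovers $v$ exactly with $\lvert \vocab \rvert$ forward passes. Because the \SALO is non-constant, $v$ cannot itself be constant (if it were, $\sum_i \alpha_i(\vt) v(t_i)$ would equal that constant for every $\vt$), so I can fix two tokens $\tau_1, \tau_2 \in \vocab$ with $v(\tau_1) \neq v(\tau_2)$.

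Next, for a two-token sequence $[\rho,\omega]$ the model outputs
\begin{equation}
G([\rho,\omega]) = \frac{v(\rho) + r\,v(\omega)}{1+r}, \qquad r := \exp\big(s(\omega) - s(\rho)\big) > 0,
\end{equation}
which is a strict convex combination of the (already known) values $v(\rho)$ and $v(\omega)$. Whenever $v(\rho) \neq v(\omega)$, this equation is linear in $r$ and solvable: $r = \big(v(\rho) - G([\rho,\omega])\big)\big/\big(G([\rho,\omega]) - v(\omega)\big)$, with both numerator and denominator of the same sign (since $G([\rho,\omega])$ lies strictly between the two values), hence $r$ is well-defined and positive and $s(\omega) - s(\rho) = \log r$. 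For each $\omega \in \vocab \setminus \{\tau_1\}$ I would select a reference $\rho(\omega) \in \{\tau_1,\tau_2\}$ with $v(\rho(\omega)) \neq v(\omega)$ — at least one of the two qualifies because $v(\tau_1)\neq v(\tau_2)$ — and query $G([\rho(\omega),\omega])$, for $\lvert\vocab\rvert-1$ further passes; note $\omega=\tau_2$ uses reference $\tau_1$, so this batch also contains the value $s(\tau_2)-s(\tau_1)$. Converting those few differences measured against $\tau_2$ into differences against $\tau_1$ via $s(\tau_2)-s(\tau_1)$, I obtain $s(\omega)-s(\tau_1)$ for every $\omega\in\vocab$. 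Finally, summing over $\omega$ and imposing the normalization constraint $\sum_{\omega\in\vocab} s(\omega) = \gamma$ yields a single linear equation for $s(\tau_1)$, which then fixes all of $s$. The total query count is $\lvert\vocab\rvert + (\lvert\vocab\rvert-1) = 2\lvert\vocab\rvert - 1$.

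The main obstacle I anticipate is the degenerate case in which the obvious reference token happens to share its value with $\omega$: then the two-token equation degenerates to an identity and carries no information about $s(\omega)$. This is exactly the place where non-constancy of the \SALO is used — it supplies a second anchor with a distinct value — and it dictates the pairing scheme above, which must be allowed to switch between $\tau_1$ and $\tau_2$ rather than fix one anchor. A minor remaining point is to argue the recovered pair is genuinely the model's parametrization: $v$ is pinned exactly, $s$ is pinned up to the shift ambiguity of the softmax, and the normalization constraint removes that last degree of freedom, so the reconstruction is unique.
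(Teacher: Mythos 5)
Your proposal is correct and follows essentially the same route as the paper's proof: single-token queries pin down $v$, a two-anchor reference scheme (switching between two tokens of distinct value to avoid the degenerate equal-value case) yields the pairwise differences $s(\omega)-s(\tau_1)$ from $\lvert\vocab\rvert-1$ two-token queries, and the normalization constraint removes the softmax shift ambiguity, for $2\lvert\vocab\rvert-1$ passes in total. The paper's version is organized around a primary reference $\theta$ and fallback $\hat\theta$ rather than your $\tau_1,\tau_2$, but the case analysis and the algebra recovering $s(\omega)-s(\rho)$ from the convex-combination identity are the same.
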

\begin{proof}
We first compute $G([\tau]), \forall \tau \in \mathcal{V}$.
We know that for single token sequences, all attention is on one token, i.e., $(\alpha_i =1)$ and we thus have 
\begin{align}
    G([\tau]) = v(\tau)
\end{align}
We have obtained the values scores $v$ for each token through $\lvert \vocab \rvert$ forward passes.
To identify the token importance scores $s$, we consider token sequences of length $2$. 

We first note that if the \SALO is non-constant and $\lvert\vocab\rvert > 1$, for every token $\tau \in \vocab$, we can find another token $\theta$ for which $v(\tau) \neq v(\theta)$. This can be seen by contradiction: If this would not be the case, i.e., we cannot find a token $\omega$ with a different value $v(\omega)$, all tokens have the same value and the \SALO would have to be constant. For $\lvert\vocab\rvert = 1$, \SALO is always constant and does not fall under the conditions of the theorem.

We now select an arbitrary reference token $\theta \in \vocab$. We select another token $\hat{\theta}$ for which $v(\hat{\theta}) \neq v(\theta)$. By the previous argument such a token always exists if the \SALO is non-constant. We now compute relative importances w.r.t. $\theta$ that we refer to as $\eta_\theta$. We let $\eta_\theta(\tau)=s(\tau)-s(\theta)$ denote the difference of the importance between the importance scores of tokens $\tau, \theta \in \vocab$. We set $\eta_\theta(\theta)=0$ 

We start with selecting token $\tau = \hat{\theta}$ and subsequently use each other token $\tau \neq \theta$ to perform the following steps \textbf{for each $\tau \neq \theta$}:

\textbf{1. Identify reference token $\hat{\tau}$.}
We now have to differentiate two cases: 
If $v(\tau) = v(\theta)$, we select $\hat{\tau} = \hat{\theta}$ as the reference token.
If $v(\tau) \neq v(\theta)$, we select $\hat{\tau} = \theta$ as the reference token.
By doing so, we will always have $v(\hat{\tau}) \neq v(\tau)$.

\textbf{2. Compute $G([\tau, \hat{\tau}])$.}
We now compute $G([\tau, \hat{\tau} ])$. From the model's definition, we know that
\begin{align}
    G([\tau, \hat{\tau}]) = &\frac{\exp(s(\tau))}{\exp(s(\tau)) +\exp(s(\hat{\tau}))}v(\tau) \\&+ \frac{\exp(s(\hat{\tau}))}{\exp(s(\tau))+\exp(s(\hat{\tau}))}v(\hat{\tau})\\
    =& \frac{\exp(s(\tau))}{\exp(s(\tau)) +\exp(s(\hat{\tau}))}G([\tau]) \\
    &+ \frac{\exp(s(\hat{\tau}))}{\exp(s(\tau))+\exp(s(\theta))}G([\hat{\tau}])\\
    =& \frac{\exp(s(\tau))}{\exp(s(\tau)) +\exp(s(\hat{\tau}))}G([\tau])) \\
    &+ \left(1-\frac{\exp(s(\tau))}{\exp(s(\tau)) +\exp(s(\hat{\tau}))}\right)G([\hat{\tau}])
\end{align}
which we can rearrange to
\begin{align}
G([\tau, \hat{\tau}])-G([\hat{\tau}]) = 
\frac{\exp(s(\tau))}{\exp(s(\tau))+\exp(s(\hat{\tau}))}\left(G([\tau])-G([\hat{\tau}])\right)
\end{align}
and finally to 
\begin{align}
\frac{\exp(s(\tau))}{\exp(s(\tau))+\exp(s(\hat{\tau}))}=\frac{G([\tau, \hat{\tau}])-G([\hat{\tau}]) }{G([\tau])-G([\hat{\tau}])} \coloneqq g(\tau, \hat{\tau})
\end{align}
and
\begin{align}
    \frac{1}{1+\frac{\exp(s(\hat{\tau}))}{\exp(s(\tau))}} &= g(\tau, \hat{\tau})\\
    \Leftrightarrow \frac{1}{g(\tau, \hat{\tau})} &= 1+\frac{\exp(s(\hat{\tau}))}{\exp(s(\tau))} \\
    \Leftrightarrow \log\left(\frac{1}{g(\tau, \hat{\tau})}-1\right) &= s(\tau)-s(\hat{\tau}) \coloneqq d(\tau, \hat{\tau})
\end{align}
This allows us to express the importance of every token $\tau \in \mathcal{V}$ relative to the base token $\hat{\tau}$. 

\textbf{3. Compute importance relative to $\theta$, i.e., $\eta_\theta(\tau)$.}
In case we selected $\hat{\tau} =\theta$, we set $\eta_\theta(\tau) = d(\tau, \hat{\tau}) = s(\tau)-s(\theta)$.
In case we selected $\hat{\tau} =\hat{\theta}$, we set 
\begin{align}
\eta_\theta(\tau) = d(\tau, \hat{\tau}) - d(\hat{\theta}, \theta) = s(\tau)-s(\hat{\theta}) + (s(\hat{\theta})-s(\theta)) = s(\tau) -s(\theta)
\end{align}
The value of $d(\hat{\theta}, \theta)$ is already known from the first iteration of the loop, where we consider $\tau=\hat{\theta}$ (and needs to be computed only once).

Having obtained a value of $\eta_\theta(\tau)$ for each token $\tau\neq \theta$, with $\lvert \vocab \rvert-1$ forward passes, we can then use the normalization in constraint to solve for $s(\theta)$ as in 
\begin{align}
    \sum_{\tau\in \mathcal{V}}\left(\eta_\theta(\tau) + s(\theta)\right) =^{!} 0
\end{align}
such that we obtain 
\begin{align}
s(\theta) = \frac{\sum_{\tau\in \mathcal{V}}\eta_\theta(\tau)}{|\mathcal{V}|}
\end{align}
We can plug this back in to obtain the values for all token importance scores $s(\tau) = s(\theta) + \eta_\theta(\tau)$. We have thus computed the maps $s$ and $v$ in $2\lvert \vocab \rvert-1$ forward passes, which completes the proof.
\end{proof}

\subsection{Relating \SALO to other attribution techniques.}
\label{sec:relatingslalom}

\textbf{Local Linear Attribution Scores.}
We can consider the following weighted model:
\begin{align}
F(\vlambda) = \frac{\sum_{t_i\in \vt}\lambda_i\exp(s(t_i))v(t_i)}{\sum_{t_i\in \vt}\lambda_i\exp(s(t_i))}
\end{align}
where $\lambda_i= 1$ if a token is present and $\lambda_i = 0$ if it is absent. We observe that setting $\lambda_i=0$ has the desired effect of making the output of the weighted model equivalent to that of the unweighted \SALO on a sequence without this token. 

Taking the derivative at $\vlambda=\mathbf{1}$ results in 
\begin{align}
    \frac{\partial F}{\partial \lambda_i} &= \frac{\exp(s(t_i))v(t_i)\left(\sum_{t_j\in \vt, j\neq i}\lambda_j\exp(s(t_j))\right)}{\left(\sum_{t_j\in \vt}\lambda_j\exp(s(t_j))\right)^2}\\
    -&\frac{\exp(s(t_i))\left(\sum_{t_j\in \vt, j \neq i}\lambda_j\exp(s(t_j))v(t_j)\right)}{\left(\sum_{t_j\in \vt}\lambda_j\exp(s(t_j))\right)^2}
\end{align}
Plugging in $\vlambda=\mathbf{1}$, and using $\alpha_i(\vt)=\frac{\exp(s(t_i))}{\sum_{t_j\in \vt}\lambda_j\exp(s(t_j))}$ we obtain 
\begin{align}
    \left.\frac{\partial F}{\partial \lambda_i}  \right\rvert_{\vlambda=\mathbf{1}}&=\alpha_i \left( v(t_i)(1-\alpha_i(\vt)) - (F(\mathbf{1})-\alpha_iv(t_i))\right)\\
    &=\alpha_i\left(( v(t_i)-\alpha_i v(t_i))) - (F(\mathbf{1})-\alpha_iv(t_i))\right)\\
    &=\alpha_i\left(v(t_i)-F(\mathbf{1})\right)
\end{align}
Noting that $\alpha_i = \frac{\exp(s(t_i))}{R}$, where $R$ and $F(\mathbf{1})$ are independent of $i$, we obtain 
\begin{align}\left.\frac{\partial F}{\partial \lambda_i}  \right\rvert_{\vlambda=\mathbf{1}} \propto v(t_i)\exp(s(t_i)),
\end{align}which can be used to rank tokens according the locally linear attributions. We refer to this expression as linearized SLALOM scores (``lin'').

\textbf{Shapley Values.} We can convert \SALO scores to Shapley values $\phi(i)$ using the explicit formula:
\begin{align}
\phi(i) = \frac{1}{n}\sum_{S \subset [N] \setminus \lbrace i \rbrace} \binom{n-1}{|S|}(F(S \cup \lbrace i\rbrace) -F(S))
\end{align}
\begin{align}
&= \frac{1}{n}\sum_{S \subset [N] \setminus \lbrace i \rbrace} \binom{n-1}{|S|}\left(F(S \cup \lbrace i \rbrace)- \frac{F(S \cup \lbrace i \rbrace)-\alpha_i v_i}{1-\alpha_i}\right)\\
&= \frac{1}{n}\sum_{S \subset [N] \setminus \lbrace i \rbrace} \binom{n-1}{|S|}\left(\frac{\alpha_i (v_i-F(S \cup \lbrace i \rbrace))}{1-\alpha_i}\right)
\end{align}

\begin{align}
    \left(\frac{\alpha_i (v_i-F(\mathbf{1}))}{1-\alpha_i}\right)
\end{align}
However, computing this sum remains usually intractable, as the number of coalitions grows exponentially. We can resort to common sampling approaches \citep{castro2009polynomial, maleki2013bounding} to approximate the sum.

\section{Additional Discussion and Intuition}
\subsection{Generalization to multi-class problems}
\label{sec:generalization}
We can imagine the following generalizing \SALO to multi class problems as follows: 
We keep an importance map $s: \vocab \rightarrow \mathbb{R}$ that still maps each token to an importance score as previously. However, we now introduce a value score map $v_c: \vocab \rightarrow \mathbb{R}$ for each class $c \in \spaceY$.
Additionally to requiring 
\begin{align}
    \sum_{\tau \in \mathcal{V}} s(\tau) = 0.
\end{align}
we now require
\begin{align}\
    \sum_{c \in \spaceY} v_c(\tau) = 0, \forall \tau \in \vocab
\end{align}
For an input sequence $\vt$, the SLALOM model then computes
\begin{align}
    F_c(\vt) = \log\frac{p(y=1|\vt)}{p(y=0|\vt)} = \sum_{\tau_i \in \vt} \alpha_i(\vt)v_c(t_i),
\end{align}
The posterior probabilities can be computed by performing a softmax operation over the $F$-scores, as in 
\begin{align}
    p(y=c|\vt) = \frac{\exp(F_c(\vt))}{\sum_{c^\prime \in \spaceY} \exp(F_{c^\prime}(\vt))}
\end{align}
We observe that this model has $\big(\lvert\spaceY\rvert-1\big)\lvert \vocab \rvert-1$ free parameters (for the two-class problem, this yields $2\lvert \vocab \rvert-1$ as before) and can be fitted and deployed as the two-class \SALO without major ramifications.

\ifarxiv
\else
\subsection{Practical Considerations}
\label{sec:practicalconsideration}
Our theoretical model contains slight deviations from real-world transformers to make it amendable to theoretical analysis.
To represent token order, common architectures use positional embeddings, tying the embedding vectors to the token position $i$. The behavior that we show in this work's analysis does however also govern transformers with positional embeddings for the following reason: While the positional embeddings could be used by the non-linear ffn part to differentiate sequences of different length in theory, our proofs show that to represent the linear model, the softmax operation must be inverted for any input sequence. This is a highly nonlinear operation and the number of possible sequences grows exponentially at a rate of $\lvert\vocab\rvert^C$ with the context length $C$. Learning-theoretic considerations (e.g., \citealp{bartlett1998almost}) show that the number of input-output pairs the two-layer networks deployed can maximally represent is bounded by $\mathcal{O}\left(d n_{\text{hidden}}\log(d n_{\text{hidden}})\right)$, which is small ($d{=}786, n_{\text{hidden}}{=}3072$ for BERT) in contrast to the number of sequences ($C=1024$, $\lvert\vocab\rvert\approx3\times 10^4$). We conclude that the inversion is therefore impossible for realistic setups and positional embeddings can be neglected, which is confirmed by our empirical findings.

Common models such as BERT also use a special token referred to as \texttt{CLS}-token where the classification head is placed on. In this work, we consider the \texttt{CLS} token just as a standard token in our analysis. In our empirical sections, we always append the \texttt{CLS} token as mandated by the architecture to make the sequences valid model inputs.
\fi
\section{Algorithm: Local \SALO approximation}
\label{sec:algo_localslalom}

We propose two algorithms to compute local explanations for a sequence $\vt=\lbrack t_1, \ldots, t_\seqlen{\vt}\rbrack$ with SLALOM scores. In particular, we use the Mean-Squared-Error (MSE) to fit \SALO on modified sequences consisting of the individual tokens in the original sequence $\vt$. To speed up the fitting we can sample a large collection of samples offline before the optimization.
\subsection{Efficiently fitting \SALO with SGD}
For the efficient implementation \SALO-\texttt{eff} given in \Cref{alg:slalomfit} we sample minibatches from this collection in each step and perform SGD steps on them. We perform this optimization using $b=5000$ samples in this work. We use sequences of $n=2$ random tokens from the sample for \SALO-\texttt{eff}, making the forward passes through the model highly efficient.

\subsection{Fitting \SALO through iterative optimization}
For the high-fidelity implementation \SALO-\texttt{fidel} (\Cref{alg:slalomfitfidel}) we first use a different set of sequences and model scores to fit the surrogate: We delete up to $5$ tokens from the original sequence randomly to create the estimation dataset (similar to LIME).
The fitting algorithm optimized for maximum fidelity uses an iterative optimization scheme to fit SLALOM models. It works by iteratively fitting $\vvv$ and $\vs$ to the dataset obtained. Denote by $\vf \in \mathbb{R}^b$ the model scores obtained for the $b$ input sequences $\vt_i, i=1...b$. As the \SALO model in \Cref{eqn:salo} is a linear combination of the values score weighted by the normalized importance score, we can set up a matrix $\mA$, where element $\va_{i,j} = \frac{\exp(s(t_i))}{\sum_{t_j \in \vt_i}\exp(s(t_j))}$ provides the normalized importance for a given $\vs$.
We solve 
\begin{align}
    \min_\vvv (\mA\vvv-\vf)^\top(\mA\vvv-\vf),
\end{align}
for $\vvv$, which is a linear ordinary least squared problem that can be solved through the normal equation. This results in the optimal $\vvv$ for the given $\vs$. In a second step, we keep $\vvv$ fixed and find better $\vs$ scores.
We can reformulate the equations for the samples as
\begin{align}
\sum_{t_j \in \vt_i} \exp(s(t_j))v(t_j) = \left(\sum_{t_j \in \vt_i} \exp(s(t_j))\right)\vf_i\\
\Leftrightarrow \sum_{t_j \in \vt_i} \underset{\bar{s}_j}{\underbrace{\exp(s(t_j))}}\underset{e_{i,j}}{\underbrace{(v(t_j)-\vf_i)}} = 0.
\end{align}
This problem can be written with a vector $\bar{\vs} \in \mathbb{R}^{|\mathcal{V}|}$ and a matrix $\mE \in \mathbb{R}^{b  \times |\mathcal{V}|}$ and results in an optimization problem
\begin{align}
    \min_{\bar{\vs}}& (\mE\bar{\vs})^\top(\mE\bar{\vs}),\\
    \text{s.t.~}& \hat{\vs} \geq \mathbf{0}\\
     &\lVert \hat{\vs}\rVert_1 \geq |\mathcal{V}|
\end{align}
The conditions ensure that we can obtain the original $\vs$-scores as $\log \hat{\vs}$ (element-wise) and that the trivial solution $\hat{\vs}=\mathbf{0}$ is not assumed. We solve this problem using a solver implemented in  \texttt{scipy.optimize.least\_squares}.

\begin{algorithm}[h]
\begin{algorithmic}
\REQUIRE Sequence $\vt$, trained model $F$ (outputs log odds), random sample length $n$, learning rate $\lambda$, batch size $r$, sample pool size $b$, number of steps $c$
\STATE Initialize $v(t_i) = 0$, $s(t_i)=0~~\forall~\text{unique}~t_i \in \vt$ 
\STATE $B \leftarrow $ $b$ samples of random sequences of length $n$ obtained through uniform sampling of unique tokens in $\vt$.
\STATE Precompute $F(B[i]), i=1,\ldots, b$~~ \textit{\# perform model forward-pass for each sample in pool}
\STATE $\text{steps} \leftarrow 0$
\WHILE{$\text{steps} < c$}
\STATE $B' \leftarrow $ minibatch of $r$ samples uniformly sampled from the sample pool $B$
\STATE $\text{loss} \leftarrow \frac{1}{r}\sum_{k=1}^r \left(F(B'\lbrack k \rbrack) -\text{\SALO}_{v,s}(B'[k])\right)^2$ \textit{\# compute MSE btw. $F$ and SLALOM using precomputed models outputs $F(B')$}
\STATE $v \leftarrow v - \lambda\nabla_v \text{loss}$ \textit{\# Back-propagate loss to update \SALO parameters}
\STATE $s \leftarrow s - \lambda\nabla_s \text{loss}$
\STATE $\text{steps} \leftarrow \text{steps} + 1$
\ENDWHILE
\STATE \textbf{return} $v$, $s-\text{mean}(s)$ \textit{\# normalize s to zero-mean}
\end{algorithmic}
\caption{Local efficient \SALO approximation (\SALO-\texttt{eff})\label{alg:slalomfit}}
\end{algorithm}

\begin{algorithm}[h]
\begin{algorithmic}
\REQUIRE Sequence $\vt$, trained model $F$ (outputs log odds), max. number of deletions $n$, learning rate $\lambda$, batch size $r$, sample pool size $b$, number of steps $c$
\STATE Initialize $\vs = 0$, $\vs=0~~\forall~\text{unique}~t_i \in \vt$ 
\STATE $B \leftarrow $ $b$ samples of random sequences of length $n$ obtained through deleting up to $n$ tokens randomly from $\vt$.
\STATE Precompute $F(B[i]), i=1,\ldots, b$~~ \textit{\# perform model forward-pass for each sample in pool}
\STATE $\text{steps} \leftarrow 0$
\WHILE{$\text{steps} < c$}
\STATE $\vvv= \argmin_{\vvv^\prime} \sum_{i=1}^b\left(F(B\lbrack i \rbrack) -\text{\SALO}_{\vvv^\prime,\vs}(B\lbrack i \rbrack)\right)^2$  \textit{\# Fix $\vs$ and optimize $\vvv$, OLS problem}
\STATE $\vs= \argmin_{\vs^\prime} \sum_{i=1}^b\left(F(B\lbrack i \rbrack) -\text{\SALO}_{\vvv,\vs^\prime}(B\lbrack i \rbrack)\right)^2$  \textit{\# Fix $\vvv$ and optimize $\vs$, Quadratic problem}
\STATE $\text{steps} \leftarrow \text{steps} + 1$
\ENDWHILE
\STATE \textbf{return} $\vvv$, $\vs-\text{mean}(\vs)$ \textit{\# normalize s to zero-mean}
\end{algorithmic}
\caption{Local high-fidelity \SALO approximation (\SALO-\texttt{fidel}) \label{alg:slalomfitfidel}}
\end{algorithm}
\section{Experimental Details}
\label{sec:experimentaldetails}
In this section, we provide details on the experimental setups. We provide the full source-code in our GitHub repository.
\subsection{Fitting transformers on a synthetic dataset}
\subsubsection{Dataset construction: Linear Dataset}
\label{sec:datasetconstruct}
We create a synthetic dataset to ensure a  linear relationship between features and log odds. Before sampling the dataset, we fix a vocabulary of tokens, ground truth scores for each token, and their occurrence probability. This means that each of the possible tokens already comes with a ground-truth score $w$ that has been manually assigned. The tokens, their respective scores $w$, and occurrence probabilities are listed in \Cref{tab:lineardataset}.
Samples of the dataset are sampled in four steps that are exectued repeatedly for each sample:
\begin{enumerate}
    \item A sequence length $\seqlen{\vt} \sim \text{Bin}(p=0.5, n{=}30)$ is binomially distributed with an expected value of 15 tokens and a maximum of 30 tokens
    \item We sample $\lvert \vt \rvert$ tokens independently from the vocabulary according to their occurrence probability (\Cref{tab:lineardataset})
    \item Third, having obtained the input sequence, we can evaluate the linear model by summing up the scores of the individual tokens in a sequence:
\begin{align}
   F(\vt) = F(\lbrack t_1, t_2, \ldots, t_\seqlen{\vt}  \rbrack) = \sum_{i=1}^{\lvert \vt \rvert} w (t_i).
\end{align}
    \item Having obtained the log odds ratio for this sample F(t), we sample the labels according to this ratio. We have $p(y=1)/p(y=0) = \exp(F(\vt))$, which can be rearranged to $p(y=1) = \frac{\exp(F(\vt))}{1+\exp(F(\vt))}$. We sample a binary label $y$ for each sample according to this probability.
\end{enumerate} The tokens appear independently with the probability $p_{\text{occurrence}}$ given in the table.
\begin{table}[]
    \centering
    \scalebox{0.8}{
    \begin{tabular}{r*{10}{c}}
         word & ``the'' & ``we'' & ``movie'' & ``watch'' & ``good'' &  
         ``best'' & ``perfect'' & ``ok'' & ``bad'' & ``worst''  \\ \midrule
         linear weight $w$ & 0.0 & 0.0 & 0.0 & 0.0 & 0.6 & 1.0 & 1.5 & -0.6 & -1.0 & -1.5\\
         $p_{\text{occurrence}}$ & $1/6$ & $1/6$ & $1/6$ & $1/6$ & $1/15$ & $1/20$ & $1/20$ & $1/15$ & $1/20$ & $1/20$\\
         \bottomrule
    \end{tabular}}
    \caption{Tokens in the linear dataset with their corresponding weight}
    \label{tab:lineardataset}
\end{table}

\subsubsection{Dataset Construction: SLALOM dataset}
We resort to a second synthetic dataset to study the recovery property for the \SALO. This required a \SALO relation between the data features and the labels. To find a realistic distribution of scores, we compute a BoW importance scores for input tokens of the BERT model on the IMDB dataset by counting the class-wise occurrence probabilities. We select $200$ tokens randomly from this dataset. We use these scores as value scores $v$ but multiply them by a factors of $2$ as many words have very small BoW importances. In realistic datasets, we observed that value scores $v$ are correlated with the importance scores $s$. Therefore, we sample
\begin{align}
s(\tau) \sim 5\left(v(\tau)^\frac{3}{2}\right) +\frac{1}{2}\mathcal{N}\left(0, 1\right),
\end{align}
which results in the value/importance distribution given in \Cref{fig:slalomanalytical}. We assign each word an equal occurrence probability and sample sequences of words at uniformly distributed lengths in random $\lbrack1, 30 \rbrack$. After a sequence is sampled, labels are subsequently sampled according to the log odds ratio of the \SALO.

\subsection{Post-hoc fitting of surrogate models} 

We train the models on this dataset for $5$ epochs, where one epoch contains $5000$ samples at batch size of $20$ using default parameters otherwise.

For the results in \Cref{fig:nolinearmodels}, we query the models with sequences that contain growing numbers of the work perfect, i.e. [``perfect'', ``perfect'', ...]. We prepend a CLS token for the BERT models.

For the results in \Cref{fig:recoveringslalom}(a,b), we then sample $10000$ new samples from the linear synthetic dataset (having the same distribution as the training samples) and forward them through the trained transformers. The model log odds score together with the feature vectors are used to train the different surrogate models, linear model, GAM, and \SALO.
For the linear model, we fit an OLS on the log odds returned by the model. We use the word counts for each of the 10 tokens as a feature vector. The GAM provides the possibility to assign each token a different weight according to its position in the sequence. To this end, we use a different feature vector of length $30 \cdot 10$. Each feature corresponds to a token and a position and is set to one if the token $i$ is present at this position, and set to $0$ otherwise. We then fit a linear model using regularized (LASSO) least squares with a small regularization constant of $\lambda = 0.01$ because the system is numerically unstable in its unregularized form.

\subsection{Recovering SLALOMs from observations}
To obtain the results in \Cref{fig:recoveringslalom}(c,d), we train the transformer models on $20\cdot20000$ samples from the second synthetic dataset (\SALO dataset). When using a smaller vocabulary size, we only sample the sequences out of the first $\lvert \vocab \rvert$ possible tokens (but keeping the ground truth scores identical).

\begin{figure}
    \centering
\includegraphics[width=0.28\linewidth]{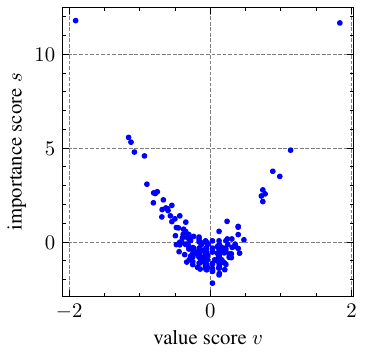}
    \caption{Score distribution for the tokens for the analytical \SALO used in the recovery experiment.}
    \label{fig:slalomanalytical}
\end{figure}

\subsection{Training Details for real-world data experiments}
\textbf{Training details.} In these experiments, we use the IMDB \citep{maas2011learning} and Yelp \citep{asghar2016yelp} datasets to train transformer models on. Specifically, the results in \Cref{tab:comparenbbow} are obtained by training 2-layer versions of BERT, DistilBERT and GPT-2 with on $5000$ samples from the IMDB dataset for $5$ epochs, respectively. We did not observe significant variation in terms of number of layers, so we stick to the simpler models for the final experiments. For the experiments in \Cref{tab:compareyelphat} we train and use $6$-layer versions of the above models for $5$ epochs on $5000$ samples of the Yelp dataset. We report the accuracies of these models in \Cref{tab:modelaccuracies} and additional hyperparameters in \Cref{tab:hardware}.

\textbf{\SALO vs. Na\"ive Bayes Ground Truth.} To arrive at the Spearman rank-correlations between \SALO importance scores $s$, value scores $v$ and their combination $(\exp(s)\cdot v)$ with a ground truth, we fit \SALO on each of the trained models and use a Na\"ive-Bayes model for ground truth scores. The model is given as follows:
\begin{align}
    \log \frac{p(y=1|\vt)}{p(y=0|\vt)} = \log \frac{p(y=1)}{p(y=0)} + \sum_{t_i \in \vt} \log \frac{p(t_i|y=1)}{p(t_i|y=0)}
\end{align}
We obtain $\frac{p(t_i|y=1)}{p(t_i|y=0)}$ by counting class-wise word frequencies, such that we obtain a linear score $w$ for each token $\tau$ given by $w(\tau) = \frac{(\# \text{occ. of}~\tau~\text{in class 1})+\alpha}{(\# \text{occ. of}~\tau~\text{in class 0}) +\alpha}$. We use Laplace smoothing with $\alpha=40$.
The final correlations are computed over a set of $50$ random samples, where we observe good agreement between the Na\"ive Bayes scores, and the value and linearized SLALOM scores, respectively. Note that the importance scores are considered unsigned, such that we compute their correlation with the absolute value of the Naive Bayes scores.

\textbf{SLALOM vs. Human Attention.} The Yelp Human Attention (HAT) \citep{sen2020human} dataset consists of samples from the original Yelp review dataset, where for each review real human annotators have been asked to select the words they deem important for the underlying class (i.e. positive or negative). This results in binary attention vectors, where each word either is or is not important according to the annotators. Since each sample is processed by multiple annotators, we use the consensus attention map as defined in \citet{sen2020human}, requiring agreement between annotators for a token to be considered important to aggregate them into one attention vector per sample. Since HAT, unlike SLALOM, operates on a word level, we map each word's human attention to each of its tokens (according to the employed model's specific tokenizer).\\
To compare \SALO scores with human attention in \Cref{tab:compareyelphat}, we choose the AU-ROC metric, where the binary human attention serves as the correct class, and the \SALO scores as the prediction. We observe how especially the importance scores of \SALO are reasonably powerful in predicting human attention. Note that the human attention scores are unsigned, such that we also use absolute values for the \SALO value scores and the linearized version of the \SALO scores for the HAT prediction.

\begin{table}[]
    \centering
    \begin{tabular}{lccc}
    \toprule
     dataset & DistilBERT & BERT & GPT-2 \\
    \midrule
IMDB & 0.88 & 0.90 & 0.74 \\
Yelp & 0.86 & 0.88 & 0.88 \\

\bottomrule
    \end{tabular}
    \caption{\textbf{Accuracies of models used in this paper.} For IMDB ($|trainset|=5000$), we use 2-layer versions of the models. For Yelp, ($|trainset|=5000$), we use 6-layer versions of the models. For both datasets, the $|testset|=100$. The models are trained for $5$ epochs after which we found the accuracy of the model on the test set to have converged. \label{tab:modelaccuracies}}
\end{table}

\begin{table}[]
\begin{subtable}[b]{0.48\linewidth}
    \centering
    \begin{tabular}{lc}
    \toprule
     parameter & value \\
    \midrule
    learning rate & 5e-5 \\
    batch size & 5 \\
    epochs & 5 \\
    dataset size used & 5000 \\
    number of heads & 12 \\
    number of layers & 2 (IMDB), 6 (Yelp)\\
    num. parameter & 31M - 124M \\
    \bottomrule
    \end{tabular}
    \caption{Hyperparameters \label{tab:Hyperparameters}}
\end{subtable}
\begin{subtable}[b]{0.48\linewidth}
    \centering
    \begin{tabular}{lc}
    \toprule
    specification & value \\
    \midrule
    CPU core: & AMD EPYC 7763 \\
    Num. CPU cores & 64-Core (128 threads) \\
    GPU type used & 1xNvidia A100 \\
    GPU-RAM & 80GB \\
    Compute-Hours & $\approx$ 150 h\\
    \bottomrule
    \end{tabular}
    \caption{Hardware used (internal cluster)}
\end{subtable}
\caption{Overview over relevant hyperparameters and hardware\label{tab:hardware}}
\end{table}

\section {Additional Experimental Results}

\subsection{Fitting SLALOM as a Surrogate to Transformer outputs}
\label{sec:describingtransformers}
We provide an additional emprical counterexample for why GAMs cannot describe the transformer output in \Cref{fig:counterexample}. The example provides additional intuition for why the GAM is insufficient to describe transformers acting like a \emph{weighted} sum of token importances. 

We report Mean-Squared Errors when fitting SLALOM to transformer models trained on the linear dataset in \Cref{tab:surrogatelinearerrors}. These results  underline that \SALO outperforms linear and additive models when fitting them to the transformer outputs. Note that even if the original relation in the data was linear, the transformer does not represent this relation such that the \SALO describes its output better. We present additional qualitative results for other models in \Cref{fig:slalomposthoc_app} that support the same conclusion.

\begin{table}[]
    \centering
    \begin{tabular}{rlccc}
    \toprule
    architecture & $L$ (num.layers) & linear model & GAM & \SALO \\
    \midrule
    GPT-2 & 1 & \wstd{20.31}{2.02} & \wstd{48.78}{2.70} & \bwstd{16.92}{1.33} \\
GPT-2 & 2 & \wstd{24.81}{3.11} & \wstd{54.33}{3.26} & \bwstd{22.17}{1.98} \\
GPT-2 & 6 & \wstd{32.66}{7.60} & \wstd{57.08}{7.19} & \bwstd{21.59}{4.14} \\
GPT-2 & 12 & \wstd{25.74}{4.18} & \wstd{54.36}{3.94} & \bwstd{20.25}{2.37} \\
\midrule
DistilBERT & 1 & \wstd{28.28}{4.30} & \wstd{44.43}{2.22} & \bwstd{10.83}{2.13} \\
DistilBERT & 2 & \wstd{32.58}{7.75} & \wstd{53.87}{7.20} & \bwstd{16.82}{4.38} \\
DistilBERT & 6 & \wstd{31.49}{4.06} & \wstd{49.35}{3.13} & \bwstd{17.26}{3.64} \\
DistilBERT & 12 & \wstd{50.82}{9.21} & \wstd{71.64}{9.19} & \bwstd{27.50}{4.18} \\
\midrule
BERT & 1 & \wstd{26.33}{1.90} & \wstd{43.30}{0.88} & \bwstd{7.34}{0.70} \\
BERT & 2 & \wstd{28.43}{3.75} & \wstd{48.28}{3.23} & \bwstd{9.92}{1.19} \\
BERT & 6 & \wstd{50.82}{6.34} & \wstd{68.23}{4.59} & \bwstd{23.99}{3.38} \\
BERT & 12 & \wstd{44.58}{13.15} & \wstd{51.38}{14.71} & \bwstd{18.77}{6.78} \\
\bottomrule
    \end{tabular}
    \caption{MSE ($\times 100$) when fitting SLALOM to the outputs of transformer models trained on the linear dataset. \SALO manages to describe the outputs of the transformer significantly better than other surrogate models \emph{even if the underlying relation in the data was linear}. }
    \label{tab:surrogatelinearerrors}
\end{table}

\subsection{Fitting SLALOM on Transformers trained on data following the SLALOM distribution}
\label{sec:recovery_ext}

\begin{figure}
    \centering
    \centering

\newcommand{\pos}[1]{
\begin{tikzpicture}
   \node[draw=ForestGreen,thick,rounded corners=2pt,inner sep=2pt]{#1};
\end{tikzpicture}}
\newcommand{\negw}[1]{
\begin{tikzpicture}
   \node[draw=OrangeRed,thick,rounded corners=2pt,inner sep=2pt, minimum height=1.3em]{#1};
\end{tikzpicture}}
\scalebox{0.9}{\begin{tabular}{r c p{5cm} p{5.5cm}}\\
    Input Sequence & BERT score & GAM weight assignments & SLALOM weight assignments\\
    \midrule
    \pos{perfect} & \textbf{\textcolor{ForestGreen}{2.5}} & Assign $w_1(\text{``perfect''})=2.5$ & Assign $v(\text{``perfect''})=2.5$\\
    \midrule
    \pos{perfect}\pos{perfect} & \textbf{\textcolor{ForestGreen}{2.5}} & Assign $w_2(\text{``perfect''})=0$& \textcolor{ForestGreen}{Expected SLALOM output: 2.5}\\
    \midrule
    \negw{worst} & \textbf{\textcolor{OrangeRed}{-2.6}}
    & Assign $w_1(\text{``worst''})=-2.6$& Assign $v(\text{``worst''})=-2.6$\\
    \midrule
    \negw{worst}\negw{worst} & \textbf{\textcolor{OrangeRed}{-2.5}} & Assign $w_2(\text{``worst''})=0.1$ & \textcolor{ForestGreen}{Expected SLALOM output: -2.6}\\
    \midrule
    \negw{worst}\pos{perfect} & \textbf{0.0} & \parbox{9cm}{
    \textcolor{OrangeRed}{\Lightning ~ Expected GAM output \\ $w_1(\text{``perfect''}) + w_2(\text{``worst''})=2.6$}} & \parbox{9cm}{Assign $s(\text{``perfect''})=s(\text{``worst''})=0$ \\\textcolor{ForestGreen}{Expected SLALOM output: -0.05}}\\
    \midrule
\end{tabular}}


    \caption{A simple empirical counterexample for why GANs cannot describe transformer output. We report rounded scores by a real 4-layer BERT model (similar behavior was observed for other layers/architectures) and iteratively fit the GAM $F(\vt) =\sum_{t_i \in \vt} w_i(t_i)$ to match observed outputs on the two tokens ``perfect'' and ``worst''. We quickly arrive at a contradiction for the GAM. On the contrary, we can assign SLALOM scores that model this behavior with minor error. Because transformers behave like a weighted sum of importances, GAMs are insufficient to model their behavior. In conjunction with \Cref{fig:slalomposthoc}(a,b) this underlines that GAMs and linear models are insufficient as surrogates.}
    \label{fig:counterexample}
\end{figure}
We report Mean-Squared Errors in the logit-space and the parameter-space between original SLALOM scores and recovered scores. The logit output are evaluated on a test set of $200$ samples that are sampled from the original \SALO. We provide these quantitative results in \Cref{tab:revocvery_logitmse} in \Cref{tab:revocvery_parametermse} for the parameter space. In logits the differences are negligibly small, and seem to decrease further with more layers. This finding highlight that a) transformers with more layers still easily fit {\SALO}s and such model can be recovered in parameters space. The results on the MSE in parameter space show no clear trend, but are relatively small as well (with the largest value being MSE=$0.015$ (note that results in the table are multiplied by a factor of $100$ for readability). Together with our quantitative results in \Cref{fig:recoveringslalom}(c,d), this highlights that \SALO has effective recovery properties.
\begin{table}[]
    \centering
    \begin{tabular}{lccc}
    \toprule
    $L$ (num.layers) & DistilBERT & BERT & GPT-2 \\
    \midrule
1 & \wstd{0.002}{0.001} & \wstd{0.002}{0.000} & \wstd{0.011}{0.009} \\ 
2 & \wstd{0.003}{0.002} & \wstd{0.003}{0.002} & \wstd{0.017}{0.011} \\ 
6 & \wstd{0.001}{0.001} & \wstd{0.011}{0.007} & \wstd{$<$0.001}{0.000} \\ 
12 & \wstd{$<$0.001}{0.000} & \wstd{$<$0.001}{0.000} & \wstd{$<$0.001}{0.000} \\   

\bottomrule
    \end{tabular}
    \caption{MSE ($\times 100$), logit space, averaged over 5 runs\label{tab:revocvery_logitmse}}
\end{table}

\begin{table}[]
    \centering
    \begin{tabular}{lccc}
    \toprule
    $L$ (num.layers) & DistilBERT & BERT & GPT-2 \\
    \midrule
1 & \wstd{0.092}{0.045} & \wstd{0.540}{0.301} & \wstd{0.940}{0.392} \\ 
2 & \wstd{0.094}{0.049} & \wstd{0.368}{0.085} & \wstd{1.652}{0.903} \\ 
6 & \wstd{0.124}{0.030} & \wstd{0.830}{0.182} & \wstd{0.569}{0.177} \\ 
12 & \wstd{0.287}{0.088} & \wstd{0.394}{0.255} & \wstd{0.385}{0.126} \\   

\bottomrule
    \end{tabular}
    \caption{MSE ($\times 100$), parameter space, averaged over 5 runs\label{tab:revocvery_parametermse}}
\end{table}

\begin{figure}[t]
\centering
\begin{subfigure}[b]{0.24\linewidth}
\centering
 \includegraphics[width=\linewidth]{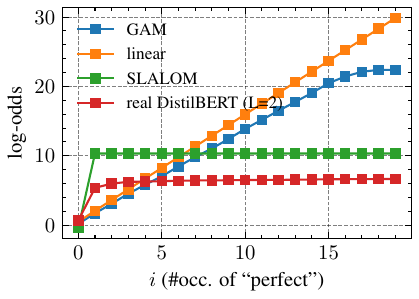}
 \label{fig:c1a}\vspace{-1.1em}
 \caption{DistilBERT (L=2)}
\end{subfigure}
\begin{subfigure}[b]{0.24\linewidth}
 \centering
 \includegraphics[width=\linewidth]{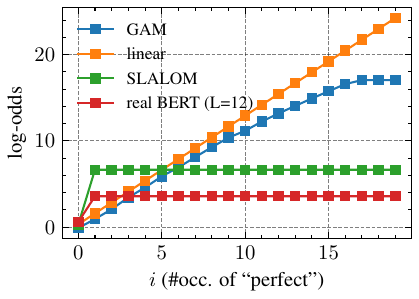}
  \caption{BERT (L=12)}
 \label{fig:c1b}
 \end{subfigure}
\begin{subfigure}[b]{0.24\linewidth}
 \centering
 \includegraphics[width=\linewidth]{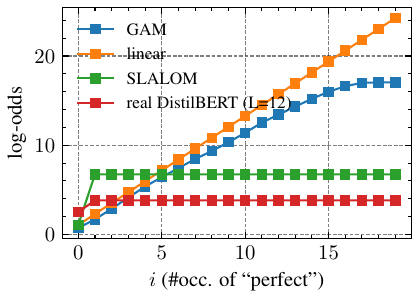}
 \label{fig:c1c}\vspace{-1.1em}
 \caption{DistilBERT (L=12)}
 \end{subfigure}
\begin{subfigure}[b]{0.24\linewidth}
 \includegraphics[width=\linewidth]{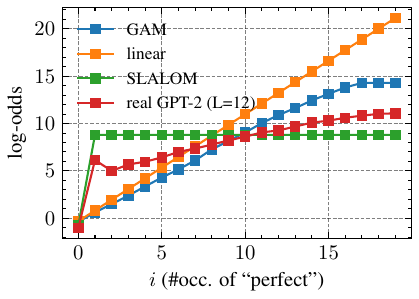}
 \caption{GPT-2 (L=12)}
 \label{fig:c2d}
 \end{subfigure}
 
 \caption{\textbf{SLALOM describes outputs of transformer models well.} Fitting \SALO to the outputs of the models shown in \Cref{fig:nolinearmodels} using the synthetic dataset. We show results for a sequence containing repetitions of the token ``perfect''. Note however that the models were trained on a larger dataset of random sequences samples as described in \Cref{sec:datasetconstruct}, but these sequences were chosen for visualization purposes. Results on additional models. Despite having $C/2{=}15\times$ more parameters than the SLALOM model, the GAM model does not describe the output as accurately. We provide quantitative results in \Cref{tab:surrogatelinearerrors}.
 \label{fig:slalomposthoc_app}}
\end{figure}

\subsection{Additional Results on Real-World Data}
We obtain \SALO explanations for real-world data using the procedure outlined in \Cref{alg:slalomfit} (\SALO-\texttt{eff}) with sequences of length $n=2$ and with \Cref{alg:slalomfitfidel} (\SALO-\texttt{fidel}) removing up to $5$ tokens that we compare with Naive-Bayes scores and Human Attention.

\subsubsection{Additional Qualitative Results}
\Cref{fig:imdbfullscatter} shows the full results from the sample used in \Cref{fig:qualitative}, where we only visualized a choice of words for readability purposes. After running \SALO-\texttt{eff} on our trained IMDB models, we use to explain a movie review taken from the dataset, visualizing value scores $v$ against importance scores $s$.

\subsubsection{Correlation with Naive-Bayes Scores}
\label{sec:hat_nb}
We compare the scores obtained with \SALO with the scores obtained with other methods in \Cref{tab:linearscores_full}, obtaining scores that are reliable with \SALO-\texttt{eff} (value scores and linear scores) in particular. While SHAP achieves higher correlation on BERT, \SALO achieves higher correlation than LIME and SHAP on all models and higher correlations than LRP for GPT-2 while obtaining slightly inferior values for the BERT-based architectures.

\subsubsection{Human Attention}
In \Cref{fig:slalomvshat}, we show qualitative results for a sample from the Yelp-HAT dataset.  After fitting \SALO on top of the resulting model, we can extract the importance scores given to each token in the sample. We can see that the \SALO scores manage to identify many of the tokens which real human annotators also deemed important for this review to be classified as positive. We also show qualitative results for the other methods. However, we suggest caution when interpreting explanations visually without ground truth. We argue that (1) theoretical properties of explanations (2) comparing to a known ground truth as well as (3) consideration of metrics from different domains, e.g., faithfulness, human perspective, are required to allow for a comprehensive evaluation. This is the approach taken in our work.

We show a quantitative comparison of the scores obtained with \SALO with the scores obtained with other methods on the comparison with Human-Attention in \Cref{tab:human_attn_full}.
\begin{figure}
\centering
\begin{subfigure}{0.45\textwidth}
\centering
\includegraphics[width=\textwidth, trim={2.5cm 2cm 2.1cm 0.7cm},clip]{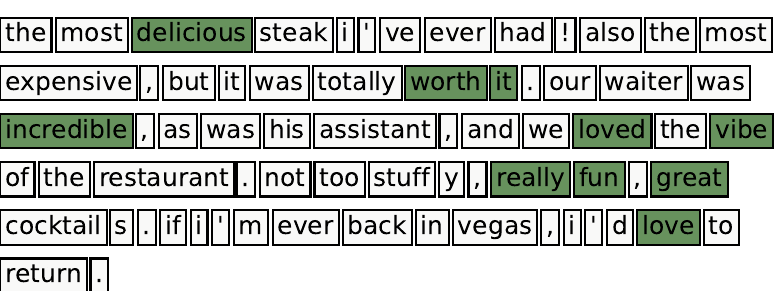}
\caption{Human annotation}
\label{fig:util1}
\end{subfigure}
\begin{subfigure}{0.45\textwidth}
\centering
\includegraphics[width=\textwidth, trim={2.5cm 2cm 2.1cm 0.7cm},clip]{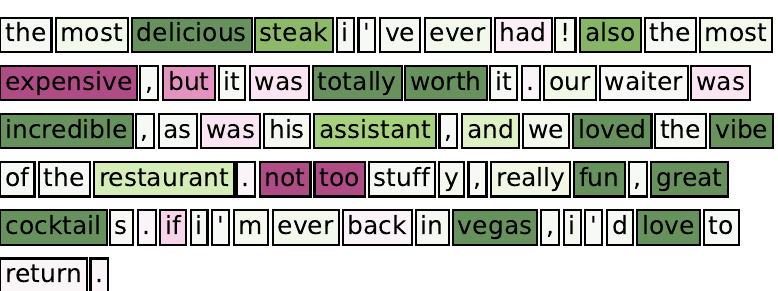}
\caption{SLALOM-\texttt{eff}}
\label{fig:util1}
\end{subfigure}
\begin{subfigure}{0.45\textwidth}
\centering
\includegraphics[width=\textwidth, trim={2.5cm 2cm 2.1cm 0.7cm},clip]{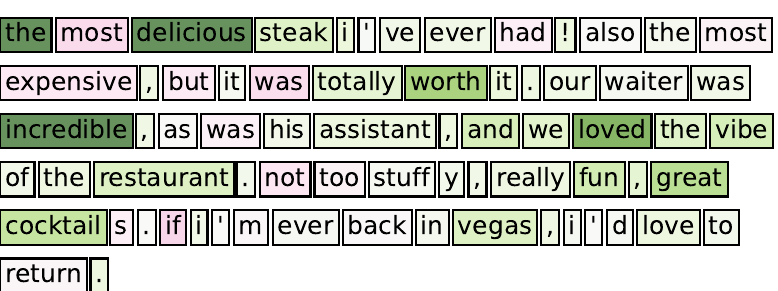}
\caption{SHAP}
\label{fig:util1a}
\end{subfigure}
\begin{subfigure}{0.45\textwidth}
\centering
\includegraphics[width=\textwidth, trim={2.5cm 2cm 2.1cm 0.7cm},clip]{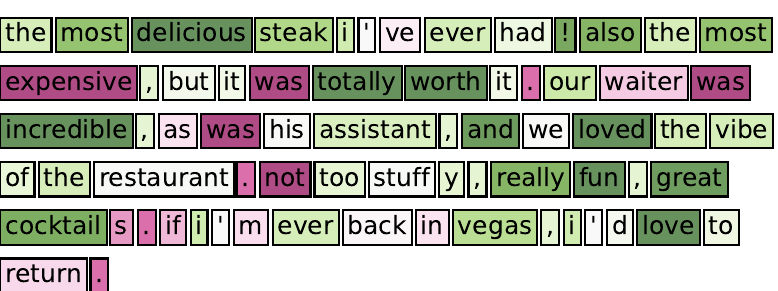}
\caption{LIME}
\label{fig:util2}
\end{subfigure}
\begin{subfigure}{0.45\textwidth}
\centering
\includegraphics[width=\textwidth, trim={2.5cm 2cm 2.1cm 0.7cm},clip]{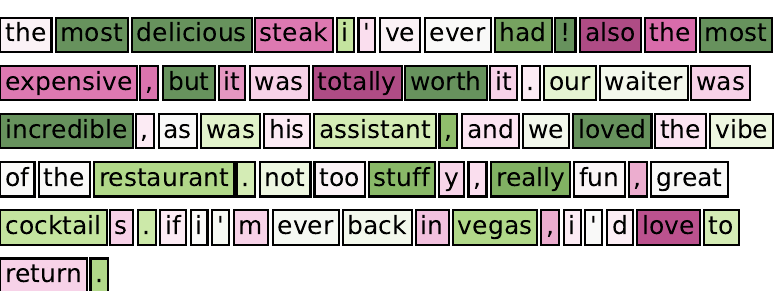}
\caption{AttnLRP}
\label{fig:util2}
\end{subfigure}
\caption{\textbf{Qualitative comparisons of attribution maps.} We provide attribution maps for the different techniques in this figure. Many words deemed important by human annotators are likewise highlighted by SLALOM and other techniques.
\label{fig:slalomvshat}}
\end{figure}

\begin{figure}
    \centering
    \begin{subfigure}[b]{0.49\linewidth}    \centering\includegraphics[width=1.1\linewidth]{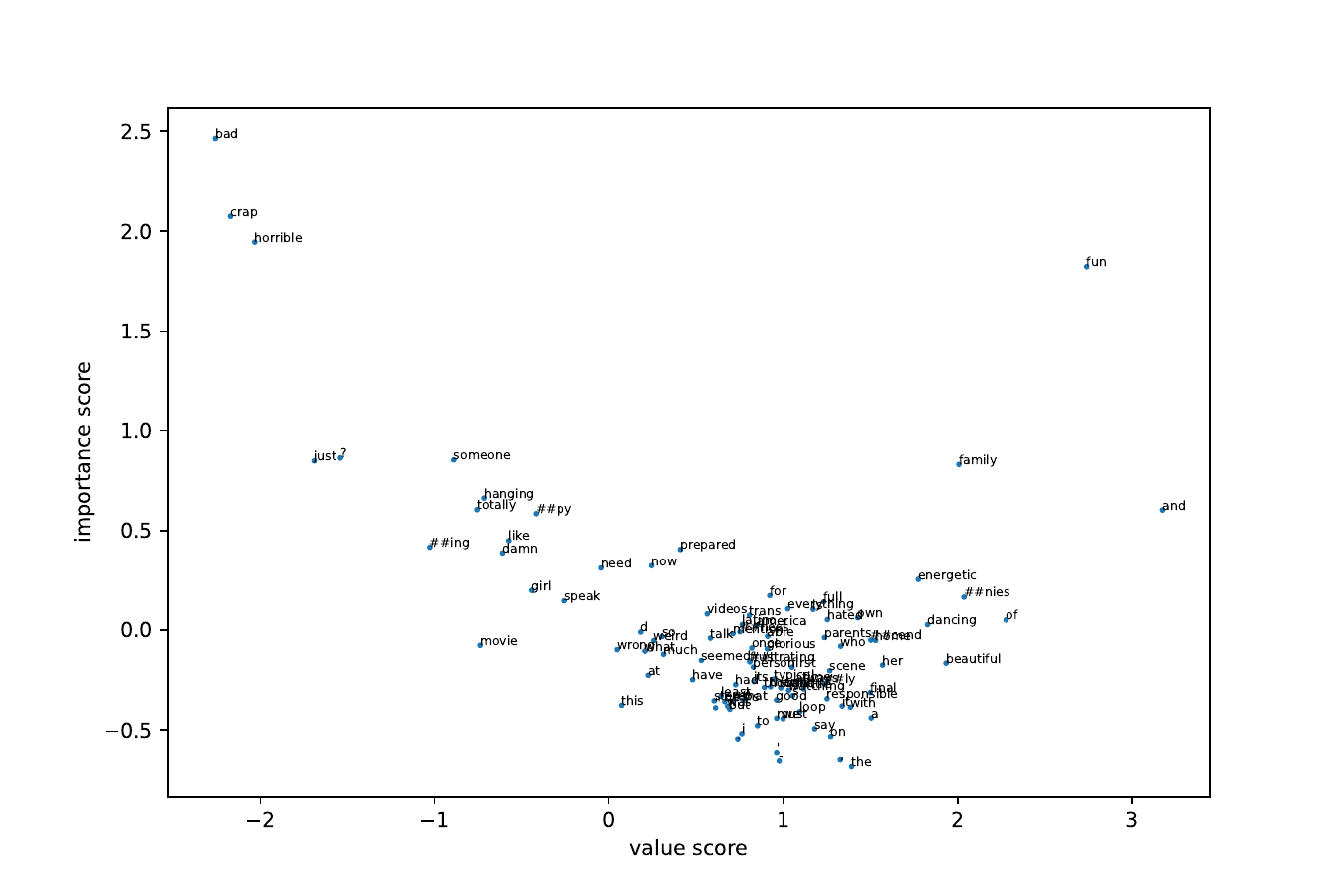}
    \caption{BERT}
    \end{subfigure}
     \centering
    \begin{subfigure}[b]{0.49\linewidth}    \centering\includegraphics[width=1.1\linewidth]{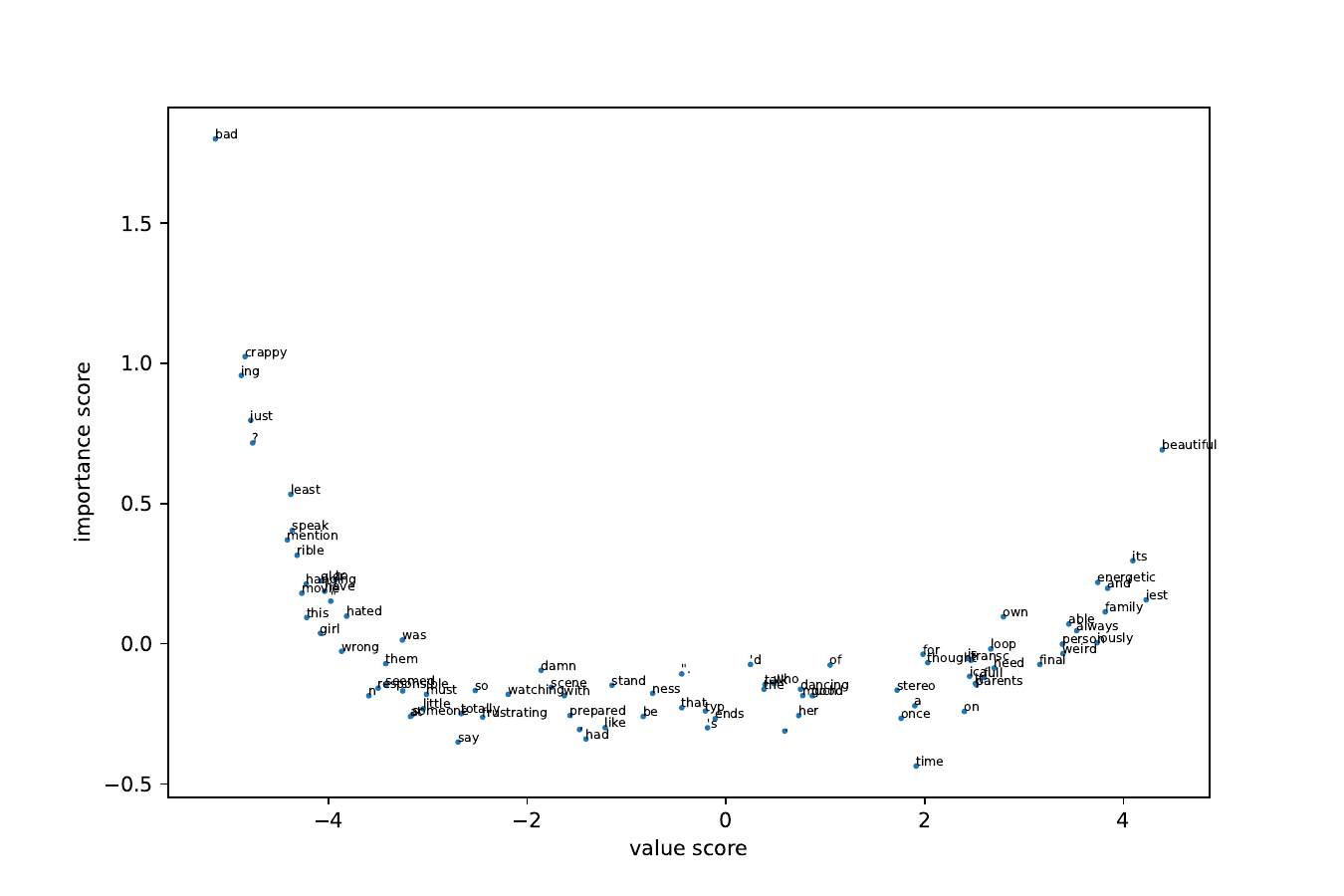}
    \caption{{GPT-2}}
    \end{subfigure}
    \caption{Full scatter plots of \SALO scores for the sample shown in the main paper (please zoom in for details). We observe that words like ``bad'' or ``fun'' get assigned high importance scores and value scores of high magnitude (albeit with different signs) by SLALOM. \label{fig:imdbfullscatter}}
\end{figure}

\begin{table}[]
    \centering
    \scalebox{0.75}{\begin{tabular}{rcccccc}
    \toprule
     & \multicolumn{3}{c}{\SALO-\texttt{fidel}} & \multicolumn{3}{c}{\SALO-\texttt{eff}}\\
    \cmidrule(lr){2-4} \cmidrule(lr){5-7} 
    LM & values $v$ & importances $s$ & lin. SLALOM & values $v$ & importances $s$ & lin. SLALOM  \\
     \midrule

DistilBERT & \wstd{0.602}{0.10} & \wstd{0.020}{0.08} & \wstd{0.602}{0.10} & \wstd{0.692}{0.05} & \wstd{0.373}{0.09} & \wstd{0.693}{0.05}\\
BERT & \wstd{0.475}{0.12} & \wstd{0.031}{0.09} & \wstd{0.474}{0.12} & \wstd{0.619}{0.08} & \wstd{0.349}{0.09} & \wstd{0.626}{0.08}\\
GPT-2 & \wstd{0.467}{0.17} & \wstd{0.017}{0.08} & \wstd{0.468}{0.17} & \wstd{0.618}{0.08} & \wstd{0.292}{0.10} & \wstd{0.619}{0.08}\\
    \bottomrule
    \end{tabular}}

    \scalebox{0.75}{\begin{tabular}{rccccc}
    \toprule
    LM & LIME & SHAP & IG & Grad & LRP\\
    \midrule
    Distilbert & \wstd{0.691}{0.05} & \wstd{0.619}{0.06} & \wstd{-0.285}{0.12} & \wstd{-0.215}{0.12} & \wstd{0.706}{0.05}\\
BERT & \wstd{0.616}{0.08} & \wstd{0.554}{0.09} & \wstd{-0.125}{0.14} & \wstd{-0.123}{0.14} & \wstd{0.639}{0.08}\\
GPT2 & \wstd{0.213}{0.13} & \wstd{0.560}{0.09} & \wstd{0.033}{0.13} & \wstd{0.031}{0.13} & \wstd{0.615}{0.08}\\
    \bottomrule
    \end{tabular}}
    \caption{Correlation with linear Naive Bayes Scores. The scores obtained with SLALOM-\texttt{eff} (value, lin.) are better than those from LIME, SHAP and comparable to LRP scores.
    \label{tab:linearscores_full}}
\end{table}

\begin{table}[]
    \centering
    
    \scalebox{0.75}{\begin{tabular}{rcccccc}
    \toprule
    LM & values $v$ & importances $s$ & lin. & LIME & SHAP & LRP\\
    \midrule
Bert & \wstd{0.786}{0.01} & \wstd{0.807}{0.01} & \wstd{0.801}{0.01} & \wstd{0.805}{0.01} & \wstd{0.800}{0.01} & \wstd{0.813}{0.01}\\
Distil-BERT & \wstd{0.688}{0.01} & \wstd{0.681}{0.01} & \wstd{0.686}{0.01} & \wstd{0.702}{0.01} & \wstd{0.668}{0.01} & \wstd{0.703}{0.01}\\
GPT-2 & \wstd{0.674}{0.01} & \wstd{0.685}{0.01} & \wstd{0.683}{0.01} & \wstd{0.632}{0.01} & \wstd{0.671}{0.01} & \wstd{0.699}{0.01}\\
    \bottomrule
    \end{tabular}}
    \caption{Comparison of techniques to predict Human Attention. \SALO-eff (importances) perform better than SHAP, comparably to LIME and slightly below LRP.}
    \label{tab:human_attn_full}
\end{table}
\subsection{Insertion and Removal Benchmarks}

It is important to verify that SLALOM scores are competitive to other methods in classical explanation benchmarks as well. We therefore ran the classical removal and insertion benchmarks with SLALOM compared to baselines such as LIME, SHAP, Grad \citep{simonyan2013deep}, and Integrated Gradients (IG, \citealp{sundararajan2017axiomatic}). For the insertion benchmarks, the tokens with the highest attributions are inserted to quickly obtain a high prediction score to the target class. For the deletion benchmark, the tokens with the highest attributions are deleted from the sample to obtain a low score for the target class. We subsequently delete/insert more tokens and compute the  “Area Over the Perturbation Curve” (AOPC) as in \citet{deyoung2020eraser}, which should be high for deletion and low for insertion. In addition to the insertion results in \Cref{tab:aopcdeletion}, the removal results are shown in \Cref{tab:aopcinsertion}. We show results for the Yelp dataset in \Cref{tab:aopcinsertionyelp} and \Cref{tab:aopcdeletionyelp}.  We claim that linear SLALOM scores perform on par with LIME and SHAP in this benchmark, but do not always outperform them in this metric. For surrogate techniques (LIME, SHAP, SLALOM) we use 5000 samples each.

\begin{table}[t]
\begin{subtable}[h]{\linewidth}
    \centering
    \setlength{\tabcolsep}{4pt}
    \centering
    \scalebox{0.75}{\begin{tabular}{rcccccccccc}
    \toprule
    & \multicolumn{2}{c}{\SALO-\texttt{fidel}} & \multicolumn{2}{c}{\SALO-\texttt{eff}}\\
    \cmidrule(lr){2-3} \cmidrule(lr){4-5}
    LM &  $v$-scores & lin. & $v$-scores & lin. & LIME & SHAP & IG & Grad & LRP \\
     \midrule

BERT & \wstd{0.893}{0.012} & \wstd{0.901}{0.012} & \wstd{0.881}{0.010} & \wstd{0.885}{0.010} & \wstd{0.875}{0.012} & \wstd{0.881}{0.011} & \wstd{0.084}{0.010} & \wstd{0.069}{0.008} & \wstd{0.852}{0.019}\\
DistilBERT & \wstd{0.841}{0.014} & \wstd{0.854}{0.013} & \wstd{0.888}{0.008} & \wstd{0.886}{0.008} & \wstd{0.838}{0.012} & \wstd{0.864}{0.009} & \wstd{0.143}{0.012} & \wstd{0.131}{0.012} & \wstd{0.865}{0.011}\\
GPT-2 & \wstd{0.837}{0.013} & \wstd{0.844}{0.013} & \wstd{0.782}{0.013} & \wstd{0.784}{0.012} & \wstd{0.479}{0.024} & \wstd{0.859}{0.012} & \wstd{0.289}{0.021} & \wstd{0.269}{0.020} & \wstd{0.833}{0.013}\\
\midrule
average & \wstd{0.857}{0.013} & \wstd{0.866}{0.013} & \wstd{0.851}{0.010} & \wstd{0.852}{0.010} & \wstd{0.731}{0.016} & \wstd{0.868}{0.011} & \wstd{0.172}{0.014} & \wstd{0.156}{0.013} & \wstd{0.850}{0.014}\\
    \bottomrule
    \end{tabular}}
    \caption{IMDB: Area-Over Perturbation Curve (deletion, higher is better)}
    \label{tab:aopcinsertion}
\end{subtable}
\begin{subtable}[h]{\linewidth}
    \centering
    \setlength{\tabcolsep}{4pt}
    \centering
    \scalebox{0.75}{\begin{tabular}{rcccccccccc}
    \toprule
    & \multicolumn{2}{c}{\SALO-\texttt{fidel}} & \multicolumn{2}{c}{\SALO-\texttt{eff}}\\
    \cmidrule(lr){2-3} \cmidrule(lr){4-5}
    LM &  $v$-scores & lin. & $v$-scores & lin. & LIME & SHAP & IG & Grad & LRP \\
     \midrule

BERT & \wstd{0.015}{0.005} & \wstd{0.011}{0.005} & \wstd{0.011}{0.005} & \wstd{0.011}{0.005} & \wstd{0.017}{0.009} & \wstd{0.012}{0.005} & \wstd{0.224}{0.024} & \wstd{0.214}{0.022} & \wstd{0.010}{0.005}\\
DistilBERT & \wstd{0.018}{0.006} & \wstd{0.019}{0.006} & \wstd{0.032}{0.009} & \wstd{0.032}{0.009} & \wstd{0.014}{0.005} & \wstd{0.020}{0.005} & \wstd{0.250}{0.027} & \wstd{0.249}{0.026} & \wstd{0.017}{0.009}\\
GPT-2 & \wstd{0.033}{0.007} & \wstd{0.032}{0.007} & \wstd{0.045}{0.007} & \wstd{0.045}{0.007} & \wstd{0.129}{0.019} & \wstd{0.021}{0.004} & \wstd{0.251}{0.024} & \wstd{0.244}{0.024} & \wstd{0.039}{0.007}\\
\midrule
average & \wstd{0.022}{0.006} & \wstd{0.021}{0.006} & \wstd{0.029}{0.007} & \wstd{0.029}{0.007} & \wstd{0.053}{0.011} & \wstd{0.018}{0.005} & \wstd{0.242}{0.025} & \wstd{0.236}{0.024} & \wstd{0.022}{0.007} \\
    \bottomrule
    \end{tabular}}
    \caption{Yelp: Area-Over Perturbation Curve (insertion, lower is better)}
    \label{tab:aopcinsertionyelp}
\end{subtable}
\begin{subtable}[h]{\linewidth}
    \centering
    \setlength{\tabcolsep}{4pt}
    \centering
    \scalebox{0.75}{\begin{tabular}{rcccccccccc}
    \toprule
    & \multicolumn{2}{c}{\SALO-\texttt{fidel}} & \multicolumn{2}{c}{\SALO-\texttt{eff}}\\
    \cmidrule(lr){2-3} \cmidrule(lr){4-5}
    LM &  $v$-scores & lin. & $v$-scores & lin. & LIME & SHAP & IG & Grad & LRP \\
     \midrule

GPT-2 & \wstd{0.747}{0.024} & \wstd{0.753}{0.024} & \wstd{0.726}{0.021} & \wstd{0.727}{0.021} & \wstd{0.444}{0.028} & \wstd{0.849}{0.015} & \wstd{0.292}{0.026} & \wstd{0.290}{0.026} & \wstd{0.740}{0.025}\\
BERT & \wstd{0.657}{0.038} & \wstd{0.667}{0.038} & \wstd{0.865}{0.012} & \wstd{0.863}{0.013} & \wstd{0.797}{0.022} & \wstd{0.859}{0.013} & \wstd{0.249}{0.028} & \wstd{0.281}{0.029} & \wstd{0.855}{0.017}\\
DistilBERT & \wstd{0.645}{0.033} & \wstd{0.642}{0.033} & \wstd{0.813}{0.017} & \wstd{0.813}{0.018} & \wstd{0.746}{0.025} & \wstd{0.854}{0.013} & \wstd{0.201}{0.026} & \wstd{0.243}{0.028} & \wstd{0.768}{0.024}\\
\midrule
average & \wstd{0.683}{0.032} & \wstd{0.687}{0.032} & \wstd{0.801}{0.017} & \wstd{0.801}{0.017} & \wstd{0.663}{0.025} & \wstd{0.854}{0.014} & \wstd{0.247}{0.027} & \wstd{0.271}{0.028} & \wstd{0.788}{0.022}\\
    \bottomrule
    \end{tabular}}
    \caption{Yelp: Area-Over Perturbation Curve (deletion, higher is better)}
    \label{tab:aopcdeletionyelp}
\end{subtable}
\caption{Additional results for removal/insertion tests: We show results on the IMDB dataset for removal as well as insertion and removal on the Yelp datset.\label{tab:aopcappx}}
\end{table}

\subsection{Error Analysis for non-transformer models}
We also investigate the behavior of SLALOM for models that do not precisely follow the architecture described in the Analysis section of this paper. In the present work, we consider an attribution method that is specifically catered towards the transformer architecture, which is the most prevalent in sequence classification. 
We advise caution when using our model when the type of underlying LM is unknown. In this case, model-agnostic interpretability methods may be preferred. 

However, we investigate this issue further: We applied our \SALO-\texttt{eff} approach to a simple, non-transformer sequence classification model on the IMDB dataset, which is a three-layer feed-forward network based on a TF-IDF representation of the inputs. We compute the insertion and deletion Area-over-perturbation-curve metrics that are given in \Cref{tab:erroranalysis}.
\begin{table}[]
    \centering
    \scalebox{1}{\begin{tabular}{rccc}
    \toprule
     & SHAP & LIME & lin. SLALOM-\texttt{eff} \\
     \midrule
Insertion (lower) & \wstd{0.0120}{0.005} & \wstd{0.0053}{0.003} & \wstd{0.0039}{0.001} \\
Deletion (higher) & \wstd{0.8022}{0.026} & \wstd{0.9481}{0.005} & \wstd{0.9601}{0.004} \\
\bottomrule
\end{tabular}}
\caption{AOPC explantion fidelity metrics for the Fully Connected TF-IDF model. The scores highlight that SLALOM can also provide faithful explanations for non-transformer models due to its general expressivity.\label{tab:erroranalysis}}
\end{table}

These results show that due to its general expressivity, the SLALOM model also succeeds to provide explanations for non-transformer models that outperform LIME and SHAP in the removal and insertion tests. We also invite the reader to confer \Cref{tab:compareyelphat}, where we show that SLALOM can predict human attention for large models, including the non-transformer Mamba model \citep{gu2023mamba}.

\subsection{Runtime analysis}
\label{sec:runtime}
We ran SLALOM as well as other feature attribution methods using surrogate models and compared their runtime to explain a single classification of a 6-layer BERT model. We note that the runtime is mainly determined by the number of forward passes to obtain the samples to fit the surrogates. While this number is independent of the dataset size, longer sequences require more samples for the same approximation quality. The results are shown in \Cref{tab:runtimes}.
\begin{table}[]
 \begin{subtable}[b]{0.98\linewidth}  
 \caption{DistilBERT}
    \centering
    \scalebox{1.00}{\begin{tabular}{rcccc}
    \toprule
    Approach / \# samples & 1000 & 2000 & 5000 & 10000 \\
     \midrule
SHAP & \wstd{2.35}{0.01} & \wstd{4.62}{0.02} & \wstd{11.56}{0.03} & \wstd{23.08}{0.08}\\
LIME & \wstd{0.80}{0.04} & \wstd{1.58}{0.07} & \wstd{3.93}{0.19} & \wstd{8.04}{0.39}\\
SLALOM-\texttt{fidel} & \wstd{0.74}{0.03} & \wstd{1.42}{0.06} & \wstd{3.77}{0.24} & \wstd{7.95}{0.41}\\
SLALOM-\texttt{eff} & \wstd{0.42}{0.01} & \wstd{0.80}{0.01} & \wstd{2.03}{0.01} & \wstd{4.13}{0.02}\\
LRP & \wstd{0.02}{0.00} & \wstd{0.02}{0.00} & \wstd{0.02}{0.00} & \wstd{0.02}{0.00}\\
IG & \wstd{0.02}{0.00} & \wstd{0.02}{0.00} & \wstd{0.02}{0.00} & \wstd{0.02}{0.00} \\
Grad & \wstd{0.01}{0.00} & \wstd{0.01}{0.00} & \wstd{0.01}{0.00} & \wstd{0.01}{0.00}\\
\bottomrule
\end{tabular}}

    \label{tab:runtimetransformer}
\end{subtable}

\begin{subtable}[b]{0.98\linewidth}  
 \caption{BLOOM-7B}
    \centering
    \scalebox{1.00}{\begin{tabular}{rccccc}
    \toprule
    Runtime (s) & SHAP & LIME & IG & Grad &SLALOM-\texttt{eff}\\
     \midrule
1000 Samples & \wstd{25.04 }{2.37}  & \wstd{15.73}{2.37}  & \wstd{0.29}{0.04}  & \wstd{0.06}{0.01}  & \wstd{0.62}{0.02}\\ 
    \bottomrule
    \end{tabular}}
    \label{tab:runtimebloom}
\end{subtable}
    \caption{Runtime results for computing different explanations. SLALOM is substantially more efficient that other surrogate model explanations (e.g., LIME, SHAP). Gradient-based explanations can be computed even quicker, but are very noisy and require backward passes. Runtimes are given in seconds (s).\label{tab:runtimes}}
\end{table}

While IG and Gradient explanations are the quickest, they also require backward passes which have large memory requirements. As expected, the computational complexity for surrogate model explanation (LIME, SHAP, SLALOM)  is dominated by the number of samples and forward passes done. \textbf{Our implementation of SLALOM is around 2x faster than LIME and almost 5x faster than SHAP} (all approaches used a GPU-based, batching-enabled implementation), which we attribute to the fact that SLALOM can be fitted using substantially shorter sequences than are used by LIME and SHAP.

We are interested to find out how many samples are required to obtain an explanation of comparable quality to SHAP. We successively increase the number of samples used to fit our surrogates and report the performance in the deletion benchmark (where the prediction should drop quickly when removing the most important tokens). We report the Area over the Perturbation Curve (AOPC) as before (this corresponds to their Comprehensiveness metric of ERASER \citep{deyoung2020eraser}, higher scores are better). We compare the performance to \texttt{shap.KernelExplainer.shap\_values(nsamples=auto)} method of the shap package in \Cref{tab:aopcsamplesablation}.
\begin{table}[]

    \centering
    \scalebox{1.00}{\begin{tabular}{rc}
    \toprule
    Number of samples & Deletion AOPC \\
    \midrule
SHAP (nsamples=``auto'') & \wstd{0.9135}{0.0105} \\
SLALOM, 500 samples & \wstd{0.9243}{0.0105} \\
SLALOM,  
1000 samples  & \wstd{0.9236}{0.005} \\
SLALOM 
2000 samples  & \wstd{0.9348}{0.005} \\
SLALOM,
5000 samples  & \wstd{0.9387}{0.005} \\
SLALOM,
10000 samples & \wstd{0.9387}{0.005} \\
    \bottomrule
    \end{tabular}}
     \caption{Ablation study on the number of samples required to obtain good explanations. The results highlight that a number as low as 500 samples can be sufficient to fit the surrogate model at a quality comparable to SHAP.\label{tab:aopcsamplesablation}}
\end{table}
Our results indicate that sampling sizes as low as 500 per explained instance (which is as low as predicted by our theory, with average sequence length of ~200) already yields competitive results.

\subsection{Applying SLALOM to Large Language Models}
\label{sec:largemodels}
Our work is mainly concerned with sequence classification. In this application, we observe mid-sized models like BERT to be prevalent. On the huggingface hub, among the 10 most downloaded models on huggingface, 9 are BERT-based and the remaining one is another transformer with around 33M parameters\footnote{\url{https://huggingface.co/models?pipeline_tag=text-classification&sort=downloads}} (as of September 2023).
In common benchmarks like DBPedia classification\footnote{\url{ https://paperswithcode.com/sota/text-classification-on-dbpedia}}, the top-three models are transformers with two of them also being variants of BERT. We chose our experimental setup to reflect this.
Nevertheless, we are interested to see if SLALOM can provide useful insights for larger models as well and therefore experiment with larger models. To this end, we use a model from the BLOOM family \citep{le2023bloom} with 7.1B parameters as well as the recent Mamba model (2.8B) \citep{gu2023mamba} on the Yelp-HAT dataset and compute SLALOM explanations. Note that the Mamba model does not follow the transformer framework considered in this work. We otherwise follow the setup described in Figure 3 and assess whether our explanations can predict human attention. The results on the bottom of  \Cref{tab:compareyelphat} highlight that this is indeed the case, even for larger models. The ROC scores are in a range comparable to the ones obtained for the smaller models. For the non-transformer Mamba  model we observe a drop in the value of the importance scores. This may suggest that value scores and linearized \SALO scores are more reliable for large, non-transformer models.

\revised{\textbf{Applying SLALOM to blackbox models.} Finally, we would like to emphasize that SLALOM, as a surrogate model can be applied to black-box models as well. To impressively showcase this, we apply SLALOM to OpenAI’s GPT-4 models via the API (Appendix F.7). We use the larger GPT-4-turbo and smaller GPT-4o-mini for comparison. We prompt the model with the following template to classify the review and only output either 0 or 1 as response.}

\fbox{\parbox{15cm}{\textbf{SYSTEM:} You are assessing movie reviews in an online forum. Your goal is to assess the reviews' overall sentiment as 'overall negative' (label '0') or 'overall positive' (label '1'). Your will see a review now and you will output a label. Make sure to only answer with either '0' or '1'.\\
\textbf{USER:} $<$the review$>$\\}}

\revised{We then use the token probabilities returned by the API to compute the score. We create 500 samples as a training dataset for the surrogate model and fit SLALOM the model using SLALOM-eff. We obtain the importance plots shown in \Cref{fig:scalabilitygpt4}. This highlight that SLALOM scales up to large models. However, we would like to stress that there can be no guarantees as we have no knowledge about the specific structure of the model.}

\begin{figure}
\centering
\begin{subfigure}{0.48\textwidth}
\centering
\includegraphics[width=\textwidth, trim={2.5cm 2cm 1.4cm 0.7cm},clip]{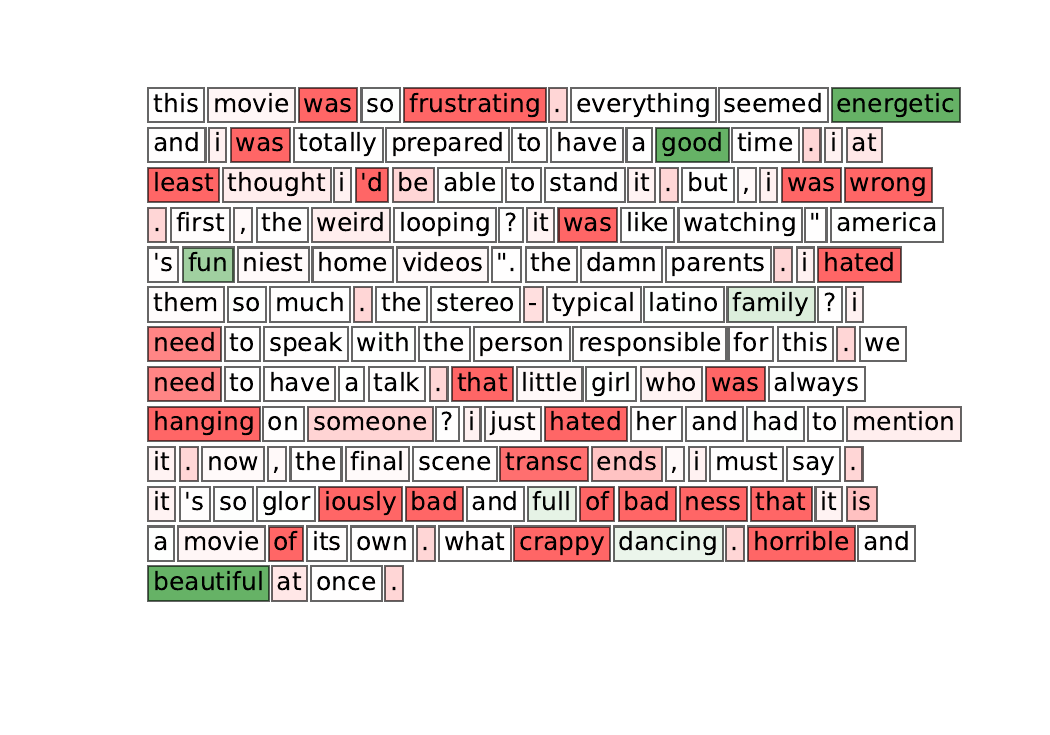}
\caption{GPT-4-turbo}
\label{fig:util1}
\end{subfigure}
\begin{subfigure}{0.48\textwidth}
\centering
\includegraphics[width=\textwidth, trim={2.5cm 2cm 1.4cm 0.7cm},clip]{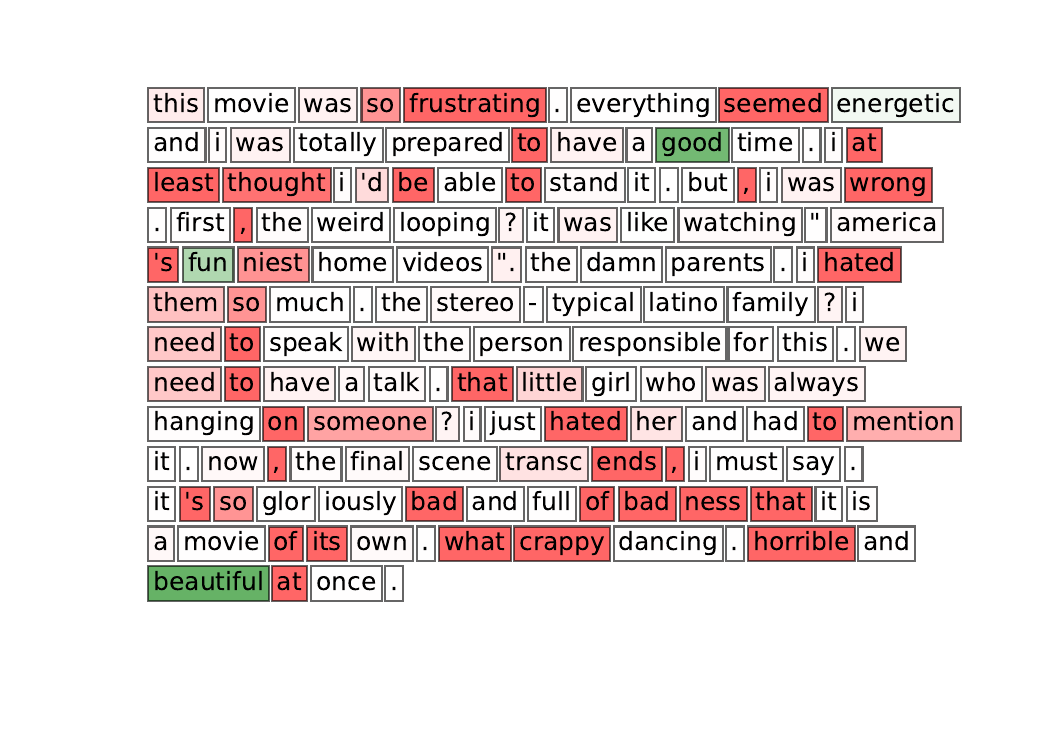}
\caption{GPT-4o-mini}
\label{fig:util1}
\end{subfigure}
    \caption{\revised{We apply SLALOM to OpenAI GPT-4-turbo and GPT-4o to showcase its scalability. We use SLALOM-\texttt{eff} scores and linearize them to obtain the highlighed attributions.}\label{fig:scalabilitygpt4}}
\end{figure}

\subsection{Case study: Identifying Vulnerabilites and Weaknesses in classification models with SLALOM}

\revised{In our public code repository, we provide a case study to highlight how \SALO can be used to uncover practical flaws in transformer classification models. We use a trained BERT model on IMDB sentiment classification as an example and compute SLALOM scores for the tokens in the first 15 samples of the test set ($\approx$ 1200 tokens). SLALOM’s scores make it easy to see which tokens have a potentially severe impact on the movie review outcome. We first select all tokens with an importance score > 2, all other words have no substantial impact, in particular when added to a larger sequence. We provide a visualization of these tokens in \Cref{fig:casestudy}. We then make some interesting discoveries mainly based on value scores:}
\begin{enumerate}
    \item \revised{There are more words that are negatively interpreted by the model than positive words}
    \item \revised{Out of the words that have highly negative value scores (+importance >2), we identify several words that are some that are not directly negatively connotated, e.g., “anyway”, “somehow”, “never”, “anymore”, “probably”, “doesn”, “maybe”, “without”, “however”, “surprised”}
\end{enumerate}

\revised{We then show that by a few (4) minor modification steps, e.g., adding some of these words to a review, we can change the classification decision for a review from positive to negative, without essentially altering its content (i.e., we manually construct an adversarial example.) This highlights how SLALOM can intuitively help to uncover 1) the influential tokens that contribute most to the decision and 2) allow for a fine-grained analysis that can help uncover spurious concepts and give practitioners an intuitive understanding of a model’s weaknesses.}

\begin{figure}
    \centering\includegraphics[width=\linewidth, trim={0.5cm 2cm 0.5cm 0.7cm}, clip]{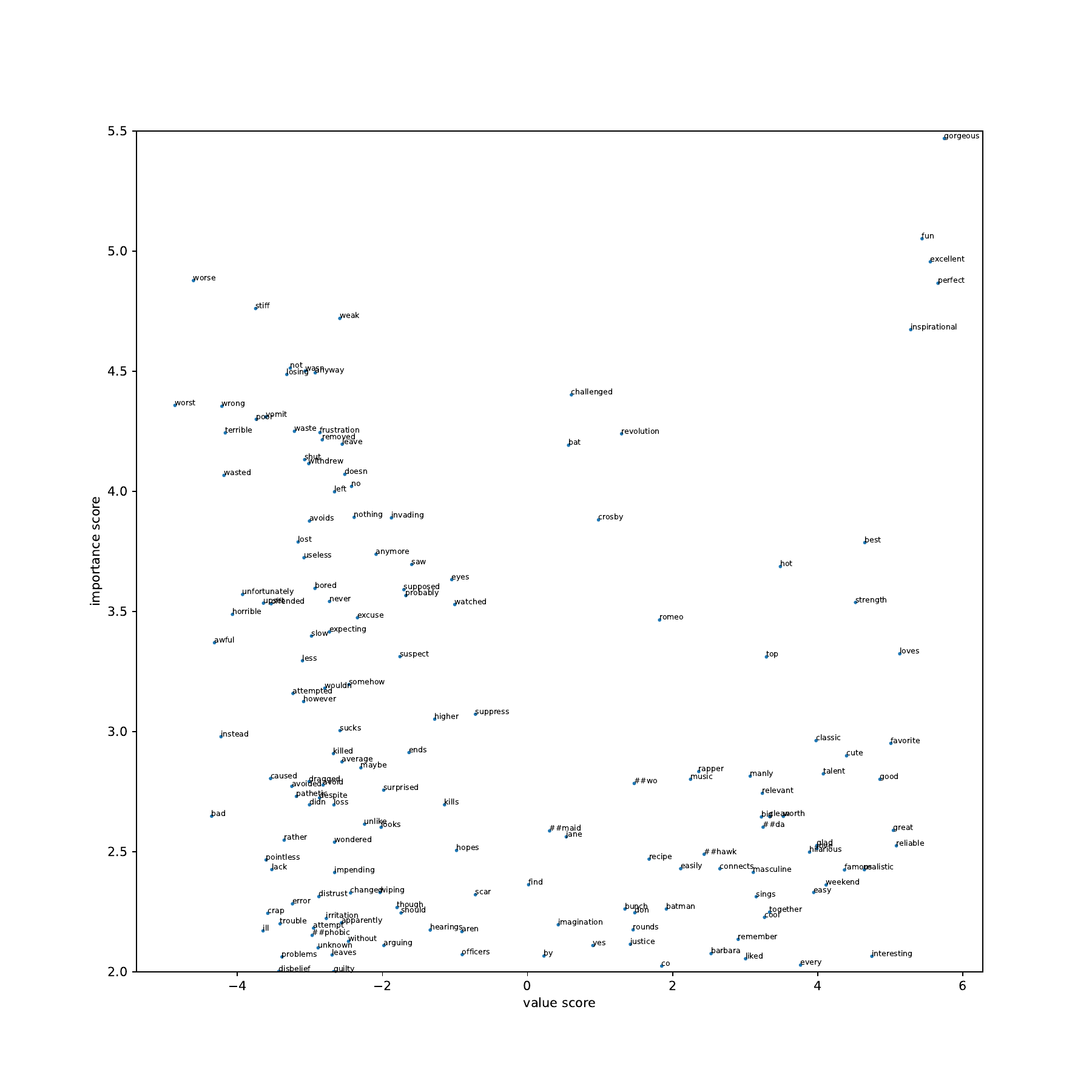}
    \caption{\revised{High-impact tokens for BERT identified in our case study. Use zoom for best view.}\label{fig:casestudy}}
\end{figure}

\end{document}